\documentclass[11pt]{article}

\usepackage{bm}
\usepackage{amsmath,amssymb,amsfonts,amsthm}
\usepackage{graphicx}
\graphicspath{{Fig/}}

\theoremstyle{plain}
\newtheorem{theorem}{Theorem}[section]

\theoremstyle{definition}

\theoremstyle{remark}

\usepackage[final]{hyperref} 
\usepackage[authoryear]{natbib}
\usepackage{algorithm}
\usepackage{algorithmic}
\usepackage[margin=1in]{geometry}

\newtheorem{condition}{Condition}
\newtheorem{lemma}[theorem]{Lemma}
\def\tn{\tilde{n}}
\def\bx{\bm{x}}

\def\eps{\epsilon}
\def\sig{\sigma}
\def\by{\bm{y}}
\def\bu{\bm{u}}
\def\bdelta{\bm\delta}

\def\bz{\bm{z}}

\def\btheta{\bm{\theta}}
\def\bbeta{\bm{\beta}}
\def\R{\mathbb{R}}
\def\P{\mathbb{P}}
\def\tx{\tilde{\bm{x}}}
\def\Sig{\Sigma}
\def\calD{\mathcal{D}}
\def\E{\mathbb{E}}

\def\uls{\textup{uls}}
\def\ols{\textup{ols}}
\def\rtr{\textup{rtr}}
\DeclareMathOperator*{\argmin}{arg\,min} 
 
\def\unl{\textup{unl}}
\def\bpsi{\bm\psi}
\def\bmu{\bm{\mu}}

\title{Efficient machine unlearning with minimax optimality}
\author{
Jingyi Xie\textsuperscript{1}
\and
Linjun Zhang\textsuperscript{2}
\and
Sai Li\textsuperscript{3}\thanks{Corresponding author. Email: \href{mailto:saili@tsinghua.edu.cn}{saili@tsinghua.edu.cn}}
}
\date{}

\begin{document}



\maketitle






\begin{center}
\textsuperscript{1}Institute of Statistics and Big Data, Renmin University of China, Beijing 100872, China\\
\textsuperscript{2}Department of Statistics, Rutgers University, Piscataway, NJ 08854, USA\\
\textsuperscript{3}Department of Statistics and Data Science, Tsinghua University, Beijing 100084, China
\end{center}

\vspace{1em}

\noindent\textbf{Abstract.}
There is a growing demand for efficient data removal to comply with regulations like the GDPR and to mitigate the influence of biased or corrupted data. This has motivated the field of machine unlearning, which aims to eliminate the influence of specific data subsets without the cost of full retraining.
In this work, we propose a statistical framework for machine unlearning with generic loss functions and establish theoretical guarantees. For squared loss, especially, we develop Unlearning Least Squares (ULS) and establish its minimax optimality for estimating the model parameter of remaining data when only the pre-trained estimator, forget samples, and a small subsample of the remaining data are available.
Our results reveal that the estimation error decomposes into an oracle term and an unlearning cost determined by the forget proportion and the forget model bias. We further establish asymptotically valid inference procedures without requiring full retraining. Numerical experiments and real-data applications demonstrate that the proposed method achieves performance close to retraining while requiring substantially less data access.

\vspace{0.5em}
\noindent\textbf{Keywords:} Data removal, fine-tuning, bias correction, statistical inference

\section{Introduction}
\label{sec1}
The rapid proliferation of machine learning and artificial intelligence has revolutionized data-driven decision-making across industries. However, this surge has precipitated critical ethical, legal, and practical concerns regarding data privacy, ownership, and control. In particular, regulatory frameworks such as {Article 17 of GDPR}\footnote{\url{https://gdpr-info.eu/art-17-gdpr/}} and the California Consumer Privacy Act (CCPA) have codified the ``right to be forgotten'', granting individuals the right to request the erasure of their data from processing systems.
These legal mandates have intensified the need for \textit{machine unlearning}: the process of efficiently removing the influence of specific data points from pre-trained models, to ensure compliance and uphold user privacy \citep{ginart2019making,guo2020certified}. 

Applications of unlearning extend beyond privacy. In federated learning, where a global model is aggregated from decentralized clients, a participant may withdraw consent, necessitating the removal of their specific contributions \citep{nguyen2024empirical,jin2023forgettable}.
Furthermore, unlearning serves as a critical mechanism for correcting outliers or contaminated data to enhance algorithmic fairness and robustness \citep{cao2015towards,izzo2021approximate}. In the context of Large Language Models (LLMs), unlearning is increasingly employed to excise toxic content or hazardous knowledge \citep{wu2020deltagrad,li2024wmdp,yao2024large}. 

However, removing a subset of data from pre-trained models presents a significant challenge. Large-scale models often encapsulate the complete information of the training set, making clean removal computationally nontrivial. Naively retraining models from scratch is often computationally infeasible, especially in large-scale or real-time settings. In contrast, machine unlearning is a \textit{post-processing} task, aiming to make efficient local updates to a pre-trained model.

In this paper, we study the statistical limits of machine unlearning: how accurately can one estimate the target parameter of the remaining data when only the pre-trained model, the forget data, and a small subsample of the retained data are available?

\subsection{Related work}
We first distinguish between two main  purposes of machine unlearning in the existing literature. The first purpose is to protect privacy or copyright for forget samples. In this case, the desired unlearning algorithm should produce a model which is indistinguishable from a model trained solely on $\calD_r$. A formal definition has been proposed in \citet{izzo2021approximate} and another version based on differential privacy is introduced in \citet{sekhari2021remember} and \citet{neel2021descent}. 
The second purpose is to remove outliers or biased documents, where a significant challenge is that there exists a distribution shift between the forget and remaining data. In this case, the main goal of unlearning is bias correction but there is no strict requirement on whether the output is independent of the forget data or not. This scenario is critical in LLM safety alignment. For instance, \citet{li2024wmdp} develop a benchmark dataset for LLMs aiming at removing hazardous knowledge such as knowledge for developing biological, cyber, and chemical weapons. 
 \citet{yao2024large} show that unlearning can be an efficient way to align LLMs with human preferences. In this work, we focus on the second scenario: unlearning for bias correction and outlier removal.

A major class of machine unlearning methods is gradient-based. A baseline unlearning method, particularly for LLMs, is the gradient ascent (GA) algorithm \citep{yao2024large,maini2024tofu}, which updates model parameters towards maximizing the loss for the forget data. However,  GA lacks convergence guarantees under typical conditions and can degrade model utility \citep{zhang2024negative}. Negative Preference Optimization \citep{zhang2024negative} is a gradient ascent based algorithm whose gradient is an adaptive weighting of that of GA. GradDiff method \citep{liu2022continual,liu2025rethinking} considers maximizing the forget loss and minimizing the loss for the remaining data at the same time, which has more reliable performance in practice. Variants such as \citet{liu2025recovertoforget, wang2025rethinking, yang2025exploring} have been proposed to improve gradient direction and the learning rates.

In the machine learning literature, exact and approximate unlearning have been actively studied. \citet{cao2015towards} first formally articulate the concept of machine unlearning. \citet{guo2020certified} propose a one-step Newton update and establish its certified removal property. 
\citet{izzo2021approximate} develop a projective residual update method to approximate the leave-one-out residuals for linear models.  Machine unlearning methods for tree-based models are studied in \citet{brophy2021machine} and \citet{lin2023machine}. While effective, these approaches often require storing Hessian matrices or accessing the full dataset, which can be prohibitive in high-dimensional or privacy-sensitive settings.
Another class of unlearning methods uses synthetic data to fine-tune the pre-trained model. For instance, \citet{graves2021amnesiac} propose to relabel the sensitive data with randomly selected incorrect labels and then updates the model together with the remaining data. 
\citet{he2025towards} construct the fine-tuning dataset for unlearning by mixing up the remaining and forget dataset.
See \citet{shaik2024exploring} for a systematic review. 

Unlearning shares conceptual similarities with transfer learning, as both aim to adapt a source model to a target distribution.
Many recent works have studied transfer learning approaches for nonparametric regression \citep{cai2021transfer,reeve2021adaptive} and high-dimensional parametric models \citep{li2022transfer,tian2023transfer,li2024estimation} among many others.  However, key distinctions exist. In transfer learning, users have access to the raw source data to construct the pre-trained model. In the unlearning, the pre-trained model is pre-determined. Another distinction is that there is no forget data in transfer learning.
As we will show, state-of-the-art transfer learning approaches are sub-optimal for the unlearning problem.

\subsection{Our contributions}
Despite the pressing need for unlearning, existing methods often lack rigorous statistical guarantees, and the information-theoretic limits of the problem remain largely unknown. The contributions of this work are as follows.

We propose an unlearning estimator $\hat{\btheta}^{(\textup{unl})}_r$ for generic differentiable losses and develops a gradient descent algorithm for its realization.
For squared loss, especially, the proposed estimator reduces to so-called unlearning least squares (ULS), a computationally efficient algorithm that leverages the forget data to debias the pre-trained estimator. 
We rigorously establish the minimax optimality of ULS for estimating the true model of remaining data. Our lower bound analysis involves {a novel perturbation analysis of the pre-trained estimator’s distribution}. This new optimal rate highlights the dependency of the unlearning accuracy on the proportion of the forget data and on the model discrepancy between the forget and remaining data. Furthermore, we develop asymptotically valid inference procedures based on the unlearning estimator, enabling hypothesis testing and confidence interval construction without full retraining.

\subsection{Organization and notation}
The remainder of the paper is organized as follows. Section \ref{sec2} formalizes the problem and introduces our main proposal, the unlearning estimate under generic loss functions. Section \ref{sec3} develops the method and minimax optimal guarantees for our proposal with squared loss. In Section \ref{sec5}, we develop asymptotically valid inference procedures. Section \ref{sec6} proposes a robust version of the main proposal and makes connections to existing LLM unlearning heuristics.
Sections \ref{sec-simu} and \ref{sec-data} present simulation studies and applications to Yelp and UK Biobank data, respectively. Section \ref{sec-diss} concludes with a discussion of future directions.

We introduce some notations that will be repeatedly used in the rest of this work.  For a semi-positive definite matrix $A$, let $\Lambda_{\min}(A)$ and $\Lambda_{\max}(A)$ denotes the smallest and largest eigenvalue of $A$, respectively. Let $a_n=O(b_n)$ and $a_n\lesssim b_n$ denote $|a_n/b_n|\leq c$ for some constant $c$ when $n$ is large enough. Let $a_n=o(b_n)$ and $a_n\ll b_n$ denote $a_n/b_n\rightarrow 0$ as $n\rightarrow\infty$. We use $C,C_0,C_1,\dots,c,c_0,c_1,\dots$ to denote generic constants which may vary across statements.

\section{Problem set-up and generic unlearning estimate}
\label{sec2}
In this section, we first formalize the unlearning problem in Section \ref{sec2-1}. Section \ref{sec2-2} derives the rationale of our proposal via Taylor expansion. Section \ref{sec2-3} presents the gradient descent realization of the proposed unlearning estimate. 
\subsection{Problem statement}
\label{sec2-1}
To formalize the problem, let $\hat{\btheta}_p\in\R^p$ denote the pre-trained model parameter estimated from the full dataset $\calD$. Let $\calD_f\subset \calD$ denote the set of forget data, the data to be removed, and $((\bx_i^{(f)})^{\top},y_i^{(f)})\in\R^p\times \R$ denotes independent samples from $\calD_f$ for $i=1,\dots,N_f$. Let $\calD_r=\calD\setminus\calD_f$ denote the set of remaining data and  $((\bx_i^{(r)})^{\top},y_i^{(r)})\in\R^p\times \R$ denote the independent samples from $\calD_r$, $i=1,\dots,N_r$.  The total dataset $\calD=\calD_r\cup \calD_f$ with sample size $N=N_r+N_f$.  We mention that the samples are not required to be identically distributed in each data set which allows for internal heterogeneity within each data.

Let $\ell(\btheta;\bx,y)$ denote the loss function for model training. If the outcome is continuous, one may consider the squared loss $\ell(\btheta;\bx,y)=(y-\bx^{\top}\btheta)^2$; if the outcome is binary, one may consider the cross entropy loss $\ell(\btheta;\bx,y)=y\log(1+\exp\{-\bx^{\top}\btheta\})+(1-y)\log(1+\exp\{\bx^{\top}\btheta\})$. {To accommodate non-linearity and high-dimensionality, the input features can be latent representations, such as those extracted from deep neural networks, rather than raw covariates.} In the unlearning setting, the target parameter of interest is
\begin{align}
\label{def-thetar}
  \btheta_r=\argmin_{\btheta\in\R^p}\E[\ell(\btheta;\calD_r)],
\end{align}
where $\ell(\btheta;\calD_r)=\sum_{i=1}^{N_r}\ell(\btheta;\bx_i^{(r)},y_i^{(r)})$ and the expectation is taken over the randomness of $\calD_r$. Notably, we do not assume that the samples in $\calD_r$ are identically distributed. To ensure that $\btheta_r$ remains well-defined and stable as the sample size grows, we assume that the normalized loss $\ell(\btheta;\calD_r)/N_r$ converges in probability to a limiting objective function $\ell^{(r)}_{\infty}(\btheta)$ for any $\btheta\in\R^p$  as $N_r\rightarrow\infty$.

In practical unlearning scenarios, full access to remaining dataset $\calD_r$ is often restricted due to massive size of $\calD_r$ or privacy constraints. Instead, we assume access only to a random subsample $\widetilde{\calD}_r\subseteq \calD_r$ of size $\tn_r$. Let $((\tilde{\bx}_i^{(r)})^{\top},\tilde{y}_i^{(r)})\in\R^p\times \R$ denote the independent samples from $\widetilde{\calD}_r$, $i=1,\dots,\tn_r$. This setting aligns with practical constraints in LLM unlearning and has been empirically studied in many existing works \citep{liu2022continual,liu2025rethinking,anjarlekar2025llm}.

The pre-trained model estimate is defined as the empirical minimizer of the full data set
\begin{align}
\label{eq-ptr}
  \hat{\btheta}_p=\argmin_{\btheta\in\R^p} \ell(\btheta;\calD),
  \end{align}
where $ \ell(\btheta;\calD)=\ell(\btheta;\calD_r)+\ell(\btheta;\calD_f)=\sum_{i=1}^{N_r}\ell(\btheta;\bx_i^{(r)},y_i^{(r)})+\sum_{i=1}^{N_f}\ell(\btheta;\bx_i^{(f)},y_i^{(f)})$.

Our objective is to remove the effect of $\calD_f$ from $\hat{\btheta}_p$. Formally, the ideal unlearning estimator should approximate the re-trained (rtr) estimate
\begin{align}
\label{eq-rtr}
    \mathring{\btheta}^{(\rtr)}_{r}=\argmin_{\btheta\in\R^p} \ell(\btheta;\calD_r)
\end{align}
but without accessing or retraining the full $\calD_r$.

To summarize, the accessible information consists of the forget data $\calD_f$, the subsampled remaining data $\widetilde{\calD}_r$, and the pre-trained estimate $\hat{\btheta}_p$.
In typical settings, the forget set is relatively small. Hence, for $\omega_f=N_f/N$ and $\omega_r=N_r/N$, we assume $\omega_f$ is bounded away from 1, i.e., $ \omega_f\leq c<1$ for some constant $c$. For the size of $\widetilde{\calD}_r$, we consider $\tilde{\omega}_r=\tn_r/N_r\in(0,1]$.
We allow the dimension $p$ to grow to infinity but consider the setting $p$ is relatively small to the sample size, i.e., $p\ll \min \{N_f,\tilde{n}_r\}$. 


\subsection{Rationale for our proposal}
\label{sec2-2}
In view of (\ref{eq-rtr}), the ideal re-trained estimate $\mathring{\btheta}_r^{(\textup{rtr})}$ satisfies the stationary condition
$0=\dot{\ell}(\mathring{\btheta}_r^{(\textup{rtr})};\calD_r)$, where $\dot{\ell}(\btheta;\calD')$  denote the first derivative of $\ell(\btheta;\calD')$ with respect to $\btheta$ for any given dataset $\mathcal{D}'$.

To approximate $\mathring{\btheta}_r^{(\textup{rtr})}$ efficiently, we propose a new machine unlearning method which fine-tunes the pre-trained model $\hat{\btheta}_p$ using $\calD_f$ and $\widetilde{\calD}_r$.   Note that
\begin{align}
   \dot{\ell}(\mathring{\btheta}_r^{(\textup{rtr})};\calD_r)-\dot{\ell}(\hat{\btheta}_p;\calD_r)&=0-(\dot{\ell}(\hat{\btheta}_p;\calD)-\dot{\ell}(\hat{\btheta}_p;\calD_f))=\dot{\ell}(\hat{\btheta}_p;\calD_f),\label{eq-gradient1}
\end{align}
where the first equality leverages the stationary condition of $\mathring{\btheta}_r^{(\textup{rtr})}$ and the decomposition of the pre-trained loss and the second equality leverages the stationarity condition $ 0=\dot{\ell}(\hat{\btheta}_p;\calD)$.

Equation (\ref{eq-gradient1}) provides a viable way to approximate $\mathring{\btheta}_r^{(\textup{rtr})}$ by fine-tuning $\hat{\btheta}_p$. However, although $\calD_f$ and $\hat{\btheta}_p$ are both observed in the unlearning phase, $\calD_r$ is not fully observed. As $\widetilde{\calD}_r$ is a random sample from $\calD_r$, we replace $\dot{\ell}(\btheta;\calD_r)$ with $(N_r/\tn_r)\dot{\ell}(\btheta;\widetilde{\calD}_r)$ in (\ref{eq-gradient1}). This gives a generic unlearning estimator, $\hat{\btheta}^{(\textup{unl})}_r$, which is defined as the solution to
\begin{align}
\label{eq-prop}
0=\frac{N_r}{\tn_r}\{\dot{\ell}(\btheta;\widetilde{\calD}_r)-\dot{\ell}(\hat{\btheta}_p;\widetilde{\calD}_r)\}-\dot{\ell}(\hat{\btheta}_p;\calD_f).
\end{align}
If the second-order derivative exists, by Taylor expansion, the unlearning estimate $\hat{\btheta}^{(\textup{unl})}_r$ can be written as
\begin{align}
\label{eq-prop2}
    \hat{\btheta}_r^{(\textup{unl})}=\hat{\btheta}_p+ \left\{\frac{N_r}{\tn_r}\int_{0}^1\ddot{\ell}(\hat{\btheta}_p+u( \hat{\btheta}_r^{(\textup{unl})}-\hat{\btheta}_p);\widetilde{\calD}_r)du\right\}^{-1}\dot{\ell}(\hat{\btheta}_p;\calD_f).
\end{align}
Equation (\ref{eq-prop2}) reveals that unlearning can be viewed as a Hessian-weighted bias correction to the pre-trained estimator, where the gradient of the forget loss estimates the bias introduced by the forget samples. 
Intuitively, increasing the forget loss  can force pre-trained estimate deviating from the minimum of the forget loss so that the effect of $\calD_f$ can be removed. {From another perspective, the last term in (\ref{eq-prop2}) also approximates to the influence function of the forget data on the parameter vector $\hat{\btheta}_p$ \citep{cook1982residuals}.} We mention that similar expansions as in (\ref{eq-gradient1}) have been considered in a few existing works \citep{guo2020certified,wu2020deltagrad}. The novelty in (\ref{eq-prop}) is that it uses subsampled remaining data as a proxy of the full remaining data, which does not require storing Hessian matrices or accessing the full dataset.  
More importantly, the statistical performance of (\ref{eq-prop}) and (\ref{eq-prop2}) have not been formally established and its minimax optimality is unknown.

\subsection{Gradient descent realization of the unlearning estimate}
\label{sec2-3}

Based on the stationarity condition in (\ref{eq-prop}), it is easy to derive the gradient descent algorithm to realize $\hat{\btheta}^{(\textup{unl})}_r$, as detailed in Algorithm \ref{alg-gd}.

\begin{algorithm}[H]

\textbf{Input}:  Pre-trained estimate $\hat{\btheta}_p$, forget data $\calD_f$, and subsampled remaining data $\widetilde{\calD}_r$.

\textbf{Output}: $\hat{\btheta}^{(\textup{unl})}_{r,T}$.

Set the initial value $\hat{\btheta}^{(\textup{unl})}_{r,0}=\hat{\btheta}_p$.

For $t=1,\dots,T$:  Compute
\begin{align}
\label{eq-gd}
\hat{\btheta}^{(\textup{unl})}_{r,t}=\hat{\btheta}^{(\textup{unl})}_{r,t-1}-\alpha\left\{\frac{N_r}{\tn_r}\dot{\ell}(\hat{\btheta}^{(\textup{unl})}_{r,t-1};\widetilde{\calD}_r)-\frac{N_r}{\tn_r}\dot{\ell}(\hat{\btheta}_p;\widetilde{\calD}_r)-\dot{\ell}(\hat{\btheta}_p;\calD_f)\right\},
\end{align}
where $\alpha>0$ is the step size.

\caption{Gradient descent algorithm for the unlearning estimate}
\label{alg-gd}
\end{algorithm}

With proper choice of step size and regularity conditions, it is easy to show that $\hat{\btheta}^{(\textup{unl})}_{r,T}$ converges to $\hat{\btheta}_r^{(\textup{unl})}$  as $T$ goes to infinity. The results for the convergence rate of Algorithm \ref{alg-gd} are presented in the supplementary materials (Section \ref{ap-loss}). Its convergence rate under squared loss is presented in Theorem \ref{thm0-gd}.

We mention that for common loss functions, such as squared loss and cross-entropy loss, the realization of $\hat{\btheta}^{(\textup{unl})}_{r}$ and $\hat{\btheta}^{(\textup{unl})}_{r,T}$ do not involve the outcome data $\tilde{\by}^{(r)}$. Indeed, terms involving $\tilde{\by}^{(r)}$ are canceled out in $\dot{\ell}(\hat{\btheta}^{(\textup{unl})}_{r,t-1};\widetilde{\calD}_r)$ and in $\dot{\ell}(\hat{\btheta}_p;\widetilde{\calD}_r)$.
We will show in later sections that the proposed estimator is minimax optimal for squared loss under mild conditions.  It implies that for optimal estimation procedures, the outcome data in the subsampled remaining data are not needed, which can further reduce the data collection cost.  However, for valid inference procedures, the remaining outcome data $\tilde{\by}^{(r)}$ are needed to compute the variance of the estimator. This highlights a difference between estimation and inference in the unlearning problem.

\section{Efficient machine unlearning under squared loss}
\label{sec3}
{In this section, we focus on the methods and theory under squared loss to maintain comparability with existing learning and unlearning methods. Crucially, this choice does not necessitate a linear relationship between features and outcomes; rather, our approach is designed to be robust under model misspecification. }

Section \ref{sec3-1} derives the closed-form estimate under squared loss.
Section \ref{sec3-2} derives the finite-sample convergence rates for the ULS estimator. Section \ref{sec3-3} establishes the minimax lower bound, proving that ULS achieves the fundamental limit of the unlearning problem. Section \ref{sec-benchmark} introduces baseline methods for comparison and demonstrates the superiority of our proposal for unlearning tasks.

\subsection{Proposed estimate under squared loss}
\label{sec3-1}
Let $(\widetilde{X}^{(r)},\tilde{\bm{y}}^{(r)})\in\R^{\tn_r\times (p+1)}$ denote matrix representation of the data in  $\widetilde{\calD}_r$ and $(X^{(f)},\bm{y}^{(f)})\in\R^{N_f\times (p+1)}$ denote matrix representation of the samples in  $\calD_f$.
Under squared loss,  formula (\ref{eq-prop2}) gives
\begin{align}
\hat{\btheta}^{(\uls)}_r&=\hat{\btheta}_p-\frac{\tn_r}{N_r}\{(\widetilde{X}^{(r)})^{\top}\widetilde{X}^{(r)}\}^{-1}(X^{(f)})^{\top}(\bm{y}^{(f)}-X^{(f)}\hat{\btheta}_p).\label{unlearn-est}
\end{align}
We term this estimate the unlearning least squares (ULS) estimator.
Furthermore, $\hat{\btheta}_r^{(\uls)}$ is the exact solution to (\ref{ul-loss}) as proved in Section \ref{sec-proof1} of the supplements.
This formulation highlights that ULS maximizes the loss on the forget data while being regularized to stay close to the pre-trained knowledge. Regularization is essential here because solely maximizing the squared loss $\ell(\btheta;\calD_f)$ does not have a finite solution. The regularization term involves a weight matrix $\widetilde{\Sig}_p$, which can be viewed as an approximation of the full-sample Hessian matrix $\ddot{\ell}(\hat{\btheta}_p;\calD)$. We provide further discussions to (\ref{ul-loss}) in comparison to other baseline methods in Section \ref{sec-benchmark}. 
The above unlearning procedure is summarized into Algorithm \ref{alg1}.  Let $\widetilde{\Sig}_r=(\tilde{X}^{(r)})^{\top}\tilde{X}^{(r)}/\tn_r$ and $\widehat{\Sig}_f=(X^{(f)})^{\top}X^{(f)}/N_f$.

\begin{algorithm}[H]
\textbf{Input}: The pre-trained estimate $\hat{\btheta}_p$, forget data $\calD_f$, and subsampled remaining data $\widetilde{\calD}_r$.

\textbf{Output}: $\hat{\btheta}^{(\uls)}_{r}$.

Compute
\begin{align}
\label{ul-loss}
\hat{\btheta}_r^{(\uls)}=\argmin_{\btheta\in\R^p}\left\{-\frac{\omega_f}{N_f}\ell(\btheta;\calD_f)+(\hat{\btheta}_p-\btheta)^{\top}\widetilde{\Sig}_p(\hat{\btheta}_p-\btheta)\right\},
\end{align}
where $\omega_f=N_f/N$, $\ell(\btheta;\calD_f)=\sum_{i=1}^{N_f}\{y_i^{(f)}-(\bx_i^{(f)})^{\top}\btheta\}^2$, and $\widetilde{\Sig}_p=\omega_r\widetilde{\Sig}_r+\omega_f\widehat{\Sig}_f$.
\caption{Proposed ULS estimator}
\label{alg1}
\end{algorithm}

When $p$ is large, directly calculating the inverse of Hessian matrices can be computationally expensive.  In this case, one can perform  the gradient descent version, Algorithm \ref{alg-gd}, with squared loss.

\subsection{Convergence analysis for the proposed methods}
\label{sec3-2}
We provide theoretical guarantees for the ULS estimate in this subsection.
We first state the standard regularity conditions required for our theoretical analysis.

\begin{condition}[Sub-Gaussian data]
\label{cond1}
The remaining data $((\bx_i^{(r)})^{\top},y^{(r)}_i)$, $i=1,\dots,N_r$ are independent sub-Gaussian vectors. Moreover, $\E[\widehat{\Sig}_r]=\Sig_r$ and $c^{-1}_{\Sig}\leq \Lambda_{\min}(\Sig_r)\leq \Lambda_{\max}(\Sig_r)\leq c_{\Sig}$ for some $c_{\Sig}>1$.
The forget data $((\bx_i^{(f)})^{\top},y_i^{(f)})$, $i=1,\dots,N_f$ are independent sub-Gaussian vectors with mean zero. Moreover, $\E[\widehat{\Sig}_f]=\Sig_f$ and $c^{-1}_{\Sig}\leq \Lambda_{\min}(\Sig_f)\leq \Lambda_{\max}(\Sig_f)\leq c_{\Sig}$.
\end{condition}

For the remaining and forget data, we assume the observations are independent sub-Gaussian. {This assumption is commonly used in the analysis of strongly convex objective functions and in linear models \citep{izzo2021approximate,guo2020certified}.} We do not require the samples to be identically distributed, which allows for internal heterogeneity within each dataset. Moreover, we do not require the true relationship between the response and covariates to be linear, which allows for model misspecification.

In the next theorem, we establish the convergence rate of the ULS estimator $\hat{\btheta}^{(\textup{uls})}_r$ defined in (\ref{unlearn-est}). Let $\btheta_f\in\R^p$ denote the true coefficient vector under squared loss for the forget data, i.e.,
$\btheta_f=\argmin_{\btheta\in\R^p}\E[\ell(\btheta;\calD_f)]$.  To ensure $\btheta_f$ is well-defined, we assume that the normalized loss $\ell(\btheta;\calD_f)/N_f$ converges in probability to a limiting objective function $\ell_{\infty}^{(f)}(\btheta)$ for any $\btheta\in\R^p$  as $N_f\rightarrow\infty$.
Let $\delta=\|\btheta_r-\btheta_f\|_2$ denote the magnitude of discrepancy between the remaining and forget model distributions, quantifying the bias forget model relative to the remaining data. 

\begin{theorem}
\label{thm2}
Assume Condition \ref{cond1} and $p=o(\min\{\tilde{n}_r, N_f\})$. Then with probability at least $1-\exp\{-c_1p\}$ , it holds that
\vspace{-0.2in}
\begin{align*}
&\|\hat{\btheta}_r^{(\uls)}-\mathring{\btheta}^{(\rtr)}_r\|_2\leq c_2\omega_f\delta\sqrt{\frac{p}{\tilde{n}_r}}\\
&\|\hat{\btheta}_r^{(\uls)}-\btheta_r\|_2\leq c_2\sqrt{\frac{p}{N_r}}+c_2\omega_f\delta\sqrt{\frac{p}{\tilde{n}_r}},
\end{align*}
where $c_1$ and $c_2$ are positive constants.
\end{theorem}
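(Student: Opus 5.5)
The plan is to compare the ULS estimator with the retrained estimator through closed forms, so that the only difference between them is the Gram matrix used. Write $\widehat{\Sig}_r=(X^{(r)})^{\top}X^{(r)}/N_r$ for the full remaining-data Gram matrix and $\bm{g}=(X^{(f)})^{\top}(\bm{y}^{(f)}-X^{(f)}\hat{\btheta}_p)$.

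\textbf{Step 1: closed form for the retrained estimator.} Under squared loss, (\ref{eq-gradient1}) is an exact linear identity:
\[
(X^{(r)})^{\top}X^{(r)}(\mathring{\btheta}^{(\rtr)}_r-\hat{\btheta}_p)=-\bm{g}.
\]
Hence
\[
\mathring{\btheta}^{(\rtr)}_r=\hat{\btheta}_p-N_r^{-1}\widehat{\Sig}_r^{-1}\bm{g}.
\]
By (\ref{unlearn-est}), $\hat{\btheta}^{(\uls)}_r=\hat{\btheta}_p-N_r^{-1}\widetilde{\Sig}_r^{-1}\bm{g}$. Subtracting,
\[
\hat{\btheta}^{(\uls)}_r-\mathring{\btheta}^{(\rtr)}_r=-N_r^{-1}\,\widetilde{\Sig}_r^{-1}(\widehat{\Sig}_r-\widetilde{\Sig}_r)\widehat{\Sig}_r^{-1}\bm{g}.
\]
So the first bound reduces to two pieces: an operator-norm bound on $\widetilde{\Sig}_r^{-1}(\widehat{\Sig}_r-\widetilde{\Sig}_r)\widehat{\Sig}_r^{-1}$, and a bound on $\|\bm{g}\|_2/N_r$.

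\textbf{Step 2: the matrix factor.} By Condition \ref{cond1} and standard sub-Gaussian covariance concentration for independent, not necessarily identically distributed rows, each of the following holds with probability $1-e^{-cp}$:
\[
\|\widetilde{\Sig}_r-\Sig_r\|\lesssim\sqrt{p/\tn_r},\qquad \|\widehat{\Sig}_r-\Sig_r\|\lesssim\sqrt{p/N_r}\le\sqrt{p/\tn_r}.
\]
Since $p=o(\tn_r)$, both eigenvalue ranges stay within constants of those of $\Sig_r$, so both inverses are bounded. The matrix factor is therefore $O(\sqrt{p/\tn_r})$. The same concentration gives $\|\widehat{\Sig}_f-\Sig_f\|=o(1)$.

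\textbf{Step 3: the gradient factor.} Let $\hat{\btheta}_f$ be the OLS estimator on $\calD_f$. Then $\bm{g}=N_f\widehat{\Sig}_f(\hat{\btheta}_f-\hat{\btheta}_p)$, so
\[
\frac{\|\bm{g}\|_2}{N_r}\lesssim\frac{N_f}{N_r}\|\hat{\btheta}_f-\hat{\btheta}_p\|_2\asymp\omega_f\|\hat{\btheta}_f-\hat{\btheta}_p\|_2,
\]
using $\omega_r\ge 1-c$.

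For the second factor, write $\hat{\btheta}_p=(N\widehat{\Sig}_p)^{-1}(N_r\widehat{\Sig}_r\mathring{\btheta}^{(\rtr)}_r+N_f\widehat{\Sig}_f\hat{\btheta}_f)$, where $\widehat{\Sig}_p=\omega_r\widehat{\Sig}_r+\omega_f\widehat{\Sig}_f$. This is a matrix-weighted average of the two OLS fits, which gives
\[
\hat{\btheta}_f-\hat{\btheta}_p=\omega_r\widehat{\Sig}_p^{-1}\widehat{\Sig}_r(\hat{\btheta}_f-\mathring{\btheta}^{(\rtr)}_r).
\]
Its norm is $\lesssim \delta+\|\hat{\btheta}_f-\btheta_f\|_2+\|\mathring{\btheta}^{(\rtr)}_r-\btheta_r\|_2$. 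The last two terms are $O(\sqrt{p/N_f})$ and $O(\sqrt{p/N_r})$ by the standard misspecified-OLS argument. That argument bounds $\|N^{-1}X^{\top}(\bm{y}-X\btheta)\|_2$ at the population minimizer by a sub-Gaussian (products are sub-exponential) $\epsilon$-net argument, and then divides by $\Lambda_{\min}$ of the sample Gram matrix.

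Combining Steps 2 and 3 gives
\[
\|\hat{\btheta}^{(\uls)}_r-\mathring{\btheta}^{(\rtr)}_r\|_2\lesssim\omega_f\Bigl(\delta+\sqrt{p/N_f}\Bigr)\sqrt{p/\tn_r}.
\]

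\textbf{Step 4: the second bound.} The second bound follows from the triangle inequality together with $\|\mathring{\btheta}^{(\rtr)}_r-\btheta_r\|_2\lesssim\sqrt{p/N_r}$ from Step 3. The leftover term is absorbed there, because
\[
\omega_f\sqrt{p/N_f}\sqrt{p/\tn_r}=\frac{\sqrt{N_f p}}{N}\sqrt{\frac{p}{\tn_r}}=o\bigl(\sqrt{p/N}\bigr)
\]
using $p=o(\tn_r)$. A union bound over the finitely many $1-e^{-cp}$ events gives the stated probability.

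\textbf{Main obstacle.} The delicate point is this $\omega_f\sqrt{p/N_f}\sqrt{p/\tn_r}$ term in the first, purely relative, bound. It is harmless for the second display but is not obviously dominated by $\omega_f\delta\sqrt{p/\tn_r}$ when $\delta$ is tiny. I would first try to show it is genuinely of smaller order. The route is to condition on $\calD$: the randomness in $\widehat{\Sig}_r-\widetilde{\Sig}_r$ then comes only from subsampling and is independent of the direction of $\widehat{\Sig}_r^{-1}\bm{g}$. Its action on the fixed vector $\widehat{\Sig}_r^{-1}\bm{g}$ should then concentrate at rate $\sqrt{p/\tn_r}\,\|\widehat{\Sig}_r^{-1}\bm{g}\|_2$ without extra loss. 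If that still does not remove the noise contribution, I would state the first bound with $\delta$ replaced by $\delta+\sqrt{p/N_f}$, which is the natural interpretation of the forget-model bias at finite $N_f$.
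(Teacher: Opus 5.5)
Your proposal is correct and is essentially the paper's proof. The paper derives the same identity $\hat{\btheta}^{(\uls)}_r-\mathring{\btheta}^{(\rtr)}_r=\widetilde{\Sig}_r^{-1}(\widehat{\Sig}_r-\widetilde{\Sig}_r)(\mathring{\btheta}^{(\rtr)}_r-\hat{\btheta}_p)$, writing $\mathring{\btheta}^{(\rtr)}_r-\hat{\btheta}_p=\omega_f\widehat{\Sig}_p^{-1}(\widehat{\Sig}_f\mathring{\btheta}^{(\rtr)}_r-\widehat{M}_f)$, which is the same vector as your $-N_r^{-1}\widehat{\Sig}_r^{-1}\bm{g}$. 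It then uses the same covariance concentration and the same absorption of $\omega_f\sqrt{p/N_f}\sqrt{p/\tn_r}$ into $o(\sqrt{p/N_r})$.

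The obstacle you flag is real, and the paper does not remove it either. Its argument only gives $C\omega_f\sqrt{p/\tn_r}\,(\delta+\sqrt{p/\min\{N_f,N_r\}})$ for the first display. This cannot be improved to the stated form: at $\delta=0$ the difference is almost surely nonzero whenever $\tn_r<N_r$, while the stated bound would force it to vanish. So your fallback with $\delta+\sqrt{p/N_f}$ is the correct reading, and the conditioning refinement is unnecessary.
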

Theorem \ref{thm2} decomposes the error of $\hat{\btheta}_r^{(\uls)}$ into two components: the oracle rate $O(\sqrt{p/N_r})$ and the unlearning cost $O(\omega_f\delta\sqrt{p/\tilde{n}_r})$. Specifically, the second term quantifies the statistical price of unlearning: it increases linearly with the forget proportion and the discrepancy between the two distributions. The convergence rate gets faster when $N_r$ and $\tn_r$ get larger but the rate gets slower when the forget sample size $N_f$ and model discrepancy $\delta$ get larger. Intuitively, this is because the larger the $N_f$ and $\delta$, the larger the influence of the forget data made on $\hat{\btheta}_p$. Notably, if the forget set is small relative to the subsample, i.e., $\omega_f\delta\lesssim \sqrt{\tn_r/N_r}$, the unlearning cost becomes negligible and ULS achieves the oracle efficiency of full retraining.

Let $\hat{\btheta}^{(\uls)}_{r,T}$ denote gradient descent version of $\hat{\btheta}_r^{(\uls)}$, the realization of Algorithm \ref{alg-gd} based on squared loss.
Next, we provide theoretical guarantees for this gradient descent realization.

\begin{theorem}
\label{thm0-gd}
Assume Condition \ref{cond1}.
Let the step size $\alpha$ be a constant  such that $0<\alpha< (1-c_{\alpha})/[N_r\Lambda_{\max}(\widetilde{\Sig}_r)]$ for some $c_{\alpha}<1$. Then with probability at least $1-\exp\{-c_1p\}$, for any given integer $T$,
\begin{align*}
   &\|\hat{\theta}^{(\uls)}_{r,T}-\hat{\btheta}_r^{(\uls)}\|_2\leq c_2c_{\alpha}^{T}\omega_f\left(\delta+\sqrt{\frac{p}{\min\{N_f,N_r\}}}\right).\\
  & \|\hat{\btheta}^{(\uls)}_{r,T}-\btheta_r\|_2\leq c_2c_{\alpha}^{T}\omega_f\left(\delta+\sqrt{\frac{p}{\min\{N_f,N_r\}}}\right)+c_2\sqrt{\frac{p}{N_r}}+c_2\omega_f\delta\sqrt{\frac{p}{\tn_r}}
\end{align*}
for some positive constants $c_1$ and $c_2$.
\end{theorem}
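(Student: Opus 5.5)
Under the squared loss the gradient step of Algorithm \ref{alg-gd} is an affine contraction. The plan is to unroll it, bound the initial gap $\hat{\btheta}_p-\hat{\btheta}^{(\uls)}_r$, and then use Theorem \ref{thm2} for the second bound.

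First write the recursion explicitly. With $\dot{\ell}(\btheta;\widetilde{\calD}_r)=-2(\widetilde{X}^{(r)})^{\top}(\tilde{\by}^{(r)}-\widetilde{X}^{(r)}\btheta)$, the outcome terms cancel in the difference of gradients, giving
\begin{align*}
\frac{N_r}{\tn_r}\{\dot{\ell}(\btheta;\widetilde{\calD}_r)-\dot{\ell}(\hat{\btheta}_p;\widetilde{\calD}_r)\}=2N_r\widetilde{\Sig}_r(\btheta-\hat{\btheta}_p).
\end{align*}
The fixed point of the update is exactly $\hat{\btheta}^{(\uls)}_r$ in (\ref{unlearn-est}), since $\hat{\btheta}^{(\uls)}_r$ solves (\ref{eq-prop}). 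Subtracting the fixed-point identity from (\ref{eq-gd}) yields
\begin{align*}
\hat{\btheta}^{(\uls)}_{r,t}-\hat{\btheta}^{(\uls)}_r=(I-2\alpha N_r\widetilde{\Sig}_r)(\hat{\btheta}^{(\uls)}_{r,t-1}-\hat{\btheta}^{(\uls)}_r).
\end{align*}
Hence
\begin{align*}
\|\hat{\btheta}^{(\uls)}_{r,T}-\hat{\btheta}^{(\uls)}_r\|_2\le \|I-2\alpha N_r\widetilde{\Sig}_r\|_2^{T}\,\|\hat{\btheta}_p-\hat{\btheta}^{(\uls)}_r\|_2 .
\end{align*}
The factor of $2$ can be absorbed by reading the step-size condition with the loss normalized without it, or by rescaling $\alpha$. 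The step-size condition gives an upper bound on the eigenvalues of $I-2\alpha N_r\widetilde{\Sig}_r$. On the event $\Lambda_{\min}(\widetilde{\Sig}_r)\ge c_{\Sig}^{-1}/2$, which holds with probability $1-\exp\{-c_1p\}$ by standard sub-Gaussian covariance concentration under Condition \ref{cond1} when $p\ll \tn_r$, the smallest eigenvalue is controlled as well. The contraction factor can then be taken as a constant $c_\alpha<1$, after possibly redefining $c_\alpha$ as the maximum of the two eigenvalue margins. I expect this to be the main obstacle: as written, the hypothesis only bounds $\alpha$ from above, so the lower side of the contraction must come from treating $\alpha N_r$ as order one, i.e. $\alpha N_r\Lambda_{\max}(\widetilde{\Sig}_r)$ bounded below by a constant. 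I would state this interpretation explicitly.

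Second, bound the initial gap. By (\ref{unlearn-est}),
\begin{align*}
\|\hat{\btheta}_p-\hat{\btheta}^{(\uls)}_r\|_2\le \frac{N_f}{N_r}\|\widetilde{\Sig}_r^{-1}\|_2\,\|\widehat{\Sig}_f\|_2\,\|\hat{\btheta}_p-\hat{\btheta}_f\|_2+\frac{N_f}{N_r}\|\widetilde{\Sig}_r^{-1}\|_2\,\Big\|\frac{1}{N_f}(X^{(f)})^{\top}(\by^{(f)}-X^{(f)}\btheta_f)\Big\|_2\cdot(\text{const}),
\end{align*}
where $\hat{\btheta}_f$ is the forget-data OLS. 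Simpler still, write the residual vector as $\by^{(f)}-X^{(f)}\hat{\btheta}_p=(\by^{(f)}-X^{(f)}\btheta_f)+X^{(f)}(\btheta_f-\hat{\btheta}_p)$. Concentration gives $\|N_f^{-1}(X^{(f)})^{\top}(\by^{(f)}-X^{(f)}\btheta_f)\|_2\lesssim\sqrt{p/N_f}$, and $\|\widehat{\Sig}_f\|_2,\|\widetilde{\Sig}_r^{-1}\|_2=O(1)$. For $\|\hat{\btheta}_p-\btheta_f\|_2\le\|\hat{\btheta}_p-\btheta_r\|_2+\delta$, use that $\hat{\btheta}_p$ is the pooled OLS. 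It equals $\widehat{\Sig}_p^{-1}(\omega_r\widehat{\Sig}_r\hat{\btheta}_r+\omega_f\widehat{\Sig}_f\hat{\btheta}_f)$, so $\|\hat{\btheta}_p-\btheta_r\|_2\lesssim \omega_f\delta+\sqrt{p/N_r}+\sqrt{p/N_f}$. Combining, and using $N_f/N_r=\omega_f/\omega_r\lesssim\omega_f$ since $\omega_f\le c<1$, gives
\begin{align*}
\|\hat{\btheta}_p-\hat{\btheta}^{(\uls)}_r\|_2\lesssim \omega_f\Big(\delta+\sqrt{p/\min\{N_f,N_r\}}\Big).
\end{align*}
Together with the contraction this proves the first display.

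Finally, the second display follows from the triangle inequality: $\|\hat{\btheta}^{(\uls)}_{r,T}-\btheta_r\|_2\le \|\hat{\btheta}^{(\uls)}_{r,T}-\hat{\btheta}^{(\uls)}_r\|_2+\|\hat{\btheta}^{(\uls)}_r-\btheta_r\|_2$, where the last term is bounded by Theorem \ref{thm2}. A union bound over the finitely many concentration events keeps the overall probability at least $1-\exp\{-c_1p\}$ after adjusting constants.
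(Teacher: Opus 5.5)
Your proposal is correct and follows essentially the paper's argument. Both unroll the affine recursion with $A=I_p-\alpha N_r\widetilde{\Sig}_r$ and bound the distance of the starting point $\hat{\btheta}_p$ from the centering point by $O(\omega_f(\delta+\sqrt{p/\min\{N_f,N_r\}}))$. The only difference is the centering point: you center at the fixed point $\hat{\btheta}^{(\uls)}_r$, so the recursion is homogeneous, the first display is immediate, and the second follows from Theorem \ref{thm2}; the paper centers at $\mathring{\btheta}^{(\rtr)}_r$ and controls the accumulated inhomogeneous term $(I_p-A^T)(I_p-A)^{-1}\zeta$ through (\ref{decomp1}).

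Your concern about the step size is well founded, and it applies equally to the paper's proof. That proof requires $1-\alpha N_r\Lambda_{\min}(\widetilde{\Sig}_r)<c_\alpha$, which is a lower bound on $\alpha N_r$. The stated condition $\alpha N_r\Lambda_{\max}(\widetilde{\Sig}_r)<1-c_\alpha$ instead forces every eigenvalue of $A$ above $c_\alpha$. So making an order-one assumption on $\alpha N_r$ explicit, with a redefined contraction constant, as you propose, is the right repair.
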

With proper step size $\alpha$, the gradient descent estimate $\hat{\btheta}^{(\uls)}_{r,T}$ converges to the ULS estimate $\hat{\btheta}_r^{(\uls)}$ as $T\rightarrow\infty$. From the second inequality, we see that after $T=O(\ln \tn_r)$ steps, the computational error becomes negligible compared to the statistical error, preserving the same convergence rate proved in Theorem \ref{thm2}. 

\subsection{Minimax optimality}
\label{sec3-3}
In this subsection, we establish the minimax lower bound for estimating $\btheta_r$ in the machine unlearning setting.

Consider the parameter space
{\small
\begin{align}
\label{parameter-space}
  \Theta_p(\delta)&=\{\bm\beta=(\btheta_r,\btheta_f,\Sig_r,\Sig_f,\sig^2_r,\sig^2_f):\btheta_r\in\R^p,\btheta_f\in\R^p,\|\btheta_r-\btheta_f\|_2\leq \delta,~\sig^2_r>0,~\sig^2_f>0,\nonumber\\
  &\quad ~c^{-1}_1\leq \Lambda_{\min}(\Sig_r)\leq \Lambda_{\max}(\Sig_r)\leq c_1,~ c_1^{-1}\leq \Lambda_{\min}(\Sig_f)\leq\Lambda_{\max}(\Sig_f)\leq c_1\},
\end{align}
}
where $p$ characterizes the dimension of target parameter and $\delta$ characterizes the model discrepancy between $\calD_r$ and $\calD_f$.

\begin{theorem}
\label{thm-minimax}
Assume that $\omega_r\geq 1/2$, $\omega_f\delta\leq 1$, and $p\leq c_0\min\{\sqrt{N},\tn_r\}$  for some small enough positive constant $c_0$. For the parameter space $\Theta_p(\delta)$ defined in (\ref{parameter-space}), it holds that
\[
   \inf_{\hat{\btheta}\in\mathcal{F}(\hat{\btheta}_p,\calD_f,\widetilde{\calD}_r)}\sup_{\Theta_p(\delta)} \P\left(\|\hat{\btheta}-\btheta_r\|_2\geq c_1\sqrt{\frac{p}{N_r}}+c_1\omega_f\delta\sqrt{\frac{p}{\tilde{n}_r}}\right)\geq c_2
\]
for some positive constants $c_1$ and $c_2$.
\end{theorem}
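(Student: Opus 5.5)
The lower bound is the maximum of two terms, so it suffices to prove each separately. That is, for some constants $c_1,c_2$ and every estimator $\hat{\btheta}$ that is a measurable function of $(\hat{\btheta}_p,\calD_f,\widetilde{\calD}_r)$, we show that $\sup_{\Theta_p(\delta)}\P(\|\hat{\btheta}-\btheta_r\|_2\geq c_1\sqrt{p/N_r})\geq c_2$, and likewise with $c_1\omega_f\delta\sqrt{p/\tn_r}$ in place of $c_1\sqrt{p/N_r}$. Taking the larger of the two bounds, and halving constants, then gives the statement.

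Throughout we work within Gaussian linear models. We set $\bx^{(r)}\sim N(0,I_p)$, $\bx^{(f)}\sim N(0,I_p)$, $y=\bx^{\top}\btheta+\eps$ with $\sig_r=\sig_f=1$, and we build finite subfamilies indexed by hypercube vertices. Both bounds will come from Fano's lemma, or equivalently Assouad's lemma, together with the Varshamov--Gilbert bound. This requires a packing $\{\bm\tau_j\}\subset\{0,1\}^p$ of size $2^{p/8}$ with pairwise Hamming distance at least $p/8$. Fano then reduces each bound to controlling the KL divergence between the laws of the observed triple $(\hat{\btheta}_p,\calD_f,\widetilde{\calD}_r)$ under different hypotheses.

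\emph{Oracle term.} Take $\btheta_f=\btheta_r=\btheta_j=\gamma\sqrt{1/N_r}\,\bm\tau_j$ with $\gamma$ small, so that $\delta$ plays no role and the constraint $\|\btheta_r-\btheta_f\|_2\le\delta$ holds trivially. The observation $(\hat{\btheta}_p,\calD_f,\widetilde{\calD}_r)$ is a measurable function of the full data $\calD$. By data processing, its KL divergence is at most that of $\calD$, which is $N\gamma^2\|\bm\tau_j-\bm\tau_k\|_2^2/(2N_r)\lesssim \gamma^2 p$, using $\omega_r\geq 1/2$. Choosing $\gamma$ small makes this at most a small multiple of $\log 2^{p/8}$, and Fano gives the $\sqrt{p/N_r}$ rate.

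\emph{Unlearning cost term.} This is the main step. The idea is to hold the full-data population minimizer fixed while moving $\btheta_r$. Fix the forget data distribution and put $\Sig_r=\Sig_f=I_p$. Under misspecification, or with $\Sig_r$ varied, the pre-trained estimate concentrates around $\bar{\btheta}=\omega_r\btheta_r+\omega_f\btheta_f$.

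A cleaner approach is to perturb $\Sig_r$, so that the hypotheses are distinguishable only through the covariates of $\widetilde{\calD}_r$. Choose $\Sig_{r,j}=I_p+\eta\,\bm v_j\bm v_j^{\top}$-type perturbations, or diagonal perturbations $\Sig_{r,j}=\mathrm{diag}(1+\eta(2\bm\tau_j-1))$, with $\eta\asymp\sqrt{p/\tn_r}\cdot$(small constant). Then set $\btheta_f=\btheta_r+\delta\bm u$ for a fixed direction $\bm u$, and define $\btheta_{r,j}$ so that the population pre-trained target
\[
(\omega_r\Sig_{r,j}+\omega_f I)^{-1}(\omega_r\Sig_{r,j}\btheta_{r,j}+\omega_f\btheta_f)
\]
stays the same for all $j$. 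With the forget distribution also identical, this gives
\[
\btheta_{r,j}-\btheta_{r,k}\approx \omega_f(\Sig_{r,j}^{-1}-\Sig_{r,k}^{-1})\,\delta\bm u .
\]
To give this difference norm of order $\omega_f\delta\eta\sqrt{p}$ across many coordinates, $\bm u$ should be spread out, for instance $\bm u=\bm 1/\sqrt p$. In that case the difference has order $\omega_f\delta\eta\|\bm\tau_j-\bm\tau_k\|_2/\sqrt p\cdot\sqrt{p}$, and with $\eta\asymp\sqrt{p/\tn_r}/\sqrt p$ per coordinate, this should be checked to give $\omega_f\delta\sqrt{p/\tn_r}$. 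The constraint $\omega_f\delta\le1$ keeps the hypotheses inside $\Theta_p(\delta)$ with bounded eigenvalues.

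The KL divergence then has three parts:
\begin{itemize}
\item From $\widetilde{\calD}_r$ (covariates and responses), the contribution is $\lesssim \tn_r\eta^2 p$, which is small by the choice of $\eta$.
\item From $\calD_f$, it is zero.
\item From $\hat{\btheta}_p$, it is the delicate part.
\end{itemize}
For $\hat{\btheta}_p$, one shows via a perturbation analysis that its distribution is approximately $N(\bar{\btheta},\,\cdot/N)$ with common center under all hypotheses, and that the covariance differences are $O(\eta)$. Its KL contribution is then $O(p\eta^2)$ plus an approximation error. The approximation error is controlled by $p\le c_0\sqrt N$, which makes the non-Gaussian remainder of $\hat{\btheta}_p$ negligible in total variation.

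The hardest part is this final approximation: rigorously bounding the divergence between the laws of $\hat{\btheta}_p$ under different hypotheses jointly with the other observations. I would handle this by conditioning on $\calD_f$ and the subsample. Alternatively, I would bound the total variation between the law of $\hat{\btheta}_p$ and its Gaussian approximation using a Berry--Esseen-type bound in dimension $p$, whose error is of order $p^2/N$, and then apply Fano in its total-variation form.
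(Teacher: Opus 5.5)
\paragraph{Verdict.} Your oracle-term argument is sound and is essentially the paper's Part~1. The paper uses Assouad's lemma plus the high-probability conversion of Ma et al.\ instead of Fano, but that difference is immaterial. The gap is in the unlearning-cost construction: the diagonal perturbation you actually specify cannot produce the rate $\omega_f\delta\sqrt{p/\tn_r}$.

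\paragraph{Why the diagonal construction fails.} Take $\Sig_{r,j}=\mathrm{diag}(1+\eta(2\bm{\tau}_j-1))$ and $\bm{u}=\bm{1}/\sqrt p$. The vector $\omega_f(\Sig_{r,j}^{-1}-\Sig_{r,k}^{-1})\delta\bm{u}$ has entries of size about $2\omega_f\eta\delta/\sqrt p$, exactly on the coordinates where $\bm{\tau}_j$ and $\bm{\tau}_k$ differ. Its norm is therefore $\asymp\omega_f\delta\eta\|\bm{\tau}_j-\bm{\tau}_k\|_2/\sqrt p\asymp\omega_f\delta\eta$; the extra factor $\sqrt p$ in your display is not there. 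On the other side, the KL contribution of $\widetilde{\calD}_r$ is $\asymp\tn_r\eta^2\|\bm{\tau}_j-\bm{\tau}_k\|_2^2\asymp\tn_r\eta^2 p$. Fano needs this to be a small multiple of $\log M\asymp p$, which forces $\eta\lesssim\tn_r^{-1/2}$. The separation is then only $\omega_f\delta/\sqrt{\tn_r}$, a factor $\sqrt p$ short. Your other choice, $\eta\asymp\sqrt{p/\tn_r}$, gives the right separation but a KL of order $p^2$, so the hypotheses become distinguishable. The obstruction is structural, and Assouad does no better: perturbing the $l$-th diagonal entry only acts on the $l$-th entry of $\delta\bm{u}$, which is $\delta/\sqrt p$.

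\paragraph{The missing idea.} You need to couple the covariance perturbation to the direction of $\btheta_f$. The paper takes $\Sig_r^{(j)}=I_p+\bpsi^{(j)}\btheta_f^{\top}+\btheta_f(\bpsi^{(j)})^{\top}$, with $\bpsi^{(j)}=\frac{c_r}{\sqrt{\tn_r}\delta}\bm{\phi}^{(j)}$ and $\btheta_f\propto\bm{1}_p$, $\|\btheta_f\|_2\asymp\delta$.
\begin{itemize}
\item The vector $(\Sig_r^{(j)}-\Sig_r^{(k)})\btheta_f$ contains the term $\|\btheta_f\|_2^2(\bpsi^{(j)}-\bpsi^{(k)})$. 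So every flipped coordinate moves $\btheta_r$ by $\asymp\omega_f\delta/\sqrt{\tn_r}$, giving $\omega_f\delta\sqrt{p/\tn_r}$ in total.
\item Meanwhile $\|\Sig_r^{(j)}-\Sig_r^{(k)}\|_F^2\lesssim\|\btheta_f\|_2^2\|\bpsi^{(j)}-\bpsi^{(k)}\|_2^2\asymp c_r^2/\tn_r$ per flip, so $\widetilde{\calD}_r$ contributes only $O(c_r^2)$.
\end{itemize}
Your rank-one option $I_p+\eta\bm{v}_j\bm{v}_j^{\top}$ with $\eta\asymp\sqrt{p/\tn_r}$ works only if every $\bm{v}_j$ has a constant-order component along $\btheta_f$. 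That is precisely this idea, and you left $\bm{v}_j$ unspecified.

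\paragraph{Invariance and the conditioning step.} The paper also pins the pooled population target at $\btheta_p=0$, so that $\Sig_r^{(j)}\btheta_r^{(j)}=-\frac{\omega_f}{\omega_r}\btheta_f$ for every $j$. It further chooses $(\sig_r^{(j)})^2$ so that $\textup{Var}(y^{(r)})=5$. Then the law of $(\bx^{(r)},y^{(r)})$ changes only in its $\bx$-block, and the residuals $\bz_r=\by_r$ have the same law under every hypothesis. If your common target $\bar{\btheta}$ is nonzero, both the cross-covariance $\Sig_{r,j}\btheta_{r,j}$ and $\textup{Var}(y^{(r)})$ move with $j$.

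This invariance is what makes your intended conditioning step work:
\begin{itemize}
\item Given $(\calD_f,\widetilde{\calD}_r,\check{\bz}_r)$, the linearization $\mathring{\btheta}_p$ is exactly Gaussian. Its conditional mean is not common across hypotheses, but it differs only through $\Sig_p^{(j)}$, and the paper bounds that difference.
\item The remainder $\hat{\btheta}_p-\mathring{\btheta}_p$ is handled by conditioning on $\check{X}_r$, under which $\hat{\btheta}_p$ is exactly Gaussian, so no Berry--Esseen argument is needed.
\end{itemize}
This yields the $\sqrt{p(p+\log\tn_r)/N}$ total-variation error, which is where $p\le c_0\sqrt N$ enters.
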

The lower bound shows that any unlearning procedure based on $(\hat{\btheta}_p,\calD_f,\widetilde{\calD}_r)$ must incur an additional error proportional to $\omega_f\delta\sqrt{\frac{p}{\tilde{n}_r}}$ under the conditions of Theorem \ref{thm-minimax}. This term captures the intrinsic difficulty of correcting the bias induced by the forget samples when only partial access to the retained data is available. 
This result shows that the proposed $\hat{\btheta}_r^{(\uls)}$ is minimax optimal whenever the ``unlearning cost" is not overwhelmingly large ($\omega_f\delta\leq 1$). To accommodate the extreme scenarios $\omega_f\delta\gg 1$, we show that a regularized version of $\hat{\btheta}_r^{(\uls)}$,  defined in (\ref{ul-loss2}), can achieve the minimax optimal rate even when $\omega_f\delta\gg 1$.

The proof of Theorem \ref{thm-minimax} presents a unique technical challenge due to the complex dependency between the pre-trained estimator $\hat{\btheta}_p$ and the observed data $\{\calD_f, \widetilde{\calD}_r\}$. Standard lower bound techniques typically assume independent observations. To overcome the complexity, we employ a novel perturbation analysis of the pre-trained estimator's distribution, decoupling the dependency structure.This technique may be of independent interest for other problems when the observed statistics have complex dependency. 

\subsection{Comparison to baseline methods}
\label{sec-benchmark}
To better illustrate the theoretical advantages of the proposed methods,
we review some existing methods that can  also be applied to the unlearning problem under the set-up introduced in Section \ref{sec2-1}. The counterparts for comparison include the pre-trained estimate, the OLS estimate based on $\widetilde{\calD}_r$, and transfer learning method for linear models.

Given the observations $\{\hat{\btheta}_p,\calD_f,\widetilde{\calD}_r\}$, a direct approach to estimate $\btheta_r$ is the least square estimate based on subsampled remaining data:
\begin{align}
\tilde{\btheta}_r^{(\ols)}=\left\{(\widetilde{X}^{(r)})^{\top}\widetilde{X}^{(r)}\right\}^{-1}(\widetilde{X}^{(r)})^{\top}\tilde{\bm{y}}^{(r)}.
\end{align}
It is easy to see that $\tilde{\btheta}_r^{(\ols)}$ is unbiased for $\btheta_r$ but its variance is proportional to $p/\tn_r$, which can be large when $\tilde{n}_r$ is relatively small.

The second candidate estimate is the pre-trained estimate under squared loss, which is
\[\hat{\btheta}_p=\left\{(X^{(r)})^{\top}X^{(r)}+(X^{(f)})^{\top}X^{(f)}\right\}^{-1}\left\{(X^{(r)})^{\top}\by^{(r)}+(X^{(f)})^{\top}\by^{(f)}\right\}.
\] We know that it has relatively small variance but can have large bias when there is a large model distinction between $\calD_r$ and $\calD_f$.

The third benchmark method is the transfer learning estimator. We may treat $\hat{\btheta}_p$ as an estimate from the source data and treat $\widetilde{\calD}_r$ as samples from the target data.
Motivated by the idea of transfer learning, we define
\begin{align}
\label{eq-tl}
   \hat{\btheta}^{(\textup{tl})}_r=\argmin_{\btheta\in\R^p} \left\{\frac{1}{\tn_r}\ell(\btheta;\widetilde{\calD}_r)+\lambda^{(\textup{tl})} \|\btheta-\hat{\btheta}_p\|_2^2\right\},
\end{align}
where $\lambda^{(\textup{tl})}>0$ is a tuning parameter. We consider the Ridge penalty to induce similarity between the pre-trained estimate $\hat{\btheta}_p$ and the target parameter. 
Although transfer learning adapts to the remaining data efficiently, it ignores the information of the forget data $\calD_f$. We will show that $\hat{\btheta}_r^{(\textup{tl})}$ can be sub-optimal for unlearning when the distribution shift is significant.

We treat the re-trained estimate based on $\calD_r$ as the oracle baseline method. By (\ref{eq-rtr}), under squared loss,
\begin{align}
\label{eq-ols}
   \mathring{\btheta}_r^{(\rtr)}=((X^{(r)})^{\top}X^{(r)})^{-1}(X^{(r)})^{\top}\bm{y}^{(r)}.
\end{align}

\begin{lemma}[Convergence rate of benchmark estimators]
\label{tlem2}
Assume Condition \ref{cond1}. Given that $p=o(\min\{\tilde{n}_r,N_f\})$, with probability at least $1-\exp\{-c_1p\}$, we have
\begin{align*}
&\|\mathring{\btheta}_r^{(\rtr)}-\btheta_r\|_2\leq C\sqrt{\frac{p}{N_r}}\\
&\|\hat{\btheta}_p-\btheta_r\|_2\leq C(\sqrt{\frac{p}{N}}+\omega_f\delta)\\
&\|\tilde{\btheta}_r^{(\ols)}-\btheta_r\|_2\leq C\sqrt{\frac{p}{\tilde{n}_r}}.
\end{align*}
If we take $\lambda^{(\textup{tl})}=\sqrt{p/\tn_r}/(\sqrt{p/N}+\omega_f\delta)$ in (\ref{eq-tl}), then
\[
 \|\hat{\btheta}_r^{(\textup{tl})}-\btheta_r\|_2\leq C\min\left\{\sqrt{\frac{p}{\tn_r}},\sqrt{\frac{p}{N}}+\omega_f\delta\right\}.
\]
\end{lemma}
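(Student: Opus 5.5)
The plan is to treat the four estimators one at a time, using the same two tools throughout. The first tool is a sub-Gaussian covariance concentration bound: by Condition \ref{cond1} and $p=o(\min\{\tn_r,N_f\})$, with probability at least $1-\exp\{-c_1p\}$ we have $\|\widehat{\Sig}_r-\Sig_r\|_2\lesssim\sqrt{p/N_r}$, $\|\widetilde{\Sig}_r-\Sig_r\|_2\lesssim\sqrt{p/\tn_r}$ and $\|\widehat{\Sig}_f-\Sig_f\|_2\lesssim\sqrt{p/N_f}$. In particular all these sample covariances have eigenvalues in $[c/2,2c_{\Sig}]$.

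The second tool is a bound on the score noise. Define population residuals $\eps_i^{(r)}=y_i^{(r)}-(\bx_i^{(r)})^\top\btheta_r$ and $\eps_i^{(f)}=y_i^{(f)}-(\bx_i^{(f)})^\top\btheta_f$. These need not be independent of $\bx$, since the model may be misspecified. Still, by the definition of $\btheta_r$ and $\btheta_f$ as population minimizers, $\E[\sum_i\bx_i^{(r)}\eps_i^{(r)}]=0$, and similarly for the forget data. Each summand $\bx_i\eps_i$ is sub-exponential as a product of sub-Gaussians. An $\epsilon$-net argument on the unit sphere, combined with Bernstein's inequality, then gives $\|(X^{(r)})^\top\bm\eps^{(r)}\|_2\lesssim\sqrt{pN_r}$, $\|(\widetilde X^{(r)})^\top\tilde{\bm\eps}^{(r)}\|_2\lesssim\sqrt{p\tn_r}$ and $\|(X^{(f)})^\top\bm\eps^{(f)}\|_2\lesssim\sqrt{pN_f}$, with probability at least $1-\exp\{-c_1p\}$. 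The deviation level is taken of order $p$; this is fine because $p\le n$.

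With these two tools, the retraining and subsample OLS bounds are immediate. We write $\mathring{\btheta}_r^{(\rtr)}-\btheta_r=\widehat{\Sig}_r^{-1}(X^{(r)})^\top\bm\eps^{(r)}/N_r$ and bound it by $C\sqrt{p/N_r}$. The same argument applied to $\widetilde{\calD}_r$ gives $C\sqrt{p/\tn_r}$ for $\tilde{\btheta}_r^{(\ols)}$.

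For the pre-trained estimate, write $\by^{(f)}=X^{(f)}\btheta_r+X^{(f)}(\btheta_f-\btheta_r)+\bm\eps^{(f)}$. Setting $\widehat{\Sig}=\omega_r\widehat{\Sig}_r+\omega_f\widehat{\Sig}_f$, this gives
\[
\hat{\btheta}_p-\btheta_r=\widehat{\Sig}^{-1}\Bigl\{\tfrac1N(X^{(r)})^\top\bm\eps^{(r)}+\tfrac1N(X^{(f)})^\top\bm\eps^{(f)}+\omega_f\widehat{\Sig}_f(\btheta_f-\btheta_r)\Bigr\}.
\]
Since $\widehat{\Sig}$ is well conditioned, the three terms are bounded by $\sqrt{pN_r}/N$, $\sqrt{pN_f}/N$ and $\omega_f\delta$ respectively. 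Together these give $C(\sqrt{p/N}+\omega_f\delta)$.

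For the transfer learning estimator, the first-order condition of (\ref{eq-tl}) gives $(\widetilde{\Sig}_r+\lambda I)(\hat{\btheta}^{(\textup{tl})}_r-\btheta_r)=\tilde X^{\top}\tilde{\bm\eps}/\tn_r+\lambda(\hat{\btheta}_p-\btheta_r)$. Absorbing constant factors into $\lambda$, this yields
\[
\|\hat{\btheta}^{(\textup{tl})}_r-\btheta_r\|_2\le \frac{C\sqrt{p/\tn_r}+\lambda b}{c+\lambda},
\]
where $b=C(\sqrt{p/N}+\omega_f\delta)$ is the bound just obtained for the pre-trained estimate. With $\lambda=\sqrt{p/\tn_r}/b$, write $a=\sqrt{p/\tn_r}$. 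The numerator is then at most $Ca+a$, so the bound is at most $Ca/(c+a/b)$. If $a\le b$, this is at most $Ca/c\lesssim a$. If $a>b$, it is at most $Ca\cdot b/a=Cb$. In both cases the bound is $\lesssim\min\{a,b\}$.

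The only step needing care is the score-noise concentration under misspecification and heterogeneity. Because the summands $\bx_i\eps_i$ are not identically distributed and have nonzero conditional means, one should rely only on their zero total mean and apply a Bernstein bound for independent, non-identical sub-exponential vectors. The mean-zero assumption on the forget covariates is not needed for this. A union bound over the finitely many events completes the argument, at the cost of adjusting $c_1$.
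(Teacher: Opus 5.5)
Your proposal is correct and follows essentially the paper's route. The paper uses the same bias-plus-noise decomposition of $\hat{\btheta}_p-\btheta_r$, controlled by covariance concentration and a net-plus-Bernstein bound on the score (it bounds $\omega_r\widehat{E}_r+\omega_f\widehat{E}_f$ jointly rather than the two scores separately), and the same closed-form ridge solution for $\hat{\btheta}_r^{(\textup{tl})}$, where your explicit two-case analysis supplies what the paper only asserts.

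One small caveat, which the paper also glosses over, concerns the subsample. Under heterogeneity, $\sum_{i\in\widetilde{\mathcal{N}}_r}\E[\bx_i^{(r)}\eps_i^{(r)}]$ and $\E[\widetilde{\Sig}_r\mid\widetilde{\mathcal{N}}_r]-\Sig_r$ vanish only on average over the random subsample. So ``the same argument'' for $\tilde{\btheta}_r^{(\ols)}$ needs an extra sampling-without-replacement concentration step, which yields the same $\sqrt{p/\tn_r}$ rate.
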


From Lemma \ref{tlem2}, we see that the convergence rate of transfer learning estimate $\hat{\btheta}_r^{(\textup{tl})}$ is the minimum of the rates of $\tilde{\btheta}_r^{(\ols)}$ and $\hat{\btheta}_p$ with proper choice of $\lambda^{(\textup{tl})}$. 
However, its convergence rate involves a bias term $\omega_f\delta$. If $\omega_f\delta\gg \sqrt{p/\tn_r}$, then
the transfer learning estimate has the same convergence rate as $\tilde{\btheta}_r^{(\ols)}$, which fails to borrow information from the pre-trained estimate.
When $\tn_r\ll N_r$, the convergence rate of three baseline methods are much slower than the rate of the re-trained estimator.

We now provide theoretical comparisons between $\hat{\btheta}^{(\uls)}_r$ and the benchmark estimators studied above. We see from Theorem \ref{thm2} and Theorem \ref{thm-minimax} that $\hat{\btheta}_r^{(\uls)}$ is no worse than the three baseline methods and is minimax optimal under mild conditions.
Comparing with the transfer learning estimate, especially, the bias term of ULS is scaled by $\sqrt{p/\tn_r}$, which is significantly smaller.

To summarize, the propose unlearning estimate provides a computationally efficient alternative to debias the pre-trained estimate which only leverages two small datasets $\calD_f$ and $\widetilde{\calD}_r$.

\section{Statistical inference}
\label{sec5}
{While current machine unlearning literature primarily focuses on approximating the point estimates of a fully retrained model, it frequently overlooks the uncertainty inherent in the post-deletion parameters. Establishing a robust framework for statistical inference is crucial to quantify this resulting variability, ensuring that downstream predictions and confidence intervals remain statistically valid after data removal.}

In this section, we develop asymptotically valid inference procedures for linear functionals of parameter $\btheta_r$.
For any given constant vector $\bm{v}\in\R^p$, we study statistical inference for $\bm{v}^{\top}\btheta_r$ enabling hypothesis testing and confidence interval construction without full retraining.  The main challenge is that the unlearning estimator depends on both the forget data and the covariates from subsampled remaining data, creating additional variability beyond the standard regression noise.

Let $\eps_i^{(r)}=y^{(r)}_i-(\bx_i^{(r)})^{\top}\btheta_r$. For $i=1,\dots, N_r$, define the noise terms
\[
a^{(r)}_i=\bm{v}^{\top}\Sig_r^{-1}\bx^{(r)}_i\eps^{(r)}_i~~\text{and}~~b_i^{(r)}=-\bm{v}^{\top}\Sig_r^{-1}(\bx^{(r)}_i(\bx^{(r)}_i)^{\top}-\Sig_r)(\btheta_r-\btheta_p),
\]
where $a_i^{(r)}$ captures the variance of the outcome noise and $b_i^{(r)}$ captures the additional variance introduced by approximating the population Hessian using the subsample $\widetilde{\calD}_r$.

\begin{condition}[Regularity conditions for asymptotic normality]
\label{cond-inf}
 There exists some positive constant $c$ such that $\min_{1\leq i\leq N_r}\textup{Var}(a_i^{(r)})\geq c\|\bm{v}\|_2^2$ and $\min_{1\leq i\leq N_r}\textup{Var}(b_i^{(r)})\geq c\omega_f^2\delta^2\|\bm{v}\|_2^2$.
There exists some positive constant $\rho$ such that
\begin{align*}
\max_{1\leq i\leq N_r}\frac{|\textup{Cov}(a^{(r)}_i,b_i^{(r)})|}{\textup{Var}^{1/2}(a_i^{(r)})\textup{Var}^{1/2}(b_i^{(r)})}
\leq \rho<1.
\end{align*}
\end{condition}
Condition \ref{cond-inf} puts mild conditions on the variance and covariance of $a_i^{(r)}$ and $b_i^{(r)}$ ensuring the non-degeneracy of the asymptotic variance. If $\E[\eps_i^{(r)}|\bx_i^{(r)}]=0$, then the above inequality holds with $\rho=0$.  As we do not assume a linear relationship between covariates and outcomes in (\ref{def-thetar}) and the observations are allowed to be heterogeneous within $\calD_r$, Condition \ref{cond-inf} guarantees that the variance patterns of the residuals are regular.
If $\bx_i^{(r)}$ and $\eps_i^{(r)}$ are independent and jointly normal with positive definite covariance matrix, then Condition \ref{cond-inf} automatically holds.  Let $\widetilde{\mathcal{N}}_r\subseteq[N_r]$ denote the index set corresponding to $\widetilde{\calD}_r$.
\begin{theorem}
\label{thm-inf}
Assume Conditions \ref{cond1} and \ref{cond-inf}, and $p^2=o(\min\{\tn_r,N_f\})$. For any fixed $\bm{v}\in\R^p$, it holds that
\begin{align*}
\frac{\bm{v}^{\top}(\hat{\btheta}^{(\uls)}_r-\btheta_r)}{V_r^{1/2}}\xrightarrow{D}N(0,1)
\end{align*}
for
\[
   V_r=\frac{1}{N_r^2}\sum_{i\in\widetilde{\mathcal{N}}_r}\E[(a^{(r)}_i+\frac{\tn_r-N_r}{\tn_r}b^{(r)}_i)^2]+\frac{1}{N_r^2}\sum_{i\in \mathcal{N}_r\setminus \widetilde{\mathcal{N}}_r}\E[(a^{(r)}_i+b^{(r)}_i)^2].
\]
\end{theorem}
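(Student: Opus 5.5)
The plan is to derive an \emph{exact} algebraic representation of $\hat{\btheta}^{(\uls)}_r-\btheta_r$, isolate a linear term that is a sum of independent summands, and apply a Lyapunov CLT to it. The remainders are then controlled by the rates of Theorem \ref{thm2} and Lemma \ref{tlem2}.

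\textbf{Step 1 (exact representation).} The full-data stationarity of $\hat{\btheta}_p$ gives
\[
(X^{(f)})^{\top}(\by^{(f)}-X^{(f)}\hat{\btheta}_p)=-(X^{(r)})^{\top}(\by^{(r)}-X^{(r)}\hat{\btheta}_p).
\]
Substituting this into (\ref{unlearn-est}), and writing $\widehat{\Sig}_r=(X^{(r)})^{\top}X^{(r)}/N_r$, we get
\[
\hat{\btheta}^{(\uls)}_r=\hat{\btheta}_p+\widetilde{\Sig}_r^{-1}\frac{1}{N_r}\sum_{i=1}^{N_r}\bx_i^{(r)}\bigl\{\eps_i^{(r)}+(\bx_i^{(r)})^{\top}(\btheta_r-\hat{\btheta}_p)\bigr\}.
\]
This rearranges to
\[
\hat{\btheta}^{(\uls)}_r-\btheta_r=\widetilde{\Sig}_r^{-1}\Bigl\{\frac{1}{N_r}\sum_{i=1}^{N_r}\bx_i^{(r)}\eps_i^{(r)}+(\widehat{\Sig}_r-\widetilde{\Sig}_r)(\btheta_r-\hat{\btheta}_p)\Bigr\}.
\]
The outcome $\tilde{\by}^{(r)}$ has disappeared, but the full-data noise $\sum_i\bx_i^{(r)}\eps_i^{(r)}$ enters through $\hat{\btheta}_p$. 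This is exactly the $a_i^{(r)}$ part of $V_r$.

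Let $\btheta_p$ denote the population pre-trained parameter, the minimizer of the limiting full objective. I would then replace $\widetilde{\Sig}_r^{-1}$ by $\Sig_r^{-1}$ and $\hat{\btheta}_p$ by $\btheta_p$, and write the leading term as $\bm{v}^{\top}\Sig_r^{-1}\{\cdots\}$ with $\btheta_p$ in place of $\hat{\btheta}_p$. The matrix $\widehat{\Sig}_r-\widetilde{\Sig}_r$ expands as
\[
\widehat{\Sig}_r-\widetilde{\Sig}_r=\Bigl(\frac{1}{N_r}-\frac{1}{\tn_r}\Bigr)\sum_{i\in\widetilde{\mathcal N}_r}(\bx_i\bx_i^{\top}-\Sig_r)+\frac{1}{N_r}\sum_{i\notin\widetilde{\mathcal N}_r}(\bx_i\bx_i^{\top}-\Sig_r).
\]
So the leading term equals $N_r^{-1}\sum_{i\in\widetilde{\mathcal N}_r}(a_i^{(r)}+\frac{\tn_r-N_r}{\tn_r}b_i^{(r)})+N_r^{-1}\sum_{i\notin\widetilde{\mathcal N}_r}(a_i^{(r)}+b_i^{(r)})$, up to the sign convention for $\btheta_r-\btheta_p$ in $b_i^{(r)}$. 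I would check this sign carefully against the stated $V_r$.

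\textbf{Step 2 (CLT).} I would condition on the subsample index set $\widetilde{\mathcal N}_r$, which is independent of the data. The leading term is then a sum of independent mean-zero variables, with variance exactly $V_r$. Condition \ref{cond-inf} (variance floors and $\rho<1$) gives
\[
V_r\gtrsim \|\bm v\|_2^2\Bigl(\frac{1}{N_r}+\frac{\omega_f^2\delta^2}{\tn_r}\Bigr),
\]
using $\|\btheta_r-\btheta_p\|_2\asymp\omega_f\delta$. Sub-Gaussianity bounds the third moments by the matching scale. The Lyapunov ratio is then $O(\tn_r^{-1/2})\to0$, since the subsample summands carry weight about $1/\tn_r$.

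\textbf{Step 3 (remainders), the main obstacle.} Two remainders must be shown to be $o_P(V_r^{1/2})$.

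The first is $\bm v^{\top}(\widetilde{\Sig}_r^{-1}-\Sig_r^{-1})\{\cdots\}$. Its operator factor is $O_P(\sqrt{p/\tn_r})$. The bracket is $O_P(\sqrt{p/N_r}+\omega_f\delta\sqrt{p/\tn_r})$ by the argument behind Theorem \ref{thm2}. The product is $O_P\bigl(\frac{p}{\sqrt{\tn_r}}(N_r^{-1/2}+\omega_f\delta\,\tn_r^{-1/2})\bigr)$, which is $o(V_r^{1/2})$ precisely because $p^2=o(\tn_r)$. This is where the $p^2$ condition is consumed.

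The second is $\bm v^{\top}\Sig_r^{-1}(\widehat{\Sig}_r-\widetilde{\Sig}_r)(\btheta_p-\hat{\btheta}_p)$. Lemma \ref{tlem2}'s argument gives $\|\hat{\btheta}_p-\btheta_p\|_2=O_P(\sqrt{p/N})$, and $\|\widehat{\Sig}_r-\widetilde{\Sig}_r\|=O_P(\sqrt{p/\tn_r})$. The product is $p/\sqrt{N\tn_r}=o(N_r^{-1/2})$. The role of $N_f$ in the rate condition enters through the control of $\widehat{\Sig}_f$ inside $\hat{\btheta}_p-\btheta_p$.

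The delicate point is to use the crude operator-norm bounds only on remainders, never on the leading linear term, and to get the lower bound on $V_r$ uniformly from Condition \ref{cond-inf}. Slutsky's lemma then yields the stated convergence, first conditionally on $\widetilde{\mathcal N}_r$ and hence unconditionally.
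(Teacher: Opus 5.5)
Your proposal is correct and follows essentially the paper's route: the linear term you isolate is exactly the paper's $T_3+T_4$ (since $\omega_f\Sig_p^{-1}\Sig_f(\btheta_r-\btheta_f)=\btheta_r-\btheta_p$), the CLT is Lyapunov's with the variance floor taken from Condition \ref{cond-inf}, and $p^2=o(\tn_r)$ is used up in crude operator-norm bounds on the remainders. The only real difference is that you reach this linearization through the pre-trained stationarity identity $\hat{\btheta}^{(\uls)}_r-\btheta_r=\widetilde{\Sig}_r^{-1}\{\widehat{E}_r+(\widehat{\Sig}_r-\widetilde{\Sig}_r)(\btheta_r-\hat{\btheta}_p)\}$ rather than through the paper's (\ref{decomp1}) via $\mathring{\btheta}^{(\rtr)}_r$, which leaves you two remainders instead of the paper's $rem_1$ and three-part $rem_2$; your sign check also comes out in favour of $b_i^{(r)}$ without the leading minus, which is the convention the paper's own proof and $\hat{b}_i^{(r)}$ actually use.
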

In Theorem \ref{thm-inf}, we prove the asymptotic normality of $\bm{v}^{\top}\hat{\btheta}^{(\uls)}_r$ for any fixed $\bm{v}\in\R^p$.
As shown in the proof, the variance term $V_r= O( \|\bm{v}\|_2(N_r^{-1}+\omega_f^2\delta^2/\tn_r))$, which matches the convergence rate proved in Theorem \ref{thm2}. 
 
Next, we develop a consistent estimate of $V_r$. Since the second term in $V_r$ involves the samples in $\calD_r\setminus \widetilde{\calD}_r$, which are unobserved, we cannot compute it directly. Nevertheless, as $\widetilde{\calD}_r$ is a random sample from $\calD_r$, we can develop a consistent estimate based on $\widetilde{\calD}_r$.
Specifically, define
\[
  \widehat{V}_r=\frac{1}{N_r^2}\sum_{i\in\widetilde{\mathcal{N}}_r}(\hat{a}^{(r)}_i+\frac{\tn_r-N_r}{\tn_r}\hat{b}^{(r)}_i)^2+\frac{N_r-\tn_r}{N_r^2\tn_r}\sum_{i\in \widetilde{\mathcal{N}}_r}(\hat{a}^{(r)}_i+\hat{b}^{(r)}_i)^2,
\]
where
\begin{align*}
\hat{a}^{(r)}_i&=\bm{v}^{\top}(\widetilde{\Sig}_r)^{-1}\tilde{\bx}^{(r)}_i(\tilde{y}_i^{(r)}-(\tx_i^{(r)})^{\top}\hat{\btheta}_r^{(\uls)})\\
\hat{b}^{(r)}_i&=\bm{v}^{\top}(\widetilde{\Sig}_r)^{-1}(\tilde{\bx}^{(r)}_i(\tilde{\bx}^{(r)}_i)^{\top}-\widetilde{\Sig}_r)(\hat{\btheta}_r^{(\uls)}-\hat{\btheta}_p).
\end{align*}
We prove the consistency of $\widehat{V}_r$ in the next lemma.
\begin{lemma}
\label{lem-var}
Assume Conditions \ref{cond1}, \ref{cond-inf}, and $p^2=o(\min\{\tn_r,N_f\})$. For any fixed $\bm{v}\in\R^p$, it holds that
\[
   \frac{|\widehat{V}_r-V_r|}{V_r}=o(1)
\]
with probability at least $1-\exp\{-c_1\log \tn_r\}$.
\end{lemma}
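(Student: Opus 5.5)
The plan is to compare $\widehat V_r$ to an oracle version $\bar V_r$, and then compare $\bar V_r$ to $V_r$. The oracle $\bar V_r$ is $\widehat V_r$ with $\hat a_i^{(r)},\hat b_i^{(r)}$ replaced by the true $a_i^{(r)},b_i^{(r)}$. We then show two things:
\[
|\bar V_r-V_r|=o(V_r) \qquad\text{and}\qquad |\widehat V_r-\bar V_r|=o(V_r).
\]
Both are measured against the scale $V_r\asymp \|\bm v\|_2^2(N_r^{-1}+\omega_f^2\delta^2/\tn_r)$. The lower bound on this scale comes from Condition \ref{cond-inf}: the correlation bound $\rho<1$ gives $\E[(a_i+cb_i)^2]\ge (1-\rho)(\textup{Var}(a_i)+c^2\textup{Var}(b_i))$. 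The upper bound follows from sub-Gaussianity.

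\textbf{Oracle step.} The first sum in $\bar V_r$ is a sum of independent terms and is exactly unbiased for the first sum in $V_r$. For the second sum, use that $\widetilde{\mathcal N}_r$ is a uniform random subsample of $[N_r]$. Then $\frac{N_r-\tn_r}{\tn_r}\sum_{i\in\widetilde{\mathcal N}_r}(a_i+b_i)^2$ is (conditionally) unbiased for $\sum_{i\notin\widetilde{\mathcal N}_r}\E(a_i+b_i)^2$ up to a sampling fluctuation of the type
\[
\frac{N_r-\tn_r}{\tn_r}\sum_{i\in\widetilde{\mathcal N}_r}\E(a_i+b_i)^2-\sum_{i\notin\widetilde{\mathcal N}_r}\E(a_i+b_i)^2 .
\]
This fluctuation is controlled by a Hoeffding/Serfling bound for sampling without replacement. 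The summands are products of sub-Gaussians, hence sub-exponential, so Bernstein's inequality gives relative errors of order $\sqrt{\log\tn_r/\tn_r}$. The weights are of size $(N_r/\tn_r)^2$ for the $b$ part and $N_r/\tn_r$ for the $a$ part. After dividing by $N_r^2$, each term is compared to its own block of $V_r$, namely $\tn_r^{-1}\omega_f^2\delta^2$ for the $b$ part and $N_r^{-1}$ for the $a$ part. So the relative error is $o(1)$ with probability $1-\exp\{-c\log\tn_r\}$.

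\textbf{Plug-in step.} This is the main obstacle. Write $\hat a_i-a_i$ and $\hat b_i-b_i$ as perturbations coming from three sources.
\begin{itemize}
\item[(i)] $\widetilde\Sig_r^{-1}-\Sig_r^{-1}$, of operator-norm order $\sqrt{p/\tn_r}$.
\item[(ii)] $\hat{\btheta}_r^{(\uls)}-\btheta_r$, bounded by Theorem \ref{thm2}.
\item[(iii)] $(\hat{\btheta}_r^{(\uls)}-\hat{\btheta}_p)-(\btheta_r-\btheta_p)$.
\end{itemize}
For (iii), use \eqref{unlearn-est}. Together with Lemma \ref{tlem2} and the concentration of $\widehat\Sig_f,\widetilde\Sig_r$, it gives error $O(\sqrt{p/\min\{\tn_r,N_f\}}\,(\omega_f\delta+1))$. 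This error is relative to $\|\btheta_r-\btheta_p\|\asymp\omega_f\delta$, where $\btheta_p$ is the population pooled parameter.

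We then bound $\frac{1}{\tn_r}\sum_i(\hat a_i-a_i)^2$ and the analogous average for $b$ via Cauchy--Schwarz. The needed inputs are uniform bounds $\max_i\|\tx_i\|_2\lesssim\sqrt{p+\log\tn_r}$ and $\|\frac1{\tn_r}\sum_i\tx_i\tx_i^\top\eps_i^2\|\lesssim 1$. This yields relative errors of order $p\sqrt{1/\min\{\tn_r,N_f\}}$, up to logs. These vanish precisely under $p^2=o(\min\{\tn_r,N_f\})$, which is where that condition is used.

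The delicate point is the $b$-part, whose leading scale is $\omega_f\delta$. If $\omega_f\delta$ is small, the absolute error in $\hat b$ may be of order $\sqrt{p/\min\{\tn_r,N_f\}}$ rather than proportional to $\omega_f\delta$. This must be checked against $V_r$. Here I would use that $V_r\gtrsim N_r^{-1}$, so the $b$-error contributes $(N_r/\tn_r)^2 N_r^{-2}\tn_r\cdot p/\min\{\tn_r,N_f\}\cdot\tn_r^{-1}$. I would compare this to $N_r^{-1}+\omega_f^2\delta^2/\tn_r$, splitting into the cases $\omega_f\delta\gtrsim\sqrt{p/N_f}$ and its complement.

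The cross terms between the oracle and perturbed parts follow from Cauchy--Schwarz once both are shown to be $o(V_r)$. Combining the two steps with a union bound gives the stated probability.
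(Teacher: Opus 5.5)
Your overall structure matches the paper's. You compare $\widehat V_r$ with the oracle built from the true $a_i^{(r)},b_i^{(r)}$ (the paper's $\widetilde V_r$), compare that with $V_r$, and control the plug-in error through $\sum_{i\in\widetilde{\mathcal N}_r}(\hat a_i^{(r)}-a_i^{(r)})^2$ and $\tn_r^{-2}\sum_{i\in\widetilde{\mathcal N}_r}(\hat b_i^{(r)}-b_i^{(r)})^2$, with Cauchy--Schwarz for the cross terms.

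Your oracle step is organized differently. You condition on $\widetilde{\mathcal N}_r$ and combine concentration of the independent summands with a without-replacement bound on the deterministic weights $\E(a_i^{(r)}+b_i^{(r)})^2$. The paper instead inserts $\check V_r$, handles $\check V_r-V_r$ by Chebyshev with fourth moments, and handles the subsampling error by Bernstein conditionally on $\calD_r$. Your version works, with one correction: $(a_i^{(r)}+cb_i^{(r)})^2$ is the square of a sub-exponential variable, hence only sub-Weibull of order $1/2$. You therefore need a sub-Weibull inequality or truncation rather than Bernstein; this still gives $o(1)$ relative error.

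\textbf{The genuine gap is your bound for source (iii).} You claim $\|(\hat{\btheta}_r^{(\uls)}-\hat{\btheta}_p)-(\btheta_r-\btheta_p)\|_2=O\bigl(\sqrt{p/\min\{\tn_r,N_f\}}\,(\omega_f\delta+1)\bigr)$. Its $\delta$-free part is too large to be absorbed.

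\begin{itemize}
\item The $\hat b$-error enters $\widehat V_r$ with weight $\approx\tn_r^{-2}$. The average $\tn_r^{-1}\sum_i\{\bm v^\top\Sig_r^{-1}(\tx_i\tx_i^\top-\Sig_r)\bm w\}^2$ is of order $\|\bm v\|_2^2$ even for a fixed unit $\bm w$.
\item So the ``$+1$'' part contributes about $\|\bm v\|_2^2\,p/(\tn_r\min\{\tn_r,N_f\})$.
\item When $\omega_f\delta\to0$, $V_r\asymp\|\bm v\|_2^2/N_r$. The ratio $N_rp/(\tn_r\min\{\tn_r,N_f\})$ need not vanish under $p^2=o(\min\{\tn_r,N_f\})$. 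For example, take $N_f\asymp N_r$, $\tn_r=N_r^{1/2}$, $p=N_r^{1/5}$: the ratio is $\asymp N_r^{1/5}$.
\end{itemize}

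Your case split on $\omega_f\delta$ cannot rescue this. The failure occurs exactly when $\omega_f\delta$ is small, where nothing better than $V_r\gtrsim\|\bm v\|_2^2(N_r^{-1}+\omega_f^2\delta^2/\tn_r)$ is available. Even at your threshold $\omega_f\delta\asymp\sqrt{p/N_f}$, the same example gives a divergent ratio.

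\textbf{The missing observation is that the noise part carries a factor $\omega_f$.} Since $\hat{\btheta}_r^{(\uls)}-\hat{\btheta}_p=-\frac{\omega_f}{\omega_r}\widetilde\Sig_r^{-1}N_f^{-1}(X^{(f)})^\top(\by^{(f)}-X^{(f)}\hat{\btheta}_p)$, the pure-noise term has norm $\omega_f\sqrt{p/N_f}=\sqrt{\omega_f p/N}$. Also $\hat{\btheta}_p-\btheta_p=O(\sqrt{p/N})$.

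Equivalently, you can bound this as the paper does, using Theorem~\ref{thm2}:
$\|(\hat{\btheta}_r^{(\uls)}-\hat{\btheta}_p)-(\btheta_r-\btheta_p)\|_2\le\|\hat{\btheta}_r^{(\uls)}-\btheta_r\|_2+\|\hat{\btheta}_p-\btheta_p\|_2\lesssim\sqrt{(p+\log\tn_r)/N_r}+\omega_f\delta\sqrt{(p+\log\tn_r)/\tn_r}$.

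With this bound:
\begin{itemize}
\item The $\delta$-free contribution is $O\bigl(\|\bm v\|_2^2(p+\log\tn_r)^2/(\tn_rN_r)\bigr)$. This is a relative error $O((p+\log\tn_r)^2/\tn_r)$ against $\|\bm v\|_2^2/N_r$.
\item The $\delta$-part is compared against $\|\bm v\|_2^2\omega_f^2\delta^2/\tn_r$.
\item Both vanish under $p^2=o(\min\{\tn_r,N_f\})$, up to logarithmic factors, and no case analysis is needed.
\end{itemize}

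The same sharpening is needed wherever $\|\hat{\btheta}_r^{(\uls)}-\hat{\btheta}_p\|_2$ multiplies $\widetilde\Sig_r^{-1}-\Sig_r^{-1}$ or $\widetilde\Sig_r-\Sig_r$ in $\hat b_i^{(r)}-b_i^{(r)}$. Note that the second of these, from the centering by $\widetilde\Sig_r$ inside $\hat b_i^{(r)}$, is a perturbation missing from your list.
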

In view of Theorem \ref{thm-inf} and Lemma \ref{lem-var}, we can construct asymptotically valid $100(1-\alpha)\%$-confidence interval for $\bm{v}^{\top}\btheta_r$ as
\begin{equation}
\label{eq-ci_uls}
    \left(\bm{v}^{\top}\hat{\btheta}_r^{(\uls)}-z_{(1-\alpha)/2}\widehat{V}^{1/2}_r,\bm{v}^{\top}\hat{\btheta}_r^{(\uls)}+z_{(1-\alpha)/2}\widehat{V}^{1/2}_r\right),
\end{equation}
where $z_{1-\alpha/2}$ is the standard normal quantile. 
Crucially, this interval is computable solely from the forget set, the pre-trained model, and the small subsample  $\widetilde{\calD}_r$, preserving the efficiency and privacy benefits of our framework.

\section{Robustifying the ULS estimator}
\label{sec6}
From Theorem \ref{thm2}, we see that the ULS estimator can exhibit a slower convergence rate than $\tilde{\btheta}_r^{(\ols)}$ if the forget model bias is overwhelmingly large $\omega_f\delta\gg 1$. In this section, we study a robust version of ULS that addresses this limitation and establish its theoretical connections to prevailing benchmark estimators used in LLMs. 
\subsection{Robustified ULS}
We robustify the ULS estimate by adding a loss term for the subsampled remaining data
\begin{align}
\label{ul-loss2}
\hat{\btheta}_r^{(\uls+)}=\argmin_{\btheta\in\R^p}\left\{-\frac{\omega_f}{N_f}\ell(\btheta;\calD_f)+(\hat{\btheta}_p-\btheta)^{\top}\widetilde{\Sig}_p(\hat{\btheta}_p-\btheta)+\frac{\lambda}{\tilde{n}_r}\ell(\btheta;\widetilde{\calD}_r)\right\},
\end{align}
where $\lambda>0$ is a tuning parameter.
In comparison to the optimization in (\ref{ul-loss}), the last term in (\ref{ul-loss2}) enforces that the solution $\hat{\btheta}_r^{(\uls+)}$ must also maintain a good fit to the subsampled remaining data $\widetilde{\calD}_r$.

\begin{theorem}[Convergence rate of robustified ULS]
\label{thm-combine}
Assume Condition \ref{cond1} and $p=o(\min\{\tn_r,N_f\})$. If we take $\lambda=C\omega_r\omega_f\delta$ for any positive constant $C$, then
\begin{align*}
\|\hat{\btheta}_r^{(\uls+)}-\btheta_r\|_2\leq C_1\sqrt{\frac{p}{N_r}}+C_2\min\{\omega_f\delta,1\}\sqrt{\frac{p}{\tn_r}}
\end{align*}
with probability $1-\exp\{-c_1p\}$.
\end{theorem}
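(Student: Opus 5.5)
The plan is to show that the robustified estimator $\hat{\btheta}_r^{(\uls+)}$ in (\ref{ul-loss2}) is exactly a convex combination of the ULS estimator $\hat{\btheta}_r^{(\uls)}$ and the subsample OLS estimator $\tilde{\btheta}_r^{(\ols)}$. The stated rate then follows from Theorem \ref{thm2} and Lemma \ref{tlem2} by bounding the two weights.

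\textbf{Step 1: first-order condition.} Differentiating the objective in (\ref{ul-loss2}) and setting the gradient to zero gives
\[
\widetilde{\Sig}_p(\btheta-\hat{\btheta}_p)-\omega_f\widehat{\Sig}_f\btheta+\frac{\omega_f}{N_f}(X^{(f)})^{\top}\by^{(f)}+\lambda\widetilde{\Sig}_r\btheta-\frac{\lambda}{\tn_r}(\widetilde{X}^{(r)})^{\top}\tilde{\by}^{(r)}=0 .
\]
Since $\widetilde{\Sig}_p=\omega_r\widetilde{\Sig}_r+\omega_f\widehat{\Sig}_f$, the $\widehat{\Sig}_f\btheta$ terms cancel. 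The coefficient of $\btheta$ is then $(\omega_r+\lambda)\widetilde{\Sig}_r$. On the event of Theorem \ref{thm2} this matrix is positive definite, so the objective is strictly convex and its minimizer is unique. Solving yields
\[
(\omega_r+\lambda)\widetilde{\Sig}_r\btheta=\omega_r\widetilde{\Sig}_r\hat{\btheta}_p-\frac{\omega_f}{N_f}(X^{(f)})^{\top}(\by^{(f)}-X^{(f)}\hat{\btheta}_p)+\lambda\widetilde{\Sig}_r\tilde{\btheta}_r^{(\ols)} .
\]

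\textbf{Step 2: identify the convex combination.} Next I check from (\ref{unlearn-est}) that
\[
\hat{\btheta}_r^{(\uls)}=\hat{\btheta}_p-\frac{\omega_f}{\omega_r N_f}\widetilde{\Sig}_r^{-1}(X^{(f)})^{\top}(\by^{(f)}-X^{(f)}\hat{\btheta}_p),
\]
using $\frac{\tn_r}{N_r}\{(\widetilde X^{(r)})^{\top}\widetilde X^{(r)}\}^{-1}=\widetilde\Sig_r^{-1}/N_r$ and $\frac{1}{N_r}=\frac{\omega_f}{\omega_rN_f}$. Substituting this into Step 1 gives
\[
\hat{\btheta}_r^{(\uls+)}=w_1\hat{\btheta}_r^{(\uls)}+w_2\tilde{\btheta}_r^{(\ols)},\qquad w_1=\frac{\omega_r}{\omega_r+\lambda},\quad w_2=\frac{\lambda}{\omega_r+\lambda}.
\]
With $\lambda=C\omega_r\omega_f\delta$ the weights become $w_1=1/(1+C\omega_f\delta)$ and $w_2=C\omega_f\delta/(1+C\omega_f\delta)$.

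\textbf{Step 3: combine the rates.} By the triangle inequality, and then Theorem \ref{thm2} together with Lemma \ref{tlem2} applied on the intersection of their high-probability events (which still has probability $1-\exp\{-c_1p\}$ after adjusting $c_1$),
\[
\|\hat{\btheta}_r^{(\uls+)}-\btheta_r\|_2\le w_1\|\hat{\btheta}_r^{(\uls)}-\btheta_r\|_2+w_2\|\tilde{\btheta}_r^{(\ols)}-\btheta_r\|_2
\le c_2\sqrt{\frac{p}{N_r}}+c_2\,w_1\omega_f\delta\sqrt{\frac{p}{\tn_r}}+C\,w_2\sqrt{\frac{p}{\tn_r}} .
\]
Here I used $w_1\le1$ on the oracle term. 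The elementary inequalities $w_1\omega_f\delta=\omega_f\delta/(1+C\omega_f\delta)\le\max\{1,C^{-1}\}\min\{\omega_f\delta,1\}$ and $w_2\le\min\{C\omega_f\delta,1\}\le\max\{1,C\}\min\{\omega_f\delta,1\}$ then give the claimed bound, with $C_2$ depending on $C$.

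The only point needing care is in Step 3: the constants in Theorem \ref{thm2} must not depend on $\delta$, so that its bound can be used even when $\omega_f\delta\gg1$. I expect the theorem's proof supplies this, because there $\delta$ enters only linearly, through $\hat{\btheta}_p-\btheta_r=O(\sqrt{p/N}+\omega_f\delta)$. If that uniformity fails, I would instead bound $\hat{\btheta}_r^{(\uls)}-\mathring{\btheta}_r^{(\rtr)}$ directly as $(\widetilde\Sig_r^{-1}\widehat\Sig_r-I)(\hat{\btheta}_p-\mathring{\btheta}_r^{(\rtr)})$. This follows from the normal equations and is multiplied by $w_1$; it gives the same $\omega_f\delta\sqrt{p/\tn_r}$ term explicitly.
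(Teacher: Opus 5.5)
Your proposal is correct and is essentially the paper's proof. From the first-order condition the paper obtains $\hat{\btheta}_r^{(\uls+)}-\btheta_r=\{(\omega_r+\lambda)\widetilde{\Sig}_r\}^{-1}\{\omega_r\widetilde{\Sig}_r(\hat{\btheta}_r^{(\uls)}-\btheta_r)+\lambda\widetilde{E}_r\}$. Since $\widetilde{\Sig}_r^{-1}\widetilde{E}_r=\tilde{\btheta}_r^{(\ols)}-\btheta_r$, this is exactly your convex combination of $\hat{\btheta}_r^{(\uls)}$ and $\tilde{\btheta}_r^{(\ols)}$; the paper then applies Theorem \ref{thm2} and bounds the two weights under $\lambda=C\omega_r\omega_f\delta$, as you do.

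Your $\delta$-uniformity concern is resolved: the constants in the proof of Theorem \ref{thm2} do not depend on $\delta$, because $\delta$ enters only linearly through $\|\widehat{\Sig}_f\|_2\delta$. In your fallback identity the correct form is $(\widetilde{\Sig}_r^{-1}\widehat{\Sig}_r-I)(\mathring{\btheta}_r^{(\rtr)}-\hat{\btheta}_p)$; the sign slip is immaterial for the norm bound.
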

Theorem \ref{thm-combine} demonstrates that incorporating the loss term on $\widetilde{\calD}_r$ successfully robustifies the unlearning estimator when $\omega_f\delta$ is large. Indeed, the estimate $\hat{\btheta}_r^{(\uls+)}$ is always no worse than the subsampled OLS $\tilde{\btheta}^{(\ols)}_r$. In the next theorem, we show the minimax optimality of $\hat{\btheta}^{(\uls)}_r$.

\begin{theorem}[Minimax lower bound]
\label{thm-minimax2}
Assume that $\omega_r\geq 1/2$ and  $p\leq c_0\min\{\sqrt{N},\tn_r\}$  for some small enough positive constant $c_0$. For the parameter space $\Theta_p(\delta)$ defined in (\ref{parameter-space}), it holds that
\[
   \inf_{\hat{\btheta}\in\mathcal{F}(\hat{\btheta}_p,\calD_f,\widetilde{\calD}_r)}\sup_{\Theta_p(\delta)} \P\left(\|\hat{\btheta}-\btheta_r\|_2\geq c_1\sqrt{\frac{p}{N_r}}+c_1\min\{\omega_f\delta,1\}\sqrt{\frac{p}{\tilde{n}_r}}\right)\geq c_2
\]
for some positive constants $c_1$ and $c_2$.
\end{theorem}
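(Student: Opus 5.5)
The plan is to reduce Theorem \ref{thm-minimax2} to Theorem \ref{thm-minimax} together with a monotonicity argument in $\delta$. The target rate is the sum of two terms, and since $a+b\le 2\max\{a,b\}$, it suffices to show separately that each term, $c\sqrt{p/N_r}$ and $c\min\{\omega_f\delta,1\}\sqrt{p/\tn_r}$, is a lower bound with constant probability.

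\textbf{Case $\omega_f\delta\le 1$.} Here $\min\{\omega_f\delta,1\}=\omega_f\delta$, and the assumptions of Theorem \ref{thm-minimax} hold verbatim ($\omega_r\ge1/2$, $\omega_f\delta\le 1$, $p\le c_0\min\{\sqrt N,\tn_r\}$). The claim therefore follows directly from that theorem.

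\textbf{Case $\omega_f\delta>1$.} The parameter spaces are nested in $\delta$: $\Theta_p(\delta')\subseteq\Theta_p(\delta)$ whenever $\delta'\le\delta$. Set $\delta'=1/\omega_f<\delta$. Then $\omega_f\delta'=1$, so Theorem \ref{thm-minimax} applies on $\Theta_p(\delta')$. For every estimator $\hat\btheta$ that is a measurable function of $(\hat\btheta_p,\calD_f,\widetilde\calD_r)$, its worst-case probability of exceeding the threshold over $\Theta_p(\delta)$ is at least its worst-case probability over $\Theta_p(\delta')$. That probability is at least $c_2$, with threshold $c_1\sqrt{p/N_r}+c_1\sqrt{p/\tn_r}$. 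This threshold equals $c_1\sqrt{p/N_r}+c_1\min\{\omega_f\delta,1\}\sqrt{p/\tn_r}$, which is the claimed bound. Taking the infimum over $\hat\btheta$ finishes this case.

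One point to check is that the constants $c_1,c_2$ in Theorem \ref{thm-minimax} do not depend on $\delta$ or $\omega_f$, so they stay uniform when $\delta$ is replaced by $\delta'$. I would verify this by inspecting the construction in the proof of Theorem \ref{thm-minimax}: a Fano/Varshamov–Gilbert packing of $\btheta_r$ with $\btheta_f$ shifted so that $\|\btheta_r-\btheta_f\|_2=\delta$, together with the perturbation analysis of the law of $\hat\btheta_p$. The constants there should come only from $c_0$, the eigenvalue bounds and the condition $\omega_r\ge 1/2$. This uniformity is the only potential obstacle.

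If the constants in that proof do depend on $\delta$, I would instead rerun the same two-part construction directly. The first part perturbs $\btheta_r$ at scale $\sqrt{p/N_r}$ under a common model, which gives the oracle term. The second part keeps $\hat\btheta_p$'s mean fixed while moving $\btheta_r$ and $\btheta_f$ jointly, with separation capped at $\min\{\delta,1/\omega_f\}$. The capped separation makes the induced shift of size $\min\{\omega_f\delta,1\}$ detectable only through $\widetilde\calD_r$ at resolution $\sqrt{p/\tn_r}$.
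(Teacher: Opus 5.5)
Your nesting step is valid as an implication. Indeed $\Theta_p(\delta')\subseteq\Theta_p(\delta)$ for $\delta'\le\delta$, the class $\mathcal{F}(\hat{\btheta}_p,\calD_f,\widetilde{\calD}_r)$ does not depend on $\delta$, and $\delta'=1/\omega_f$ gives the right threshold. The gap is that it carries none of the weight. Theorem \ref{thm-minimax} has no independent proof in the paper: the paper obtains it as the special case $\omega_f\delta\le 1$ of Theorem \ref{thm-minimax2}. So you have only shown that the two theorems are equivalent, and the whole lower-bound argument is missing.

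Your fallback sketch does not supply that argument, and it misses the key mechanism. It perturbs only the coefficients. If $\Sig_r$ is common to all hypotheses, consider ULS with $\widetilde{\Sig}_r$ replaced by the known $\Sig_r$, namely $\hat{\btheta}_p-\frac{\omega_f}{\omega_r}\Sig_r^{-1}(\widehat{M}_f-\widehat{\Sig}_f\hat{\btheta}_p)$. It equals $\btheta_r+\Sig_r^{-1}\widehat{E}_r+\Sig_r^{-1}(\Sig_r-\widehat{\Sig}_r)(\hat{\btheta}_p-\btheta_r)$, so its error is $O((1+\omega_f\delta)\sqrt{p/N_r})$. Hence no such family can produce the term $\omega_f\delta\sqrt{p/\tn_r}$. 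The unlearning cost is ``Hessian uncertainty times forget bias''. The hypotheses must therefore perturb $\Sig_r$ at scale $1/\sqrt{\tn_r}$ per coordinate while keeping everything else invariant.

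The paper runs Assouad over $\bm{\phi}^{(j)}\in\{0,1\}^p$ as follows:
\begin{itemize}
\item $\btheta_f$ and $\Sig_f=I_p$ are fixed;
\item $\Sig_r^{(j)}=I_p+\bpsi^{(j)}\btheta_f^\top+\btheta_f(\bpsi^{(j)})^\top$;
\item $\btheta_r^{(j)}=-(\omega_f/\omega_r)(\Sig_r^{(j)})^{-1}\btheta_f$;
\item $(\sigma_r^{(j)})^2=5-(\btheta_r^{(j)})^\top\Sig_r^{(j)}\btheta_r^{(j)}$.
\end{itemize}
This keeps $\Sig_r^{(j)}\btheta_r^{(j)}$, $\textup{Var}(y^{(r)})$, $\btheta_p=0$, the law of $\calD_f$ and the laws of the residuals $\bz_r,\bz_f$ identical across hypotheses.

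The hard step is then the total-variation bound for $(\hat{\btheta}_p,\calD_f,\widetilde{\calD}_r)$, because $\hat{\btheta}_p$ depends on the unobserved $\calD_r\setminus\widetilde{\calD}_r$. The paper handles it in four moves:
\begin{itemize}
\item It replaces $\hat{\btheta}_p$ by its linearization $\mathring{\btheta}_p$ at cost $\sqrt{p(p+\log\tn_r)/N}$ (Lemma \ref{lem-T2}); this is where $p\le c_0\sqrt{N}$ is used.
\item It bounds the $\widetilde{\calD}_r$-marginal through $\tn_r\|\Sig_r^{(j)}-\Sig_r^{(k)}\|_F^2\lesssim c_r^2$.
\item It conditions on $(\calD_f,\widetilde{\calD}_r,\check{\bz}_r)$. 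Under this conditioning $\mathring{\btheta}_p$ is Gaussian, with mean and covariance perturbed only at relative order $c_r/\sqrt{\tn_r}$ (Lemma \ref{lem-T1}).
\item It converts the Assouad bound into a probability statement via Theorem 8 of \citet{ma2024high}.
\end{itemize}

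Do keep the nesting step, though: it is not redundant relative to the paper's direct proof. There $\bpsi^{(j)}=c_r\bm{\phi}^{(j)}/(\sqrt{\tn_r}\,\delta)$ while $\|\btheta_f\|_2=\omega_r\min\{\delta,c_0/\omega_f\}$. The coordinatewise separation of the $\btheta_r^{(j)}$ is therefore of order $c_r\omega_f\min\{\delta,c_0/\omega_f\}^2/(\delta\sqrt{\tn_r})$. This matches $\min\{\omega_f\delta,1\}/\sqrt{\tn_r}$ when $\omega_f\delta\lesssim 1$. When $\omega_f\delta\gg 1$ it decays like $1/(\omega_f\delta\sqrt{\tn_r})$, so the last inequality in the bound on $R_1$ in Lemma \ref{lem-mini0} fails in that regime. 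Running the construction on $\Theta_p(\min\{\delta,1/\omega_f\})$, which is exactly your reduction, repairs this.

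A complete proof along your lines would therefore do two things. First, prove the case $\omega_f\delta\le 1$ directly: use your first Assouad family for $\sqrt{p/N_r}$, and the $\Sig_r$-perturbation family above for $\omega_f\delta\sqrt{p/\tn_r}$. Then extend to $\omega_f\delta>1$ by nesting.
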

Theorem \ref{thm-minimax2} shows that the robustified ULS achieves minimax optimality regardless of the magnitude of $\omega_f\delta$.

\subsection{Connections to GradDiff}

The formulation in (\ref{ul-loss2}) is conceptually related to the Gradient Difference (GradDiff) method \citep{liu2022continual,liu2025rethinking}, a heuristic proposed to mitigate the well-known instability of standard gradient ascent in LLM unlearning tasks \citep{yao2024large,maini2024tofu}. Specifically, GradDiff simultaneously maximizes the forget loss and minimizes the remaining loss
\begin{align*}
\hat{\btheta}_r^{(\textup{diff})}=\argmin_{\btheta\in\R^p}\left\{-\frac{1}{N_f}\ell(\btheta;\calD_f)+ \frac{\lambda^{(\textup{diff})}}{\tn_r}\ell(\btheta;\widetilde{\calD}_r)\right\},
\end{align*}
where $\lambda^{(\textup{diff})}>0$ is a tuning parameter.

Empirical studies have demonstrated that the GradDiff has more reliable performance than gradient ascent in LLM unlearning \citep{fan2024simplicity}. 
In our framework, GradDiff can be viewed as a weighted average of the gradient ascent estimate and the OLS estimate $\tilde{\btheta}_r^{(\ols)}$ but it omits the Hessian-weighted regularization term that anchors the optimization to the pre-trained model. We provide convergence analysis of $\hat{\btheta}_r^{(\textup{diff})}$ in the next theorem.
\begin{theorem}[Convergence rate of GradDiff]
\label{thm-gd}
Assume Condition \ref{cond1} and $p=o(\min\{\tn_r,N_f\})$. For any $\lambda^{(\textup{diff})} \geq 2\Lambda_{\max}(\Sig_f)/\Lambda_{\min}(\Sig_r)$, we have with probability at least $1-\exp\{-c_1 p\}$,
\[
  \|\hat{\btheta}_r^{(\textup{diff})}-\btheta_r\|_2\leq c_2( \sqrt{\frac{p}{\tn_r}}+\frac{\sqrt{\frac{p}{N_f}}+\delta}{\lambda^{(\textup{diff})}}).
\]
Hence, if we take $\lambda^{(\textup{diff})}=\max\{\sqrt{\frac{\tilde{\omega}_r}{\omega_f}}+\sqrt{\frac{\tn_r}{p}}\delta,2\Lambda_{\max}(\Sig_f)/\Lambda_{\min}(\Sig_r)\}$,
\[
  \|\hat{\btheta}_r^{(\textup{diff})}-\btheta_r\|_2\leq c_3\sqrt{\frac{p}{\tn_r}}.
\]
\end{theorem}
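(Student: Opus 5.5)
We need to prove the convergence rate of GradDiff, working directly from the closed form. Write $\widehat{\Sig}_f=(X^{(f)})^{\top}X^{(f)}/N_f$ and $\widetilde{\Sig}_r$ as before. The objective is quadratic with Hessian $2(\lambda^{(\textup{diff})}\widetilde{\Sig}_r-\widehat{\Sig}_f)$, where for brevity $\lambda=\lambda^{(\textup{diff})}$. The first step is to show that this Hessian is positive definite with $\Lambda_{\min}(\lambda\widetilde{\Sig}_r-\widehat{\Sig}_f)\geq c\lambda$ on the high-probability event.

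\textbf{Step 1 (curvature).} Under Condition \ref{cond1} and $p=o(\min\{\tn_r,N_f\})$, standard sub-Gaussian matrix concentration gives, with probability at least $1-\exp\{-c_1p\}$, the bounds
\[
\|\widetilde{\Sig}_r-\Sig_r\|_2\leq C\sqrt{p/\tn_r}=o(1), \qquad \|\widehat{\Sig}_f-\Sig_f\|_2\leq C\sqrt{p/N_f}=o(1).
\]
Hence $\Lambda_{\min}(\lambda\widetilde{\Sig}_r)\geq \lambda\Lambda_{\min}(\Sig_r)(1-o(1))$. Since $\lambda\Lambda_{\min}(\Sig_r)\geq 2\Lambda_{\max}(\Sig_f)$, we get
\[
\Lambda_{\min}(\lambda\widetilde{\Sig}_r-\widehat{\Sig}_f)\geq \lambda\Lambda_{\min}(\Sig_r)(1-o(1))-\Lambda_{\max}(\Sig_f)(1+o(1))\geq c\lambda\Lambda_{\min}(\Sig_r).
\]
So the minimizer exists, is unique, and equals
\[
\hat{\btheta}^{(\textup{diff})}_r=(\lambda\widetilde{\Sig}_r-\widehat{\Sig}_f)^{-1}\Big(\frac{\lambda}{\tn_r}(\widetilde{X}^{(r)})^{\top}\tilde{\by}^{(r)}-\frac{1}{N_f}(X^{(f)})^{\top}\by^{(f)}\Big).
\]

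\textbf{Step 2 (error decomposition).} Write $\tilde{\by}^{(r)}=\widetilde{X}^{(r)}\btheta_r+\tilde{\bm{\eps}}^{(r)}$ and $\by^{(f)}=X^{(f)}\btheta_f+\bm{\eps}^{(f)}$, where by definition of $\btheta_r,\btheta_f$ the population score terms have mean zero. Subtracting $\btheta_r$ gives
\[
\hat{\btheta}^{(\textup{diff})}_r-\btheta_r=(\lambda\widetilde{\Sig}_r-\widehat{\Sig}_f)^{-1}\Big(\frac{\lambda}{\tn_r}(\widetilde{X}^{(r)})^{\top}\tilde{\bm{\eps}}^{(r)}-\frac{1}{N_f}(X^{(f)})^{\top}\bm{\eps}^{(f)}-\widehat{\Sig}_f(\btheta_f-\btheta_r)\Big).
\]
The three terms inside the bracket are bounded as follows.
\begin{itemize}
\item A sub-Gaussian $\epsilon$-net argument gives $\|(\widetilde{X}^{(r)})^{\top}\tilde{\bm{\eps}}^{(r)}/\tn_r\|_2\leq C\sqrt{p/\tn_r}$. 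Because of misspecification, the product $\bx\eps$ is only sub-exponential, but it is still mean zero, so Bernstein's inequality over the net gives the same rate when $p\lesssim \tn_r$.
\item By the same argument, $\|(X^{(f)})^{\top}\bm{\eps}^{(f)}/N_f\|_2\leq C\sqrt{p/N_f}$.
\item $\|\widehat{\Sig}_f(\btheta_f-\btheta_r)\|_2\leq C\delta$.
\end{itemize}
Combining these with Step 1 yields
\[
\|\hat{\btheta}^{(\textup{diff})}_r-\btheta_r\|_2\leq \frac{C}{\lambda}\Big(\lambda\sqrt{p/\tn_r}+\sqrt{p/N_f}+\delta\Big),
\]
which is the first claim.

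\textbf{Step 3 (tuning).} For the second claim, substitute the stated $\lambda$. Since $\lambda\geq\sqrt{\tilde{\omega}_r/\omega_f}$ and $\tilde{\omega}_r/\omega_f=\tn_r N/(N_rN_f)$, we get
\[
\sqrt{p/N_f}/\lambda\leq\sqrt{p/N_f}\,\sqrt{N_fN_r/(\tn_rN)}=\sqrt{p/\tn_r}\,\sqrt{\omega_r}\leq\sqrt{p/\tn_r}.
\]
Similarly, $\lambda\geq\sqrt{\tn_r/p}\,\delta$ gives $\delta/\lambda\leq\sqrt{p/\tn_r}$. The lower bound $\lambda\geq 2\Lambda_{\max}(\Sig_f)/\Lambda_{\min}(\Sig_r)$ keeps the first claim applicable.

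\textbf{Main obstacle.} The main delicate step is Step 1. The margin there is only the factor $2$ in the condition on $\lambda$, so we need the $o(1)$ perturbations to be absorbed uniformly in $\lambda$. This works because the perturbation of $\lambda\widetilde{\Sig}_r$ scales with $\lambda$ while its main term also scales with $\lambda$. A secondary technical point is the sub-exponential concentration of the cross terms under misspecification, which is routine.
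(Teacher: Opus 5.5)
Your proposal is correct and follows essentially the same route as the paper: the closed form $(\lambda^{(\textup{diff})}\widetilde{\Sig}_r-\widehat{\Sig}_f)^{-1}(\lambda^{(\textup{diff})}\widetilde{M}_r-\widehat{M}_f)$, the split into the noise term $\lambda^{(\textup{diff})}\widetilde{E}_r-\widehat{E}_f$ and the bias term $\widehat{\Sig}_f(\btheta_f-\btheta_r)$, a curvature lower bound of order $\lambda^{(\textup{diff})}\Lambda_{\min}(\Sig_r)$, and then tuning. You also make explicit two points the paper only asserts. First, your curvature bound holds uniformly in $\lambda^{(\textup{diff})}$. Second, your direct check of the tuning step gives $\sqrt{p/N_f}/\lambda^{(\textup{diff})}\leq\sqrt{\omega_r}\sqrt{p/\tn_r}$, which absorbs the harmless $\sqrt{\omega_r}$ mismatch in the paper's claimed identity for $\lambda^{(\textup{diff})}$.
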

In Theorem \ref{thm-gd}, the constraint on $\lambda^{(\textup{diff})}$ ensures the objective function remains strictly convex and admits a unique minimizer. We see from the first inequality that the risk upper bound decreases as $\lambda^{(\textup{diff})}$ gets larger. In the second inequality, with optimal tuning, its convergence rate is $O(\sqrt{p/\tn_r})$, which merely matches the subsampled OLS estimator that relies solely on the remaining data.
Therefore, unlike our proposed ULS framework, GradDiff is not minimax optimal when $\omega_f\delta=o(1)$. 
\section{Numerical experiments}
\label{sec-simu}


We conduct numerical experiments to assess the estimation and inference performance of the proposed unlearning estimator $\hat{\btheta}_r^{(\uls)}$ in comparison to several classical benchmark estimators.  The code for all the methods is available at \url{https://github.com/jyxie96/mu_minimax}.

For the remaining data, we simulate covariates $\bx_i^{(r)}\sim_{i.i.d.} N(0,I_p)$ and responses $y^{(r)}_i = (\bx_i^{(r)})^{\top}\btheta_r + \eps_i^{(r)}$, where the random noise $\eps_i^{(r)} \sim_{i.i.d.} N(0, 1)$. The true parameter $\btheta_r$ is randomly drawn from $N(0,I_p)$ and fixed in Monte Carlo replications. For the forget data, we introduce an autoregressive covariance structure for the covariates, $\bx_i^{(f)}\sim_{i.i.d.} N(0,\Sig_f)$, where $(\Sigma_f)_{j,k}=0.3^{|j-k|}$ , and generate $y^{(f)}_i = (\bx_i^{(f)})^{\top}\btheta_f + \eps_i^{(f)}$, where the random noise $\eps_i^{(f)} \sim_{i.i.d.} N(0, 1)$. We set $\btheta_f=\btheta_r + c_{\delta}  \mathbf{v}/\|\mathbf{v}\|_2$ for $\mathbf{v} = p^{-1/2}\mathbf{1}_p$, where $c_{\delta}=\delta$ varies in different settings.

\subsection{Estimation performance}

We first evaluate estimation with a fixed remaining sample size $N_r =20000$ and a forget sample size $N_f =1000$. 
To isolate the influence of key theoretical quantities, we employ a controlled design varying one factor at a time:
\begin{itemize}
\item[(a)] Subsample Ratio: We vary $\tilde{n}_r/N_r \in \{0.1, 0.2, 0.3\}$ with fixed $p=50$ and $\delta = 2$.
\item[(b)] Dimension: We vary $p \in \{10, 50, 100\}$ with fixed $\tilde{n}_r/N_r = 0.2$ and $\delta = 2$.
\item[(c)] Discrepancy: We vary the shift magnitude $\delta \in \{1, 2, 3\}$ with fixed $p=50$ and $\tilde{n}_r/N_r=0.2$.
\end{itemize}
We compare  $\hat{\btheta}^{(\uls)}_r$ against four baselines: the retrained estimate $\mathring{\btheta}_r^{(\rtr)}$, the pre-trained estimate $\hat{\btheta}_p$, the GradDiff estimate $\hat{\btheta}_r^{(\textup{diff})}$, and the subsampled OLS $\tilde{\btheta}_r^{(\ols)}$.
For the tuning parameter $\lambda$ in $\hat{\btheta}_r^{(\textup{diff})}$, we employ cross-validation to adaptively tune it, searching over the range $\lambda^{(\textup{diff})} \in \left[10^{-4}, 10^4\right]$. For an arbitrary estimator $\hat{\btheta}$, 
its performance is measured by the estimation error $\|\hat{\btheta} - \btheta_r\|_2$ averaged over 1,000 Monte Carlo replications.



\begin{figure}[!t]
\centering
\includegraphics[width=0.99\textwidth]{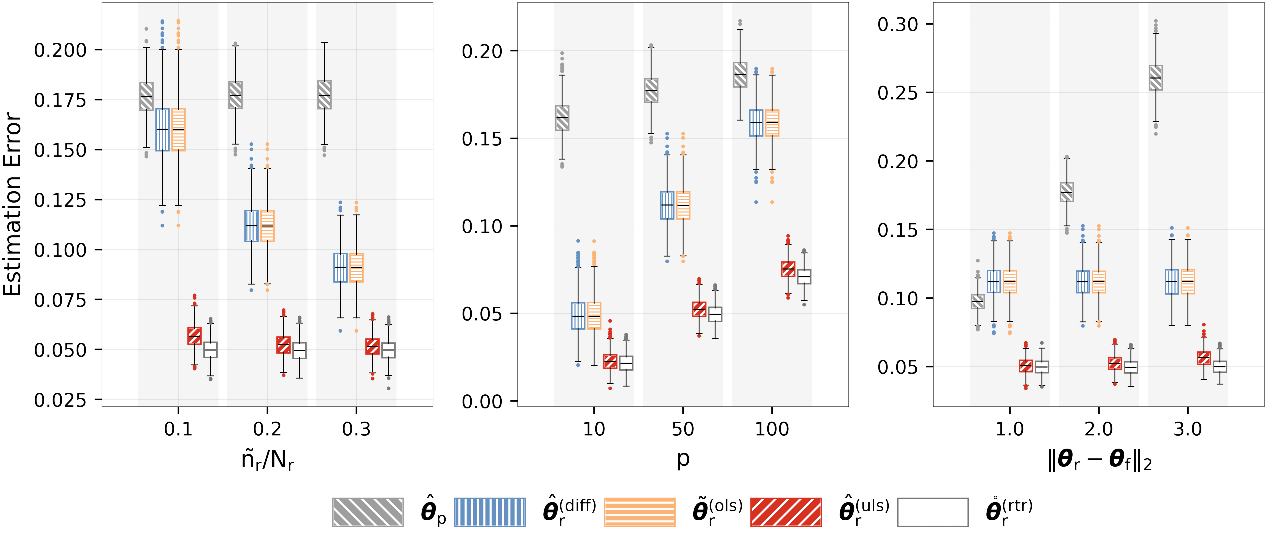}
\caption{Boxplots of the estimation errors for five unlearning estimators: $\mathring{\btheta}_r^{(\rtr)}$ (gray without hatch), $\hat{\btheta}_p$ (gray with backward-slash hatch), $\hat{\btheta}_r^{(\textup{diff})}$ (blue with vertical-line hatch), $\tilde{\btheta}_r^{(\ols)}$ (yellow with horizontal-line hatch), and $\hat{\btheta}_r^{(\uls)}$ (red with forward-slash hatch). The three panels correspond to experimental settings (a), (b), and (c), with $N_r =20000$ and $N_f =1000$. Each setting is replicated with 1000 Monte Carlo trials. \label{fig:first}}
\end{figure}

From Figure \ref{fig:first}, we see that our proposed estimates have estimation errors comparable to the re-trained estimates and are more accurate than other benchmark methods $\hat{\btheta}_p$, $\hat{\btheta}_r^{(\textup{diff})}$, and $\tilde{\btheta}^{(\ols)}_r$ across all the settings.
Moreover, the dependence of the estimation errors on the three factors align with our theoretical analysis. As $\tilde{n}_r$ increases, the estimation errors of $\hat{\btheta}_r^{(\textup{diff})}$, $\tilde{\btheta}_r^{(\ols)}$, and $\hat{\btheta}_r^{(\uls)}$ decay.  Increasing the dimension $p$ leads to a monotonic increase in error across all methods. As $\delta$ grows, both $\hat{\btheta}_r^{(\uls)}$ and $\hat{\btheta}_p$ have increasing errors, but the errors of $\hat{\btheta}_r^{(\uls)}$ grows much slower. This aligns with our theory that the effect of $\delta$ on the error of $\hat{\btheta}_r^{(\uls)}$ is attenuated by the shrinkage factor $\sqrt{p/\tilde{n}_r}$. On the other hand, $\tilde{\btheta}_r^{(\ols)}$ remains unaffected by the bias term as it only uses $\widetilde{\calD}_r$. Furthermore, with an adaptively tuned $\lambda^{(\textup{diff})}$, the estimation error of $\hat{\btheta}_r^{(\textup{diff})}$ becomes virtually indistinguishable from that of $\tilde{\btheta}_r^{(\ols)}$. This empirical result is consistent with our theoretical derivation in Theorem \ref{thm-gd}.
Overall, the empirical behavior matches the convergence rates established in the theory, demonstrating that the error decays with the effective subsample size and scales appropriately with the dimension, and model discrepancy.

These patterns remain consistent under a more aggressive unlearning scenario with a larger forget set $N_f=2000$ as shown in Figure~\ref{fig:second}.  We also report the simulation results comparing $\hat{\btheta}_r^{(\uls)}$ with its gradient descent variant $\hat{\btheta}_{r,T}^{(\uls)}$, and its robustified version $\hat{\btheta}_r^{(\uls+)}$. Comprehensive details and further analysis are provided in the supplements (Section \ref{sec:est_supp}).


\begin{figure}[!t]
\centering
\includegraphics[width=0.99\textwidth]{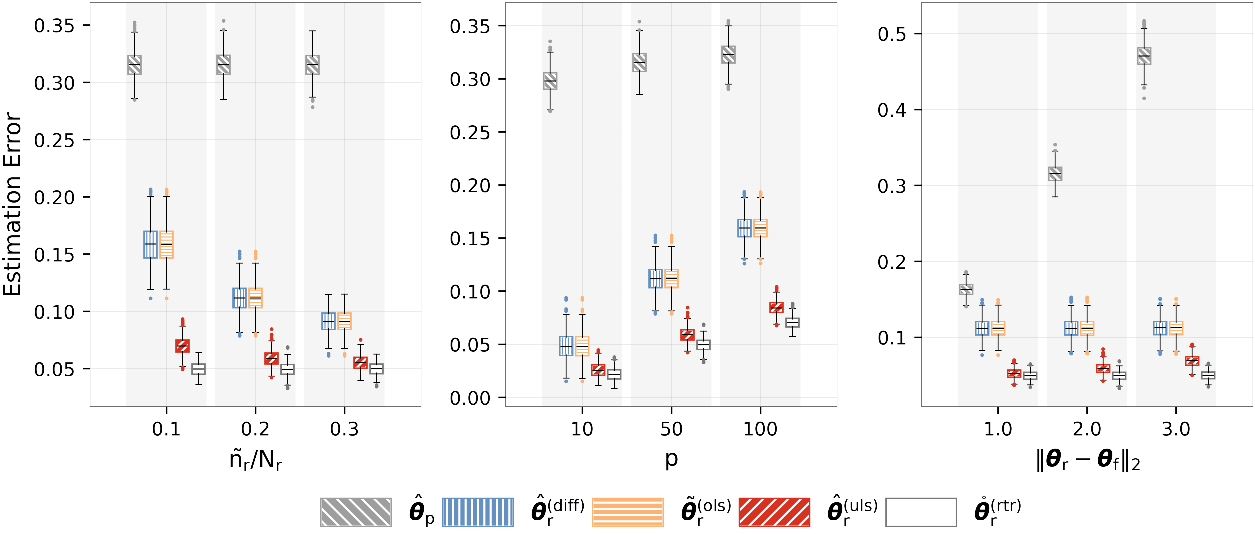}
\caption{Boxplots of the estimation errors for five unlearning estimators: $\mathring{\btheta}_r^{(\rtr)}$ (gray without hatch), $\hat{\btheta}_p$ (gray with backward-slash hatch), $\hat{\btheta}_r^{(\textup{diff})}$ (blue with vertical-line hatch), $\tilde{\btheta}_r^{(\ols)}$ (yellow with horizontal-line hatch), and $\hat{\btheta}_r^{(\uls)}$ (red with forward-slash hatch). The three panels correspond to experimental settings (a), (b), and (c), with $N_r =20000$ and $N_f =2000$. Each setting is replicated with 1000 Monte Carlo trials. }\label{fig:second}
\end{figure}

\subsection{Inference performance}
We evaluate the inference performance of our proposed unlearning estimator $\hat{\btheta}_r^{(\uls)}$ in comparison to  subsampled remaining estimator $\tilde{\btheta}_r^{(\ols)}$, which also enjoys asymptotic normality under mild conditions. For this OLS method based on $\widetilde{\calD}_r$, a $100(1-\alpha)\%$-confidence interval for $\bm{v}^{\top}\btheta_r$ can be constructed as
\begin{equation}
\label{eq-ci_ols}
\bm{v}^{\top}\tilde{\btheta}^{(\ols)}_r \pm z_{1-\alpha/2}\sqrt{\frac{\sum_{i=1}^{\tn_r}(\tilde{y}_i^{(r)} - (\tilde{\bx}_i^{(r)})^{\top}\tilde{\btheta}_r^{(\ols)})^2}{\tilde{n}_r-p}}\sqrt{\bm{v}^{\top}(\widetilde{X}_r^\top\widetilde{X}_r)^{-1}\bm{v}}
\end{equation}
We maintain the same model setup and experimental configuration as in the estimation experiments of Figure \ref{fig:first} or Figure \ref{fig:second}. For a fixed direction $\bm{v} = \bm{e}_1$, we construct $100(1-\alpha)\%$ confidence intervals with a nominal level $\alpha=0.05$. 
We report the average coverage probabilities and average standard deviations in each experimental configuration in Table \ref{tab:inf_res}.


From Table~\ref{tab:inf_res}, we see that when $\tn_r$ is small, both methods are slightly under coverage but they achieve nominal coverage in all the other scenarios.
 On the other hand, the average standard deviation of the proposed method is always smaller than that of $\tilde{\btheta}_r^{(\ols)}$. It demonstrates that the proposed inference method is asymptotically valid and achieves higher efficiency. In the supplements (Section \ref{sec:inf_supp}), we provide inference results when $N_f=2000$, which give analogous conclusions as in Table \ref{tab:inf_res}.

\begin{table}[t]
\centering
\resizebox{\textwidth}{!}{
\setlength{\tabcolsep}{8pt} 
\begin{tabular}{|c|ccc|ccc|ccc|}
\hline
      & \multicolumn{3}{c|}{$\tn_r/N_r$} 
      & \multicolumn{3}{c|}{$p$} 
      & \multicolumn{3}{c|}{$\delta$} \\

      & 0.1 & 0.2 & 0.3 & 10 & 50 & 100  & 1 & 2 & 3 \\
\hline
\multicolumn{10}{|c|}{\textbf{Average Coverage}} \\
\hline
ULS & 0.935 & 0.957 & 0.960 & 0.957 &  0.957 & 0.952 & 0.947 & 0.957 & 0.949 \\
\hline
OLS & 0.945 & 0.952 & 0.942 & 0.951 & 0.952  & 0.942  & 0.954 &  0.952 & 0.960 \\
\hline
\multicolumn{10}{|c|}{\textbf{Average SD} $(\times 10^{-2})$} \\
\hline
ULS & $0.81 $ & $0.75$ & $0.73$ & $0.74$ & $0.75$ & $0.76$  & $0.72$ & $0.75 $ & $0.80$ \\
\hline
OLS & $2.26$ & $1.59$ & $1.30$ & $1.58$ & $1.59$ & $1.60 $ & $1.59 $ & $1.59$ & $1.59$ \\
\hline
\end{tabular}
}
\caption{Inference results for settings (a), (b), and (c) with $N_r=20000$ and $N_f=1000$. Each setting is replicated with 1000 Monte Carlo trials. 
\label{tab:inf_res}}
\end{table}

\section{Real data applications}
\label{sec-data}
In this section, we apply the proposed unlearning procedure to two datasets: the Yelp review public dataset~\citep{zhang2015character} and UK Biobank clinical records~\citep{sudlow2015uk}, for the purpose of outlier removal.  We provide the code for this analysis at \url{https://github.com/jyxie96/mu_minimax/}.

 \subsection{Yelp dataset unlearning}
 The Yelp review public dataset (\url{https://huggingface.co/datasets/Yelp/yelp_review_full}) comprises over six million user reviews and associated ratings. Our objective is to predict ratings based on the textual content of the reviews. This dataset was previously utilized for an unlearning task by \citet{izzo2021approximate}. For computational feasibility, we begin by drawing a random sample of 200,000 reviews from the full corpus. 
 
We define forget set based on the length of the reviews. Very short reviews often lack meaningful context, while excessively long reviews can be overly convoluted, making both extremes potential sources of predictive noise. Therefore, we define the forget set to be the reviews whose lengths fall into the bottom 10\% and top 10\% quantiles. These reviews constitute targeted outliers, and our objective is to unlearn their specific influence on the model. The remaining reviews constitute the dataset to be retained, which we randomly partition into two distinct subsets: 80\% forms the remaining dataset $\calD_r$ and the other 20\% is held out as an independent test set for evaluating predictive performance. Accordingly, the full set of training data is $\calD=\calD_r \cup \calD_f$ with $N_r=129,144$ and $N_f=38,569$.  For the subsampled remaining data $\widetilde{\calD}_r$, we randomly draw $\tn_r$ reviews from $\calD_r$ with $\tn_r/N_r=0.1$. 

For text representation, following the methodology in~\citet{izzo2021approximate},  we use a separate sample of reviews outside the unlearning dataset to construct a vocabulary of 1,500 most frequent words. We then employ this vocabulary to build a bag-of-words representation, yielding a $p=1500$ dimensional feature vector for each review. 

We evaluate the prediction performance of the proposed $\hat{\btheta}_r^{(\uls)}$ and $\hat{\btheta}_r^{(\uls+)}$ against four benchmarks: the retrained estimate $\mathring{\btheta}_r^{(\rtr)}$, the pre-trained estimate $\hat{\btheta}_p$, the GradDiff estimate $\hat{\btheta}_r^{(\textup{diff})}$, the subsampled OLS $\tilde{\btheta}_r^{(\ols)}$ and the so-called projective residual update (PRU) estimate introduced in \citet{izzo2021approximate}. Performance is evaluated using the Mean Prediction Error (MPE) on the test set: $\sum_{i=1}^{n_\text{test}}(y_i-\boldsymbol{x}_i^{\top}\hat{\btheta})^2 /n_\text{test}$ for a generic estimator $\hat{\btheta}$.


The prediction results are reported in Figure \ref{fig:third} and the running time cost is reported in Table \ref{tab:time}. Notably, our proposed ULS and ULS+ methods achieve prediction accuracy closely tracking the oracle re-trained estimator, while maintaining high computational efficiency. In contrast, the other unlearning benchmarks, $\hat{\btheta}_p$, $\hat{\btheta}_r^{(\textup{diff})}$, and $\tilde{\btheta}_r^{(\ols)}$ suffer large prediction errors due to the large model discrepancy or the relatively small subsample size. $\hat{\btheta}^{(\textup{pru})}_r$ has comparable performance as $\mathring{\btheta}_r^{(\rtr)}$ when the number of forgotten samples is large, but at a substantial computational cost. Even with matrix acceleration, it is still nearly $300$ times slower than the computation time of other estimators. Indeed, it has been discussed in \citet{izzo2021approximate} that $\hat{\btheta}_r^{(\textup{pru})}$ requires computations involving $N_f^2p$ and $N_f^3$ terms, making it impractical when $N_f$ is large. To provide a comprehensive evaluation, we present corresponding results for $\tilde{n}_r/N_r\in\{0.2, 0.3\}$ in Figures~\ref{fig:yelp-1-supp} and \ref{fig:yelp-2-supp} of the supplements.

\begin{table}[t]
\centering
\small
\setlength{\tabcolsep}{8pt} 
\begin{tabular}{cccccccc}
\hline
 Estimator & $\mathring{\btheta}_r^{(\rtr)}$ & $\hat{\btheta}_r^{(\textup{pru})}$ &$\hat{\btheta}_r^{(\textup{diff})}$ &$\tilde{\btheta}_r^{(\ols)}$ &$\hat{\btheta}_r^{(\uls)}$ & $\hat{\btheta}_r^{(\uls+)}$\\
\hline
Running Time (s)&0.44 & 90.08 & 25.60 & 0.12 & 0.25 & 25.20 \\
\hline
\end{tabular}
\caption{Average running time per estimator over 20 folds on the Yelp dataset. For $\hat{\btheta}_r^{(\textup{diff})}$ and $\hat{\btheta}_r^{(\uls+)}$, running time includes 5-fold cross-validation over 20 candidates for tuning parameter selection. All experiments were conducted on an Intel Xeon Gold 5418Y CPU with 48 physical cores.
\label{tab:time}}
\end{table}

\begin{figure}[!t]
\centering
\includegraphics[width=0.6\textwidth]{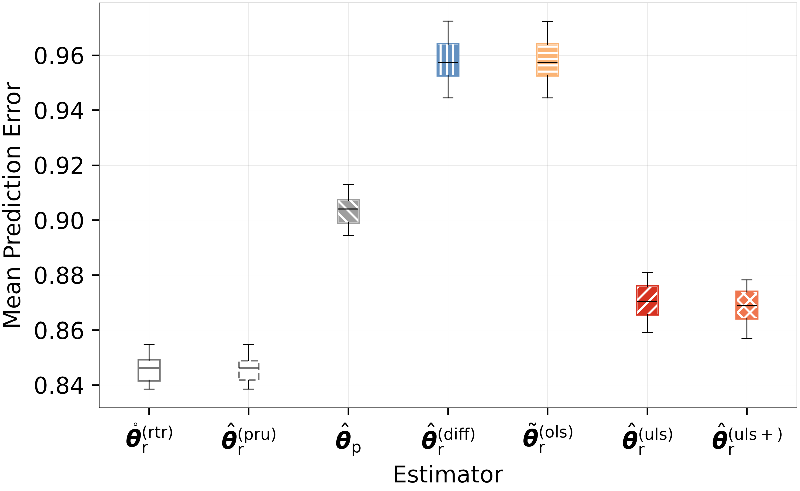}
\caption{Boxplots of mean prediction errors for the Yelp review rating unlearning task under the setting $\tilde{n}_r/N_r=0.1$. Each boxplot is based on 20 random splits of training and test samples.}
\label{fig:third}
\end{figure}


\subsection{UK Biobank dataset unlearning}
Given the pressing legal and ethical mandates for machine unlearning in medical informatics, we apply our framework to analyze inpatient lengths of stay using hospital episode records in 2022 from the UK Biobank repository~\citep{sudlow2015uk}. In clinical settings, predictive models for inpatient length of stay are crucial for operational resource planning and bed management. The predictors include admission method, management strategy, operative status, and responsible clinician specialty, which are highly informative for predicting duration.

Before constructing predictive model, we perform initial data cleaning by removing missing values and duplicate records to ensure data quality and integrity, resulting in a final dataset of $7,602$ records and 6 categorical predictors. These covariates are pre-processed to a design matrix with $p=16$ columns. See Section \ref{supp-data} in the supplements for more details.
For the response variable $y$, it is highly right-skewed as shown in the left panel of Figure~\ref{fig:logy} and we apply a $\log_{10}(y)$ transformation for reliable regression modeling.

Hospital episode records frequently include atypical or administratively irregular cases.  As the left panel of Figure~\ref{fig:logy} illustrates, certain episodes report extremely prolonged durations that do not represent routine inpatient trajectories. To systematically align the model with typical admissions, we employ the standard Interquartile Range (IQR) rule \citep{dekking2005modern} to define the outlier forget set. Specifically, we compute $\text{IQR}=q_3-q_1$ where $q_1$ and $q_3$ are the empirical quantiles of the original hospital stay duration. Episodes satisfying $y_i > q_3 + 1.5\,\text{IQR}$ are assigned to $\mathcal{D}_f$ (corresponding to a threshold of 38 days). The remaining records constitute the data to be retained. For robust evaluation, we employ the same partitioning and replication scheme as the Yelp dataset to construct the remaining set $\calD_r$ and the independent test set. This results in a forget set size of $N_f=577$ and remaining set  size of $N_r=5619$. 

\begin{figure}[!t]
\centering
\includegraphics[width=0.6\textwidth]{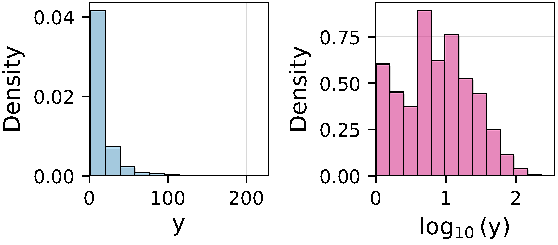}
\caption{Distribution of the response variable $y$ within the full dataset $\calD$. The left panel shows the distribution of raw response, which is severely right-skewed. The right panel displays the transformed response $\log_{10}(y)$, illustrating a significantly reduced scale.
\label{fig:logy}}
\end{figure}
We evaluate the predictive performance using the MPE of the log-transformed responses on the independent test set: $\sum_{i=1}^{n_\text{test}}(\log_{10}(y_i)-\boldsymbol{x}_i^{\top}\hat{\btheta})^2 /n_\text{test}$ for a generic estimator $\hat{\btheta}$. The results are reported in Figure~\ref{fig:fourth}. Our proposed ULS and ULS+ outperform the three baselines methods, demonstrating the lowest predictive error. Additional performance evaluations for $\tilde{n}_r/N_r \in \{0.2, 0.3\}$ are available in Figures~\ref{fig:ukb-uls-supp1} and \ref{fig:ukb-uls-supp2} of the supplement.

\begin{figure}[!t]
\centering
\includegraphics[width=0.6\textwidth]{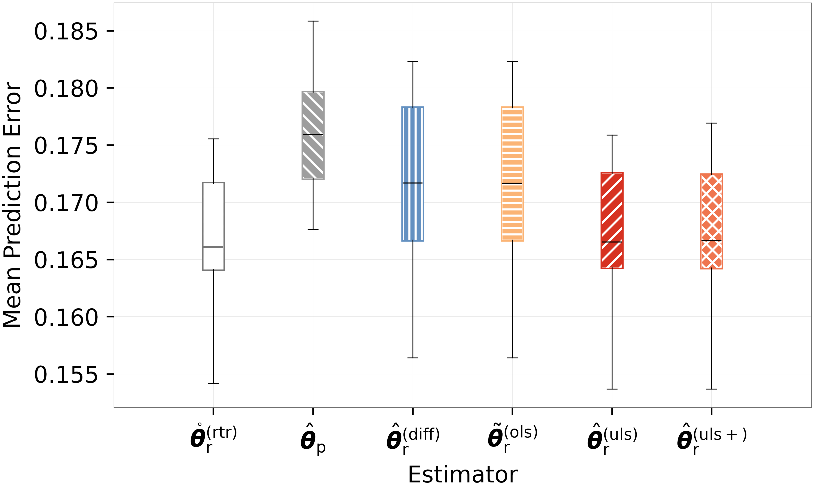}
\caption{Boxplots of mean prediction errors for the UK Biobank hospital episode unlearning task under the setting $\tilde{n}_r/N_r=0.1$.  Each boxplot is based on 20 random splits of training and test samples.
\label{fig:fourth}}
\end{figure}
We further conduct statistical inference for the coefficient corresponding to the ``one or more operative procedures performed" covariate. We compare the 95\% confidence intervals produced by our ULS method defined in (\ref{eq-ci_uls}) with that of OLS method based on $\widetilde{\calD}_r$ define in (\ref{eq-ci_ols}). As reported in Table~\ref{tab:ukb_inf}, both methods demonstrate high statistical significance, yielding strictly positive intervals that exclude zero. This indicates a positive association between operative procedures and hospital length of stay, which aligns perfectly with established clinical expectations. Crucially, the proposed ULS method exhibits superior statistical efficiency, yielding signficantly narrower confidence intervals compared to the subsampled OLS approach, thereby allowing for more precise clinical insights.

\begin{table}[t]
\centering
\small
\setlength{\tabcolsep}{8pt} 
\begin{tabular}{cccc}
\hline
 $\tn_r/N_r$ & 0.1 & 0.2 & 0.3 \\
\hline
ULS & $0.158\pm0.029$ & $0.163\pm0.026$ & $0.159\pm0.025$ \\
OLS & $0.181\pm0.078$ & $0.172\pm0.053$ & $0.147\pm0.044$ \\
\hline
\end{tabular}
\caption{Inference results for the coefficient of the ``one or more operative procedures performed'' covariate. The table reports the 95\% confidence intervals produced by ULS and OLS method across $\tn_r/N_r \in \{0.1, 0.2, 0.3\}$.
\label{tab:ukb_inf}}
\end{table}

\section{Discussion}
\label{sec-diss}
In this work, we provide a rigorous statistical framework for machine unlearning under generic loss functions. We develop computationally efficient methods for exact bias correction under distribution shifts and formally establish their minimax optimality. Furthermore, by proving the asymptotic normality of the unlearning estimator, we enable valid statistical inference and uncertainty quantification without the prohibitive cost of full retraining.   

The core methodology introduced in this work relies on the smoothness of loss functions, adapting this approach to non-smooth optimization landscapes (e.g., Lasso or quantile regression) presents a substantial technical challenge. Looking forward, developing minimax-optimal unlearning procedures for non-smooth problems, nonparametric models, and deep neural networks remains a vital frontier for future research. 


\vspace{-0.1in}

\section*{Supplementary materials}

Supplement to ``Efficient machine unlearning with minimax optimality''.
We provide the proofs of theorems and further results on simulations and data applications.

\vspace{-0.1in}

\section*{Competing interests}
No competing interest is declared.

\vspace{-0.1in}

\section*{Acknowledgments}
SL is supported in part by funds from the National Natural Science Foundation of China (No. 12571314). LZ is partly supported by ``AI for Math" Fund and NSF CAREER DMS-2340241.   
\bibliographystyle{abbrvnat}

\bibliography{mu-jrssb.bib}

\clearpage

\clearpage
\appendix

\begin{center}
{\large \bfseries Supplementary Material\par}
\vspace{0.3em}
\end{center}

\section{Convergence analysis of Algorithm \ref{alg-gd}}
\label{ap-loss}
For any $\btheta\in\R^p$, define $B(\btheta,d)=\{\btheta'\in\R^p:\|\btheta'-\btheta\|_2\leq d\}$.
\begin{condition}[$\lambda$-strong convexity]
\label{cond1-supp}
There exists some $\lambda>0$ such that  with probability at least $1-\exp\{-c_1\log \tn_r\}$,
\[
  \ell(\btheta;\widetilde{\calD}_r)-\ell(\btheta';\widetilde{\calD}_r)-\langle \dot{\ell}(\btheta';\widetilde{\calD}_r),\btheta-\btheta'\rangle\geq \frac{\lambda\tn_r}{2}\|\btheta-\btheta'\|_2^2
\]
for all $\btheta,\btheta'\in B(\btheta_r,d)$ with some constant $d>0$.
\end{condition}
\begin{condition}[$\mu$-smoothness]
\label{cond2-supp}
There exists some $\mu>0$ such that  with probability at least $1-\exp\{-c_1\log \tn_r\}$,
\[
  \ell(\btheta;\widetilde{\calD}_r)-\ell(\btheta';\widetilde{\calD}_r)-\langle \dot{\ell}(\btheta';\widetilde{\calD}_r),\btheta-\btheta'\rangle\leq \frac{\mu\tn_r}{2}\|\btheta-\btheta'\|_2^2
\]
for all $\btheta,\btheta'\in B(\btheta_r,d)$ with some constant $d>0$.
\end{condition}
\begin{condition}[Gradient smoothness]
\label{cond5-supp}
With probability at least $1-\exp\{-c_1\log \tn_r\}$,
\[
   \|\dot{\ell}(\btheta;\widetilde{\calD}_r)-\dot{\ell}(\btheta';\widetilde{\calD}_r)\|_2\leq C\tn_r\|\btheta-\btheta'\|_2
\]
for all $\btheta,\btheta'\in B(\btheta_r,d)$ with some constant $d>0$.
\end{condition}
\begin{condition}[Sub-exponential gradient vector]
\label{cond3-supp}
Assume that
$\dot{\ell}(\btheta_p;\bx^{(r)}_i,y^{(r)}_i)-\dot{\ell}(\btheta_r;\bx^{(r)}_i,y^{(r)}_i)$, $i=1,\dots,N_r$
are independent sub-Gaussian vector with sub-exponential norm bounded by $C\|\btheta_p-\btheta_r\|_2$. Moreover, assume $\|\btheta_p-\btheta_r\|_2\leq C$ for some positive constant $C$.
\end{condition}
\begin{condition}[Convergence rate of full-sample estimate]
\label{cond4-supp}
With probability at least $1-\exp\{-c_1\log \tn_r\}$,
\[
  \|\hat{\btheta}_p-\btheta_p\|_2\leq c_2\sqrt{\frac{p+\log \tn_r}{N}}~\text{and}~\|\mathring{\btheta}_r^{(\rtr)}-\btheta_r\|_2\leq c_2\sqrt{\frac{p+\log\tn_r}{N_r}}
\]
for some positive constants $c_1$ and $c_2$. 
\end{condition}
Conditions \ref{cond1-supp}-\ref{cond5-supp} are standard conditions for establishing the error contraction of a generic gradient descent algorithm \citep{balakrishnan2017statistical,nesterov2018lectures}. Conditions \ref{cond3-supp} and \ref{cond4-supp} are the regularity conditions for the establishing the convergence rate of the limit $\hat{\btheta}^{(\unl)}_{r,T}$ when $T\rightarrow\infty$.
 For squared loss and cross-entropy loss, these conditions are satisfied under standard sub-Gaussian conditions on the covariates.
\begin{theorem}
\label{thm-ap1}
Assume Condition \ref{cond1-supp}-\ref{cond4-supp}. For step size $\alpha=\frac{2}{N_r(\lambda+\mu)}$, it holds that
\[
  \|\hat{\btheta}_{r,t}^{(\unl)}-\mathring{\btheta}_r^{(\rtr)}\|_2\leq \frac{|\mu-\lambda|^t}{|\lambda +\mu|^t}\|\btheta_r-\btheta_p\|_2+C\|\btheta_r-\btheta_p\|_2\sqrt{\frac{p}{\tn_r}}+C'\sqrt{\frac{p}{N_r}}
\]
for some positive constant $C$ with probability at least $1-\exp\{-c_1\min\{p,\log\tn_r\}\}$.
\end{theorem}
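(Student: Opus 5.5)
Our approach is to compare the gradient descent iterates with the exact retrained estimate through the deterministic recursion
\[
\hat{\btheta}_{r,t}^{(\unl)}=\hat{\btheta}_{r,t-1}^{(\unl)}-\alpha\Bigl\{\tfrac{N_r}{\tn_r}\dot{\ell}(\hat{\btheta}_{r,t-1}^{(\unl)};\widetilde{\calD}_r)-\tfrac{N_r}{\tn_r}\dot{\ell}(\hat{\btheta}_p;\widetilde{\calD}_r)-\dot{\ell}(\hat{\btheta}_p;\calD_f)\Bigr\}.
\]
Using the identity (\ref{eq-gradient1}), we rewrite the forcing term as $\dot{\ell}(\hat{\btheta}_p;\calD_f)=\dot{\ell}(\mathring{\btheta}_r^{(\rtr)};\calD_r)-\dot{\ell}(\hat{\btheta}_p;\calD_r)=-\dot{\ell}(\hat{\btheta}_p;\calD_r)$. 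The update then becomes a gradient step on the surrogate $\tfrac{N_r}{\tn_r}\ell(\cdot;\widetilde{\calD}_r)$, plus a perturbation
\[
\bm{e}=\tfrac{N_r}{\tn_r}\dot{\ell}(\hat{\btheta}_p;\widetilde{\calD}_r)-\dot{\ell}(\hat{\btheta}_p;\calD_r)-\Bigl\{\tfrac{N_r}{\tn_r}\dot{\ell}(\mathring{\btheta}_r^{(\rtr)};\widetilde{\calD}_r)-\dot{\ell}(\mathring{\btheta}_r^{(\rtr)};\calD_r)\Bigr\}.
\]
The subtracted brace is zero in expectation, and inserting it is free because $\dot{\ell}(\mathring{\btheta}_r^{(\rtr)};\calD_r)=0$. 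With this rewriting,
\[
\hat{\btheta}_{r,t}^{(\unl)}-\mathring{\btheta}_r^{(\rtr)}=\Bigl[\hat{\btheta}_{r,t-1}^{(\unl)}-\mathring{\btheta}_r^{(\rtr)}-\alpha\tfrac{N_r}{\tn_r}\bigl\{\dot{\ell}(\hat{\btheta}_{r,t-1}^{(\unl)};\widetilde{\calD}_r)-\dot{\ell}(\mathring{\btheta}_r^{(\rtr)};\widetilde{\calD}_r)\bigr\}\Bigr]+\alpha\bm{e}.
\]

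The first step is the contraction of the bracketed term. By Conditions \ref{cond1-supp}–\ref{cond5-supp}, the normalized sample loss $\tn_r^{-1}\ell(\cdot;\widetilde{\calD}_r)$ is $\lambda$-strongly convex and $\mu$-smooth on $B(\btheta_r,d)$. The step size $\alpha N_r/\tn_r\cdot\tn_r=2/(\lambda+\mu)$ is the classical choice (Nesterov, Theorem 2.1.15). With it, the bracket has norm at most $\kappa\|\hat{\btheta}_{r,t-1}^{(\unl)}-\mathring{\btheta}_r^{(\rtr)}\|_2$, where $\kappa=|\mu-\lambda|/(\lambda+\mu)$. Unrolling the recursion gives
\[
\|\hat{\btheta}_{r,t}^{(\unl)}-\mathring{\btheta}_r^{(\rtr)}\|_2\le\kappa^t\|\hat{\btheta}_p-\mathring{\btheta}_r^{(\rtr)}\|_2+\frac{\alpha\|\bm{e}\|_2}{1-\kappa}.
\]
To keep all iterates in the ball, we argue by induction: both terms are small, so if $\hat{\btheta}_{r,t-1}^{(\unl)}$ lies in the ball, so does $\hat{\btheta}_{r,t}^{(\unl)}$. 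The initial distance is handled by Condition \ref{cond4-supp}:
\[
\|\hat{\btheta}_p-\mathring{\btheta}_r^{(\rtr)}\|_2\le\|\btheta_p-\btheta_r\|_2+O\Bigl(\sqrt{\tfrac{p+\log\tn_r}{N_r}}\Bigr).
\]
The first part yields the stated geometric term, and the second is absorbed into $C'\sqrt{p/N_r}$ once we note $\kappa^t\le1$.

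The main obstacle is bounding $\alpha\|\bm{e}\|_2\asymp N_r^{-1}\|\bm{e}\|_2$ by $C\|\btheta_r-\btheta_p\|_2\sqrt{p/\tn_r}+C'\sqrt{p/N_r}$. We first replace $\hat{\btheta}_p$ and $\mathring{\btheta}_r^{(\rtr)}$ by their population counterparts $\btheta_p$ and $\btheta_r$. The replacement error is controlled by Condition \ref{cond5-supp} together with its full-sample analogue: it is at most $C(N_r/\tn_r)\tn_r$ times the estimation errors from Condition \ref{cond4-supp}. After dividing by $N_r$, this gives $O(\sqrt{p/N_r})$, absorbed into $C'\sqrt{p/N_r}$. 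We then bound the main term. Writing $\bm{g}_i=\dot{\ell}(\btheta_p;\bx_i^{(r)},y_i^{(r)})-\dot{\ell}(\btheta_r;\bx_i^{(r)},y_i^{(r)})$, it equals
\[
\frac{N_r}{\tn_r}\sum_{i\in\widetilde{\mathcal{N}}_r}\bm{g}_i-\sum_{i\le N_r}\bm{g}_i,
\]
a centered subsampling fluctuation. By Condition \ref{cond3-supp}, each $\bm{g}_i$ is sub-exponential with norm $\lesssim\|\btheta_p-\btheta_r\|_2$. Bernstein's inequality combined with an $\epsilon$-net over the sphere (or conditioning on $\calD_r$ and applying a sampling-without-replacement concentration bound) then gives, with probability $1-e^{-cp}$,
\[
N_r^{-1}\|\cdot\|_2\lesssim\|\btheta_p-\btheta_r\|_2\sqrt{p/\tn_r}.
\]
Where the subsample and the full sample overlap, the fluctuation is the difference of two sums, and we handle it by treating the retained and omitted indices separately.

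Finally, we collect the three contributions and take a union bound over the events in Conditions \ref{cond1-supp}–\ref{cond4-supp} and the concentration event. This gives the stated bound with probability $1-\exp\{-c_1\min\{p,\log\tn_r\}\}$.
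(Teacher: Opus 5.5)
Your proposal is correct and follows the paper's proof essentially step for step. You use the same split of $\hat{\bu}_t$ into a contracted part and a $t$-independent perturbation (the paper's $\widehat{B}_{t-1}$ and $\widehat{E}$, obtained from the stationarity of $\hat{\btheta}_p$ and $\mathring{\btheta}_r^{(\rtr)}$), the same co-coercivity contraction with $\alpha N_r=2/(\lambda+\mu)$, the same replacement of $\hat{\btheta}_p,\mathring{\btheta}_r^{(\rtr)}$ by $\btheta_p,\btheta_r$ via Condition \ref{cond5-supp}, and the same Bernstein bound via Condition \ref{cond3-supp}. Your explicit tracking of the initial error $\|\hat{\btheta}_p-\mathring{\btheta}_r^{(\rtr)}\|_2$, and your note that a full-sample Lipschitz bound is needed, are slightly more careful than the paper's write-up. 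One small imprecision: the geometric term is not itself small, so keeping the iterates in $B(\btheta_r,d)$ actually requires $d$ to exceed the bound on $\|\btheta_p-\btheta_r\|_2$, which is what the paper assumes with $d\geq C+1$.
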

\begin{proof}[Proof of Theorem \ref{thm-ap1}]
Let $\hat{\bu}_{t}=\hat{\btheta}_{r,t}^{(\unl)}-\mathring{\btheta}_r^{(\rtr)}$.
By (\ref{eq-gd}),
\begin{align*}
\hat{\bu}_{t}&=\hat{\bu}_{t-1}-\alpha\{\frac{N_r}{\tn_r}\dot{\ell}(\hat{\btheta}^{(\textup{unl})}_{r,t-1};\widetilde{\calD}_r)-\frac{N_r}{\tn_r}\dot{\ell}(\hat{\btheta}_p;\widetilde{\calD}_r)-\dot{\ell}(\hat{\btheta}_p;\calD_f)\}\\
&=\underbrace{\hat{\bu}_{t-1}-\alpha\frac{N_r}{\tn_r}\{\dot{\ell}(\hat{\btheta}^{(\textup{unl})}_{r,t-1};\widetilde{\calD}_r)-\dot{\ell}(\mathring{\btheta}_r^{(\rtr)};\widetilde{\calD}_r)\}}_{\widehat{B}_{t-1}}-\underbrace{\alpha\{\frac{N_r}{\tn_r}\dot{\ell}(\mathring{\btheta}_r^{(\rtr)};\widetilde{\calD}_r)-\frac{N_r}{\tn_r}\dot{\ell}(\hat{\btheta}_p;\widetilde{\calD}_r)-\dot{\ell}(\hat{\btheta}_p;\calD_f)\}}_{\widehat{E}}.
\end{align*}
Therefore,
\[
   \|\hat{\bu}_{t}\|_2\leq \|\widehat{B}_{t-1}\|_2+\|\widehat{E}\|_2.
\]
For the first term,
\begin{align}
\|\widehat{B}_{t-1}\|_2^2=\|\hat{\bu}_{t-1}\|_2^2+\frac{\alpha^2N_r^2}{\tn_r^2}\|\dot{\ell}(\hat{\btheta}^{(\textup{unl})}_{r,t-1};\widetilde{\calD}_r)-\dot{\ell}(\mathring{\btheta}^{(\rtr)};\widetilde{\calD}_r)\|_2^2-2\alpha\frac{N_r}{\tn_r}\langle\hat{\bu}_{t-1},\dot{\ell}(\hat{\btheta}^{(\textup{unl})}_{r,t-1};\widetilde{\calD}_r)-\dot{\ell}(\mathring{\btheta}_r^{(\rtr)};\widetilde{\calD}_r)\rangle.\label{oracle-inequality}
\end{align}
To apply Conditions \ref{cond1-supp} and \ref{cond2-supp}, we first show that
\begin{align}
\label{event1-supp}
   \P(\hat{\btheta}^{(\textup{unl})}_{r,t-1},\mathring{\btheta}_r^{(\rtr)}\in B(\btheta_r,d),~\forall t=1,2,\dots)\geq 1-\exp\{-c_1\log \tn_r\}.
\end{align}
By Condition \ref{cond4-supp}, it is easy to see $\mathring{\btheta}_r^{(\rtr)}\in B(\btheta_r,d)$ with high probability.
By induction, we only need to show that $\hat{\btheta}^{(\textup{unl})}_{r,0}\in B(\btheta_r,d)$ due to the error contraction property developed below. As $\hat{\btheta}^{(\textup{unl})}_{r,0}=\hat{\btheta}_p$, we know that
\[
   \|\hat{\btheta}^{(\textup{unl})}_{r,0}-\btheta_r\|_2\leq \|\hat{\btheta}_p-\btheta_p\|_2+\|\btheta_p-\btheta_r\|_2\leq C+o(1)
\]
with probability at least $1-\exp\{-c_1\log \tn_r\}$. Hence, for $d\geq C+1$, we have shown (\ref{event1-supp}).

In this event, Conditions \ref{cond1-supp} and \ref{cond2-supp} can be applied and by classical results such as \citet{nesterov2018lectures}, with probability at least $1-\exp\{-c_2\log\tn_r\}$,
\begin{align*}
\langle\hat{\bu}_{t-1},\dot{\ell}(\hat{\btheta}^{(\textup{unl})}_{r,t-1};\widetilde{\calD}_r)-\dot{\ell}(\mathring{\btheta}_r^{(\rtr)};\widetilde{\calD}_r)\rangle\geq \frac{\lambda \mu \tn_r}{\lambda +\mu}\|\hat{\bu}_{t-1}\|_2^2+\frac{1}{\tn_r(\lambda+\mu)}\|\dot{\ell}(\hat{\btheta}^{(\textup{unl})}_{r,t-1};\widetilde{\calD}_r)-\dot{\ell}(\mathring{\btheta}_r^{(\rtr)};\widetilde{\calD}_r)\|_2^2.
\end{align*}
Therefore,
\begin{align*}
\|\widehat{B}_{t-1}\|_2^2\leq (1-\frac{2\alpha\lambda\mu N_r}{\lambda +\mu})\|\hat{\bu}_{t-1}\|_2^2+(\alpha^2\frac{N_r^2}{\tn_r^2}-\frac{2\alpha N_r}{(\lambda+\mu)\tn^2_r})\|\dot{\ell}(\hat{\btheta}^{(\textup{unl})}_{r,t-1};\widetilde{\calD}_r)-\dot{\ell}(\mathring{\btheta}_r^{(\rtr)};\widetilde{\calD}_r)\|_2^2
\end{align*}
Hence, for $\alpha=\frac{2}{N_r(\lambda+\mu)}$,
\begin{align*}
\|\widehat{B}_{t-1}\|_2^2\leq \frac{(\mu-\lambda)^2}{(\lambda +\mu)^2}\|\hat{\bu}_{t-1}\|_2^2.
\end{align*}

As $\dot{\ell}(\hat{\btheta}_p;\calD_r)+\dot{\ell}(\hat{\btheta}_p;\calD_f)=0$, we have
\begin{align*}
\widehat{E}&=\alpha\{\frac{N_r}{\tn_r}\dot{\ell}(\mathring{\btheta}_r^{(\rtr)};\widetilde{\calD}_r)-\frac{N_r}{\tn_r}\dot{\ell}(\hat{\btheta}_p;\widetilde{\calD}_r)+\dot{\ell}(\hat{\btheta}_p;\calD_r)\}\\
&=\frac{\alpha N_r}{\tn_r}\dot{\ell}(\mathring{\btheta}_r^{(\rtr)};\widetilde{\calD}_r)-\frac{N_r}{\tn_r}\dot{\ell}(\hat{\btheta}_p;\widetilde{\calD}_r)-\alpha\{\dot{\ell}(\mathring{\btheta}_r^{(\rtr)};\calD_r)-\dot{\ell}(\hat{\btheta}_p;\calD_r)\}\\
&=\underbrace{\frac{\alpha N_r}{\tn_r}\dot{\ell}(\btheta_r;\widetilde{\calD}_r)-\frac{\alpha N_r}{\tn_r}\dot{\ell}(\btheta_p;\widetilde{\calD}_r)-\alpha\dot{\ell}(\btheta_r;\calD_r)+\alpha\dot{\ell}(\btheta_p;\calD_r)}_{\widehat{E}_1}+ rem_E.
\end{align*}
By Condition \ref{cond3-supp},  by Bernstein's inequality
\[
   \P\left(\|\widehat{E}_1\|_2\geq C\alpha\|\btheta_r-\btheta_p\|_2\sqrt{\frac{p}{\tn_r}}\right)\leq \exp\{-c_1 p\}.
\]
For the remainder term $rem_{E}$, by Condition \ref{cond5-supp},
\begin{align*}
&\frac{N_r}{\tn_r}\|\dot{\ell}(\mathring{\btheta}_r^{(\rtr)};\widetilde{\calD}_r)-\dot{\ell}(\btheta_r;\widetilde{\calD}_r)\|_2\leq CN_r\|\mathring{\btheta}_r^{(\rtr)}-\btheta_r\|_2\\
&\|\dot{\ell}(\mathring{\btheta}_r^{(\rtr)};\calD_r)-\dot{\ell}(\btheta_r;\calD_r)\|_2\leq CN_r\|\mathring{\btheta}_r^{(\rtr)}-\btheta_r\|_2\\
&\frac{N_r}{\tn_r}\|\dot{\ell}(\hat{\btheta}_p;\widetilde{\calD}_r)-\dot{\ell}(\btheta_p;\widetilde{\calD}_r)\|_2\leq CN_r\|\hat{\btheta}_p-\btheta_p\|_2\\
&\|\dot{\ell}(\hat{\btheta}_p;\calD_r)-\dot{\ell}(\btheta_p;\calD_r)\|_2\leq CN_r\|\hat{\btheta}_p-\btheta_p\|_2.
\end{align*}
Hence, with probability at least $1-\exp\{-c_1p\}$,
\begin{align*}
  \|rem_E\|_2&\leq C\alpha N_r ( \|\hat{\btheta}_p-\btheta_p\|_2+\|\mathring{\btheta}_r^{(\rtr)}-\btheta_r\|_2)\\
  &\leq C \sqrt{\frac{p}{N_r}}.
\end{align*}
To summarize, with probability at least $1-\exp\{-c_1p\}$,
\begin{align*}
\|\hat{\bu}_t\|_2\leq \frac{|\mu-\lambda|}{|\lambda +\mu|}\|\hat{\bu}_{t-1}\|_2+C\|\btheta_r-\btheta_p\|_2\sqrt{\frac{p}{\tn_r}}+C'\sqrt{\frac{p}{N_r}}.
\end{align*}
By induction,
\begin{align*}
\|\hat{\bu}_t\|_2\leq \frac{|\mu-\lambda|^t}{|\lambda +\mu|^t}\|\btheta_p-\btheta_r\|_2+\frac{1-\frac{|\mu-\lambda|^t}{|\lambda +\mu|^t}}{1-\frac{|\mu-\lambda|}{|\lambda +\mu|}}(C\|\btheta_r-\btheta_p\|_2\sqrt{\frac{p}{\tn_r}}+C'\sqrt{\frac{p}{N_r}})
\end{align*}
\end{proof}

\section{Proofs}
Define the empirical covariance matrix and marginal statistics for each data set
\begin{align*}
\widehat{\Sig}_r=\frac{1}{N_r}(X^{(r)})^{\top}X^{(r)},~\widehat{M}_r=\frac{1}{N_r}(X^{(r)})^{\top}\by^{(r)},\widehat{E}_r=\frac{1}{N_r}(X^{(r)})^{\top}(\bm{y}^{(r)}-X^{(r)}\btheta_r)\\
\widehat{\Sig}_f=\frac{1}{N_f}(X^{(f)})^{\top}X^{(f)},~\widehat{M}_f=\frac{1}{N_f}(X^{(f)})^{\top}\by^{(f)},\widehat{E}_f=\frac{1}{N_f}(X^{(f)})^{\top}(\bm{y}^{(f)}-X^{(f)}\btheta_f)\\
\widetilde{\Sig}_r=\frac{1}{\tilde{n}_r}(\tilde{X}^{(r)})^{\top}\tilde{X}^{(r)},~\widetilde{M}_r=\frac{1}{\tilde{n}_r}(\tilde{X}^{(r)})^{\top}\tilde{\by}^{(r)},\widetilde{E}_r=\frac{1}{\tn_r}(\tilde{X}^{(r)})^{\top}(\tilde{\bm{y}}^{(r)}-\tilde{X}^{(r)}\btheta_r).
\end{align*}
\subsection{Proof of (\ref{ul-loss})}
\label{sec-proof1}
By (\ref{unlearn-est}),
\begin{align*}
\widetilde{\Sig}_r\hat{\btheta}^{(\uls)}_r=\widetilde{\Sig}_r\hat{\btheta}_p-\frac{\omega_f}{\omega_r}(\widehat{M}_f-\widehat{\Sig}_f\hat{\btheta}_p).
\end{align*}
We know that for a loss function
\[
   \btheta^{\top}A\btheta-2\btheta^{\top}\bm{b},
\]
where $A$ and $\bm{b}$ are given, the first-order condition gives $A\hat{\btheta}=\bm{b}$.
Therefore, the loss function of $\hat{\btheta}^{(\uls)}_r$ is
\begin{align*}
&\btheta^{\top}\widetilde{\Sig}_r\btheta-2\btheta^{\top}(\widetilde{\Sig}_r\hat{\btheta}_p-\frac{\omega_f}{\omega_r}(\widehat{M}_f-\widehat{\Sig}_f\hat{\btheta}_p))\\
=&\frac{1}{\omega_r}(\omega_r\btheta^{\top}\widetilde{\Sig}_r\btheta-2\btheta^{\top}(\omega_r\widetilde{\Sig}_r+\omega_f\widehat{\Sig}_f)\hat{\btheta}_p+2\omega_f\btheta^{\top}\widehat{M}_f)\\
\propto&\btheta^{\top}(\omega_r\widetilde{\Sig}_r+\omega_f\widehat{\Sig}_f)\btheta-2\btheta^{\top}(\omega_r\widetilde{\Sig}_r+\omega_f\widehat{\Sig}_f)\hat{\btheta}_p+\omega_f(2\btheta^{\top}\widehat{M}_f-\btheta^{\top}\widehat{\Sig}_f\btheta)\\
\propto &-\frac{\omega_f}{N_f}\ell(\btheta;\calD_f)+(\hat{\btheta}_p-\btheta)^{\top}(\omega_r\widetilde{\Sig}_r+\omega_f\widehat{\Sig}_f)(\hat{\btheta}_p-\btheta).
\end{align*}
\subsection{Proof of Lemma \ref{tlem2}}
\begin{proof}[Proof of Lemma \ref{tlem2}]
We only need to show the first and the fourth results. The second and third results follow from standard OLS theory.

For $\widehat{\Sig}_p=\omega_f\widehat{\Sig}_f+\omega_r\widehat{\Sig}_r$, we have
\begin{align*}
\hat{\btheta}_p-\btheta_r&=\widehat{\Sig}_p^{-1}(\omega_f\widehat{M}_f+\omega_r\widehat{M}_r-\widehat{\Sig}_p\btheta_r)\\
&=\widehat{\Sig}_p^{-1}(\omega_f\widehat{M}_f+\omega_r\widehat{E}_r-\omega_f\widehat{\Sig}_f\btheta_r)\\
&=\omega_f\widehat{\Sig}_p^{-1}(\widehat{M}_f-\widehat{\Sig}_f\btheta_r)+\omega_r\widehat{\Sig}_p^{-1}\widehat{E}_r\\
&=\omega_f\widehat{\Sig}_p^{-1}\widehat{\Sig}_f(\btheta_f-\btheta_r)+\omega_r\widehat{\Sig}_p^{-1}\widehat{E}_r+\omega_f\widehat{\Sig}_p^{-1}\widehat{E}_f.
\end{align*}
Define an event 
\[
  \mathcal{E}_0=\left\{\Lambda_{\min}(\widetilde{\Sig}_r)\geq c_{\Sig}^{-1}, \Lambda_{\min}(\widehat{\Sig}_p)\geq c_{\Sig}^{-1},\Lambda_{\max}(\widehat{\Sig}_f)\leq c_{\Sig},\|\omega_r\widehat{E}_r+\omega_f\widehat{E}_f\|_2\leq C\sqrt{p/N}\right\}.
\]
We first show that $\P(\mathcal{E}_0)\rightarrow 1$.
By Exercise 4.7.3 in \citet{vershynin2018high}, we know that
\begin{align}
&\P\left(\|\widehat{\Sig}_f-\Sig_f\|_2\leq C\sqrt{\frac{p}{N_f}}\|\Sig_f\|_2\right)\geq 1-2e^{-p}.\nonumber\\
&\P\left(\|\widetilde{\Sig}_r-\Sig_r\|_2\leq C\sqrt{\frac{p}{\tilde{n}_r}}\|\Sig_r\|_2\right)\geq 1-2e^{-p}.\nonumber\\
&\P\left(\|\widehat{\Sig}_r-\Sig_r\|_2\leq C\sqrt{\frac{p}{N_r}}\|\Sig_r\|_2\right)\geq 1-2e^{-p}.\label{concen1}
\end{align}
For the last statement of $\mathcal{E}_0$, using the sub-exponential property of $\bx_i^{(r)}\eps_i^{(r)}$ and $\bx_i^{(f)}\eps_i^{(f)}$, we have for any given $\|\bm{u}\|_2=1$,
\[
   \P(|\bm{u}^{\top}(\omega_r\widehat{E}_r+\omega_f\widehat{E}_f)|>t)\leq \exp\{-\min\{Nt^2,Nt\}\}.
\]
As 
\[
\|\omega_r\widehat{E}_r+\omega_f\widehat{E}_f\|_2=\sup_{\bm{u}\in\R^p:\|\bm{u}\|_2=1}\bm{u}^{\top}(\omega_r\widehat{E}_r+\omega_f\widehat{E}_f),
\]
taking a covering over $\{\bm{u}\in\R^p:\|\bm{u}\|_2=1\}$, we have
\begin{align*}
\P\left(\|\omega_r\widehat{E}_r+\omega_f\widehat{E}_f\|_2\geq t\right)\leq \exp\{-\min\{Nt^2,Nt\}+Cp\}.
\end{align*}
Hence, for $t=C\sqrt{p/N}$, we arrive at desired results.

In event $\mathcal{E}_0$ for some positive constant $C$, we have
\begin{align*}
\|\hat{\btheta}_p-\btheta_r\|_2&\leq \omega_f\|\widehat{\Sig}_p^{-1}\widehat{\Sig}_f(\btheta_f-\btheta_r)\|_2+\|\widehat{\Sig}_p^{-1}(\omega_r\widehat{E}_r+\omega_f\widehat{E}_f)\|_2\\
&\leq C\omega_f\delta+C\sqrt{\frac{p}{N}}
\end{align*}
with probability at least $1-\exp\{-c_1 p\}$.

For the transfer learning estimate, we know that
\begin{align*}
\hat{\btheta}^{(\textup{tl})}_r=(\widetilde{\Sig}_r+\lambda^{(\textup{tl})} I_p)^{-1}(\widetilde{M}_r+\lambda^{(\textup{tl})}\hat{\btheta}_p),
\end{align*}
which is a weighted average of $\tilde{\btheta}_r^{(\ols)}$ and $\hat{\btheta}_p$. Taking $\lambda^{(\textup{tl})} =C\sqrt{p/\tn_r}/(\sqrt{p/N}+\omega_f\delta)$ for some positive constant $C$, we have
\[
  \| \hat{\btheta}^{(\textup{tl})}_r-\btheta_r\|_2\leq C'\min\{\sqrt{\frac{p}{\tn_r}},\sqrt{\frac{p}{N}}+\omega_f\delta\}
\]
with probability at least $1-\exp\{-c_1p\}$.
\end{proof}
\subsection{Proof of Theorem \ref{thm2}}
\begin{lemma}[A technical lemma]
\label{tlem1}
If  $\|A^{-1}(\widehat{A}-A)\|_2=o(1)$, then
\begin{align*}
\| \widehat{A}^{-1}\widehat{B}-A^{-1}B\|_2 
&\leq (1+o(1))\|A^{-1}(A-\widehat{A})A^{-1}B\|_2+(1+o(1))\|A^{-1}(\widehat{B}-B)\|_2.
\end{align*}
\end{lemma}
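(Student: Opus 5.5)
The identity to exploit is
\[
\widehat{A}^{-1}\widehat{B}-A^{-1}B=\widehat{A}^{-1}(A-\widehat{A})A^{-1}B+\widehat{A}^{-1}(\widehat{B}-B),
\]
which follows by writing $\widehat{A}^{-1}\widehat{B}-A^{-1}B=\widehat{A}^{-1}(\widehat{B}-B)+(\widehat{A}^{-1}-A^{-1})B$ and using the resolvent identity $\widehat{A}^{-1}-A^{-1}=\widehat{A}^{-1}(A-\widehat{A})A^{-1}$. The whole proof then reduces to replacing the leading factor $\widehat{A}^{-1}$ by $A^{-1}$ at the price of a factor $1+o(1)$ in operator norm.

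\textbf{Step 1 (relating $\widehat{A}^{-1}$ to $A^{-1}$).} Set $\Delta=A^{-1}(\widehat{A}-A)$. Then $\widehat{A}=A(I+\Delta)$, and hence $\widehat{A}^{-1}=(I+\Delta)^{-1}A^{-1}$. The hypothesis gives $\|\Delta\|_2=o(1)$, in particular $\|\Delta\|_2<1$ eventually. Therefore $I+\Delta$ is invertible via the Neumann series, and $\widehat{A}$ is invertible as well. The same series bounds the operator norm of the inverse:
\[
\|(I+\Delta)^{-1}\|_2\leq \sum_{k\ge0}\|\Delta\|_2^k=\frac{1}{1-\|\Delta\|_2}=1+o(1).
\]

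\textbf{Step 2 (bounding the two terms).} Substitute $\widehat{A}^{-1}=(I+\Delta)^{-1}A^{-1}$ into each term of the identity.

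For the first term,
\[
\widehat{A}^{-1}(A-\widehat{A})A^{-1}B=(I+\Delta)^{-1}\,A^{-1}(A-\widehat{A})A^{-1}B,
\]
so its norm is at most $(1+o(1))\|A^{-1}(A-\widehat{A})A^{-1}B\|_2$.

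For the second term,
\[
\widehat{A}^{-1}(\widehat{B}-B)=(I+\Delta)^{-1}A^{-1}(\widehat{B}-B),
\]
so its norm is at most $(1+o(1))\|A^{-1}(\widehat{B}-B)\|_2$.

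The triangle inequality applied to the identity now gives exactly the stated bound.

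The only point needing care is the order of the factors. The hypothesis controls $A^{-1}(\widehat{A}-A)$, with $A^{-1}$ on the left, so one must factor $\widehat{A}=A(I+\Delta)$ rather than $(I+\Delta')A$. This ordering is what puts $(I+\Delta)^{-1}$ on the far left of each term, where it contributes only an operator-norm factor $1+o(1)$. The argument is otherwise elementary, and it holds whether $B$ is a vector or a matrix.
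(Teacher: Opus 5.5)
Your proof is correct, but it is organized differently from the paper's.

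The paper uses the other resolvent identity, $\widehat{A}^{-1}-A^{-1}=A^{-1}(A-\widehat{A})\widehat{A}^{-1}$. This places $A^{-1}$ on the far left and leaves the unknown $\widehat{A}^{-1}\widehat{B}$ inside the first term. It then adds and subtracts $A^{-1}B$ there, which puts $\|A^{-1}(A-\widehat{A})\|_2\,\|\widehat{A}^{-1}\widehat{B}-A^{-1}B\|_2$ on the right-hand side. That term is then absorbed into the left-hand side, a self-bounding argument.

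You instead keep $\widehat{A}^{-1}$ on the left and make it explicit through $\widehat{A}^{-1}=(I+\Delta)^{-1}A^{-1}$ and a Neumann series. Both routes give the same constant $(1-\|\Delta\|_2)^{-1}$.

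Your version has two small advantages. It yields the exact identity $\widehat{A}^{-1}\widehat{B}-A^{-1}B=(I+\Delta)^{-1}\{A^{-1}(A-\widehat{A})A^{-1}B+A^{-1}(\widehat{B}-B)\}$. It also proves that $\widehat{A}$ is invertible, whereas the paper's absorption step tacitly assumes $\widehat{A}^{-1}$ exists so that the absorbed quantity is finite. The paper's version, in exchange, needs no series expansion and is a one-line rearrangement.
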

\begin{proof}[Proof of Lemma \ref{tlem1}]
\begin{align*}
  \widehat{A}^{-1}\widehat{B}-A^{-1}B&=(\widehat{A}^{-1}-A^{-1})\widehat{B}+A^{-1}(\widehat{B}-B)\\
 &=A^{-1}(A-\widehat{A})\widehat{A}^{-1}\widehat{B}+A^{-1}(\widehat{B}-B).
\end{align*}
Therefore,
\begin{align*}
\| \widehat{A}^{-1}\widehat{B}-A^{-1}B\|_2&\leq \|A^{-1}(A-\widehat{A})\widehat{A}^{-1}\widehat{B}\|_2+\|A^{-1}(\widehat{B}-B)\|_2\\
&\leq \|A^{-1}(A-\widehat{A})A^{-1}B\|_2+\|A^{-1}(\widehat{B}-B)\|_2+\| \widehat{A}^{-1}\widehat{B}-A^{-1}B\|_2\|A^{-1}(A-\widehat{A})\|_2.
\end{align*}
If $\|A^{-1}(\widehat{A}-A)\|_2=o(1)$, then 
\begin{align*}
\| \widehat{A}^{-1}\widehat{B}-A^{-1}B\|_2\leq (1+o(1))\|A^{-1}(A-\widehat{A})A^{-1}B\|_2+(1+o(1))\|A^{-1}(\widehat{B}-B)\|_2.
\end{align*}
\end{proof}

\begin{proof}[Proof of Theorem \ref{thm2}]
We know that
\[
  \|\hat{\btheta}^{(\uls)}_r-\btheta_r\|_2\leq\|\hat{\btheta}^{(\uls)}_r-\mathring{\btheta}_r^{(\text{rtr})}\|_2+\|\mathring{\btheta}_r^{(\text{rtr})}-\btheta_r\|_2.
\]

\begin{align}
\hat{\btheta}^{(\uls)}_r-\mathring{\btheta}_r^{(\text{rtr})}&=(\omega_r\widetilde{\Sig}_r)^{-1}(\widetilde{\Sig}_p\hat{\btheta}_p-\omega_f\widehat{M}_f)-\mathring{\btheta}_r^{(\text{rtr})}\nonumber\\
&=(\omega_r\widetilde{\Sig}_r)^{-1}[\widetilde{\Sig}_p\hat{\btheta}_p-\omega_f\widehat{M}_f-\omega_r\widetilde{\Sig}_r\mathring{\btheta}_r^{(\text{rtr})}]\nonumber\\
&=(\omega_r\widetilde{\Sig}_r)^{-1}[\widehat{\Sig}_p\hat{\btheta}_p-\omega_f\widehat{M}_f-\omega_r\widetilde{\Sig}_r\mathring{\btheta}_r^{(\text{rtr})}+\omega_r(\widetilde{\Sig}_r-\widehat{\Sig}_r)\hat{\btheta}_p]\nonumber\\
&=(\omega_r\widetilde{\Sig}_r)^{-1}[\omega_r\widehat{M}_r-\omega_r\widetilde{\Sig}_r\mathring{\btheta}_r^{(\text{rtr})}+\omega_r(\widetilde{\Sig}_r-\widehat{\Sig}_r)\hat{\btheta}_p]\nonumber\\
&=(\omega_r\widetilde{\Sig}_r)^{-1}\omega_r(\widehat{\Sig}_r-\widetilde{\Sig}_r)(\mathring{\btheta}_r^{(\text{rtr})}-\hat{\btheta}_p)\nonumber\\
&=\omega_f\widetilde{\Sig}_r^{-1}(\widehat{\Sig}_r-\widetilde{\Sig}_r)\widehat{\Sig}_p^{-1}(\widehat{\Sig}_f\mathring{\btheta}_r^{(\text{rtr})}-\widehat{M}_f).\label{decomp1}
\end{align}
Define an event 
\[
  \mathcal{E}_1=\mathcal{E}_0\cap \left\{\|\widehat{E}_r\|_2\leq C\sqrt{\frac{p}{N_r}},\|\widehat{E}_f\|_2\leq C\sqrt{\frac{p}{N_f}}\right\}.
\]
Analogous to the proof of Lemma \ref{tlem2}, we can show that $\P(\mathcal{E}_1)\geq 1-\exp\{-c_1p\}$.

By (\ref{decomp1}), in event $\mathcal{E}_1$, we have
\begin{align*}
\|\hat{\btheta}^{(\uls)}_r-\mathring{\btheta}_r^{(\text{rtr})}\|_2&\leq \omega_f\|\widetilde{\Sig}_r^{-1}\|_2\|\widetilde{\Sig}_r-\Sig_r\|_2\|\widehat{\Sig}_p^{-1}(\widehat{\Sig}_f\mathring{\btheta}_r^{(\text{rtr})}-\widehat{M}_f)\|_2\\
&\leq \omega_f\|\widetilde{\Sig}_r^{-1}\|_2\|\widetilde{\Sig}_r-\widehat{\Sig}_r\|_2\|\widehat{\Sig}_p^{-1}\|_2\|\widehat{\Sig}_f\mathring{\btheta}_r^{(\text{rtr})}-\widehat{M}_f\|_2\\
&\leq \frac{\omega_f}{c_0^2}\sqrt{\frac{p}{\tilde{n}_r}}\|\widehat{\Sig}_f\mathring{\btheta}_r^{(\text{rtr})}-\widehat{M}_f\|_2.
\end{align*}

As $\mathring{\btheta}_r^{(\text{rtr})}=\btheta_r+\widehat{\Sig}_r^{-1}\widehat{E}_r$, we have in event $\mathcal{E}_1$,
\begin{align*}
\|\widehat{\Sig}_f\mathring{\btheta}_r^{(\text{rtr})}-\widehat{M}_f\|_2&=\|\widehat{\Sig}_f(\btheta_r-\btheta_f)+\widehat{\Sig}_f\widehat{\Sig}_r^{-1}\widehat{E}_r-\widehat{E}_f\|_2\\
&\leq \|\widehat{\Sig}_f\|_2\delta+\|\widehat{\Sig}_f\widehat{\Sig}_r^{-1}\widehat{E}_r\|_2+\|\widehat{E}_f\|_2\\
&\leq C\left(\delta+\sqrt{\frac{p}{\min\{N_f,N_r\}}}\right).
\end{align*}

To summarize, we arrive at with probability at least $1-\exp\{-c_1p\}$,
\begin{align*}
\|\hat{\btheta}^{(\uls)}_r-\mathring{\btheta}_r^{(\text{rtr})}\|_2&\leq C\omega_f\sqrt{\frac{p}{\tilde{n}_r}}\left(\delta+\sqrt{\frac{p}{\min\{N_f,N_r\}}}\right)\\
&\leq C\omega_f\sqrt{\frac{p}{\tilde{n}_r}}\left(\delta+\sqrt{\frac{p}{N_f}}+\sqrt{\frac{p}{N_r}}\right)\\
&\leq C\omega_f\sqrt{\frac{p}{\tilde{n}_r}}\left(\delta+\sqrt{\frac{p}{N_f}}\right)+o(1)\sqrt{\frac{p}{N_r}},
\end{align*}
where the last step is due to $p=o(\tn_r)$.
Note that
\begin{align*}
\omega_f\sqrt{\frac{p}{\tilde{n}_r}}\sqrt{\frac{p}{N_f}}=\frac{N_fp}{N\sqrt{\tn_rN_f}}=\sqrt{\frac{p}{N}}\sqrt{\frac{p}{\tn_r}\omega_f}=o(1)\sqrt{\frac{p}{N}}= o(1)\sqrt{\frac{p}{N_r}}.
\end{align*}
To summarize, with probability at least $1-\exp\{-c_1p\}$,
\[
  \|\hat{\btheta}^{(\uls)}_r-\btheta_r\|_2\leq c_2\sqrt{\frac{p}{N_r}}+c_2\omega_f\delta\sqrt{\frac{p}{\tn_r}}.
\]
\end{proof}

\subsection{Proof of Theorem \ref{thm0-gd}}
\begin{proof}[Proof of Theorem \ref{thm0-gd}]
Let $\hat{\bu}_t=\hat{\btheta}^{(\uls)}_{r,t}-\mathring{\btheta}_r^{(\text{rtr})}$ and
\[
\widehat{G}_{t-1}=N(\widetilde{\Sig}_p\hat{\btheta}_p-\omega_f\widehat{M}_f-\omega_r\widetilde{\Sig}_r\hat{\btheta}^{(\uls)}_{r,t-1}).
\] Then
\begin{align*}
\hat{\bu}_t&=\hat{\bu}_{t-1}+\alpha\widehat{G}_{t-1}\\
&=\hat{\bu}_{t-1}+\alpha N(\widetilde{\Sig}_p\hat{\btheta}_p-\omega_f\widehat{M}_f-\omega_r\widetilde{\Sig}_r\mathring{\btheta}_r^{(\text{rtr})})-\alpha N_r\widetilde{\Sig}_r\hat{\bu}_{t-1}\\
&=(I_p-\alpha N_r\widetilde{\Sig}_r)\hat{\bu}_{t-1}+\alpha N(\widehat{\Sig}_p\hat{\btheta}_p-\omega_f\widehat{M}_f-\omega_r\widetilde{\Sig}_r\mathring{\btheta}_r^{(\text{rtr})})+\alpha N(\widetilde{\Sig}_p-\widehat{\Sig}_p)\hat{\btheta}_p\\
&=\underbrace{(I_p-\alpha N_r\widetilde{\Sig}_r)}_{A}\hat{\bu}_{t-1}+\underbrace{\alpha N_r(\widehat{M}_r-\widetilde{\Sig}_r\mathring{\btheta}_r^{(\text{rtr})})+\alpha N_r(\widetilde{\Sig}_r-\widehat{\Sig}_r)\hat{\btheta}_p}_{\zeta}.
\end{align*}
By induction, we arrive at
\begin{align*}
\hat{\bu}_t&=A^t\hat{\bu}_0+\sum_{k=0}^{t-1}A^k\zeta\\
&=A^t(\hat{\btheta}_p-\mathring{\btheta}_r^{(\text{rtr})})+(I_p-A^t)(I_p-A)^{-1}\zeta.
\end{align*}
Hence,
\begin{align*}
\|\hat{\bu}_t\|_2\leq \underbrace{\|A^t(\hat{\btheta}_p-\mathring{\btheta}_r^{(\text{rtr})})\|_2}_{T_1}+\underbrace{\|(I_p-A^t)(I_p-A)^{-1}\zeta\|_2}_{T_2}.
\end{align*}
We bound the two terms separately.
For $T_1$,
\begin{align*}
T_1&\leq \|A\|_2^t\|\hat{\btheta}_p-\mathring{\btheta}_r^{(\text{rtr})}\|_2\\
&\leq (1-\alpha N_r\Lambda_{\min}(\widetilde{\Sig}_r))^t\|\hat{\btheta}_p-\mathring{\btheta}_r^{(\text{rtr})}\|_2.
\end{align*}
We will choose $\alpha$ such that
\[
1-\alpha N_r\Lambda_{\min}(\widetilde{\Sig}_r)<c_{\alpha} ~\text{and}~\Lambda_{\min}(A)\geq 0,
\]
which gives $0< \alpha < \frac{1-c_{\alpha}}{N_r\Lambda_{\max}(\widetilde{\Sig}_r)}$.  Moreover,
\begin{align*}
\|\hat{\btheta}_p-\mathring{\btheta}_r^{(\text{rtr})}\|_2&=\|\widehat{\Sig}_p^{-1}(\omega_r\widehat{M}_r+\omega_f\widehat{M}_f-\omega_r\widehat{M}_r-\omega_f\widehat{\Sig}_f\mathring{\btheta}_r^{(\text{rtr})})\|_2\\
&=\|\omega_f\widehat{\Sig}_p^{-1}\widehat{\Sig}_f(\btheta_f-\btheta_r)\|_2+\|\omega_f\widehat{\Sig}_p^{-1}(\widehat{E}_f+\widehat{\Sig}_f\widehat{\Sig}_r^{-1}\widehat{E}_r)\|_2\\
&\leq C\omega_f\delta+C\omega_f\sqrt{\frac{p}{\min\{N_f,N_r\}}}.
\end{align*}

For $T_2$,
\begin{align*}
T_2\leq \|(I_p-A)^{-1}\zeta\|_2=\|(\alpha N_r\widetilde{\Sig}_r)^{-1}\zeta\|_2.
\end{align*}
Note that
\begin{align*}
\zeta&=\alpha N_r(\widehat{M}_r-\widetilde{\Sig}_r\mathring{\btheta}_r^{(\text{rtr})})+\alpha N_r(\widetilde{\Sig}_r-\widehat{\Sig}_r)\mathring{\btheta}_r^{(\text{rtr})}-\alpha N_r(\widetilde{\Sig}_r-\widehat{\Sig}_r)(\mathring{\btheta}_r^{(\text{rtr})}-\hat{\btheta}_p)\\
&=\alpha N_r(\widehat{\Sig}_r-\widetilde{\Sig}_r)(\mathring{\btheta}_r^{(\text{rtr})}-\hat{\btheta}_p)\\
&=\alpha N_r(\widehat{\Sig}_r-\widetilde{\Sig}_r)\widehat{\Sig}_p^{-1}\omega_f(\widehat{\Sig}_f\mathring{\btheta}_r^{(\text{rtr})}-\widehat{\Sig}_f\btheta_f-\widehat{E}_f).
\end{align*}
Hence,
\begin{align*}
T_2\leq \|(\alpha N_r\widetilde{\Sig}_r)^{-1}\zeta\|_2=\omega_f\|\widetilde{\Sig}_r^{-1}(\widehat{\Sig}_r-\widetilde{\Sig}_r)\widehat{\Sig}_p^{-1}(\widehat{\Sig}_f\mathring{\btheta}_r^{(\text{rtr})}-\widehat{\Sig}_f\btheta_f-\widehat{E}_f)\|_2.
\end{align*}
In view of (\ref{decomp1}) and the following proof, we can show that
\[
\P\left(T_2\leq \omega_f\delta\sqrt{\frac{p}{\tilde{n}_r}}\right)\geq 1-\exp\{-c_1p\}.
\]
Combining the upper bounds for $T_1$ and $T_2$, we have
\begin{align*}
\|\hat{\bu}_t\|_2\leq c_1(1-\alpha N_r\Lambda_{\min}(\widetilde{\Sig}_r))^t(\omega_f\delta+\omega_f\sqrt{\frac{p}{\min\{N_f,N_r\}}})+c_1\omega_f\delta\sqrt{\frac{p}{\tilde{n}_r}}
\end{align*}
with probability at least $1-\exp\{-c_2p\}$.
\end{proof}

%

\subsection{Proofs in Section \ref{sec5}}
\begin{proof}[Proof of Theorem \ref{thm-inf}]
By (\ref{decomp1}), we have
\begin{align*}
\bm{v}^{\top}(\hat{\btheta}^{(\uls)}_r-\btheta_r)&=\bm{v}^{\top}(\mathring{\btheta}_r^{(\text{rtr})}-\btheta_r)+\omega_f\bm{v}^{\top}\widetilde{\Sig}_r^{-1}(\widehat{\Sig}_r-\widetilde{\Sig}_r)\widehat{\Sig}_p^{-1}(\widehat{\Sig}_f\mathring{\btheta}_r^{(\text{rtr})}-\widehat{M}_f)\\
&=\underbrace{\bm{v}^{\top}\Sig_r^{-1}\widehat{E}_r}_{T_3} +\underbrace{\omega_f\bm{v}^{\top}\Sig_r^{-1}(\widehat{\Sig}_r-\widetilde{\Sig}_r)\Sig_p^{-1}\Sig_f(\btheta_r-\btheta_f)}_{T_4}+rem,
\end{align*}
where  $\Sig_p=\omega_r\Sig_r+\omega_f\Sig_f$ and
\begin{align*}
  rem&=\underbrace{\bm{v}^{\top}(\widehat{\Sig}_r^{-1}-\Sig_r^{-1})\widehat{E}_r}_{rem_{1}}\\
  &+
  \underbrace{\omega_f\bm{v}^{\top}\widetilde{\Sig}_r^{-1}(\widehat{\Sig}_r-\widetilde{\Sig}_r)\widehat{\Sig}_p^{-1}(\widehat{\Sig}_f\mathring{\btheta}_r^{(\text{rtr})}-\widehat{M}_f)-\omega_f\bm{v}^{\top}\Sig_r^{-1}(\widehat{\Sig}_r-\widetilde{\Sig}_r)\Sig_p^{-1}\Sig_f(\btheta_r-\btheta_f)}_{rem_{2}}.
\end{align*}

For $rem_1$, we have
\begin{align*}
rem_1&\leq \|\bm{v}^{\top}(\widehat{\Sig}_r^{-1}-\Sig_r^{-1})\widehat{E}_r\|_2\\
&\leq \|\bm{v}^{\top}\Sig_r^{-1}(\widehat{\Sig}_r-\Sig_r)\|_2\|\widehat{\Sig}_r^{-1}(X^{(r)})^{\top}\bm\eps^{(r)}/N_r\|_2.
\end{align*}
Using the Bernstein's inequality for sub-exponential random variables, we know that
\begin{align*}
&\P(\|\bm{v}^{\top}\Sig_r^{-1}(\widehat{\Sig}_r-\Sig_r)\|_2\geq t_1)\leq e^p\exp\left\{-\min\{\frac{N_r^2t_1^2}{N_r\|\bm{v}\|_2^2},\frac{N_rt_1}{\|\bm{v}\|_2}\}\right\}\\
&\P(\|\widehat{E}_r\|_2\geq t_2)\leq e^p\exp\left\{-\min\{\frac{N_r^2t_2^2}{N_r},N_rt_2\}\right\}.
\end{align*}
Taking $t_1=C\|\bm{v}\|_2\sqrt{(p+(\log N_r)^{1/2})/N_r}$ and $t_2=C\sqrt{(p+(\log N_r)^{1/2})/N_r}$, we have
\[
   \P\left(rem_1\geq C\|\bm{v}\|_2\frac{p+\sqrt{\log N_r}}{N_r}\right)\leq \exp\{-\log N_r\}.
\]
For $rem_2$, note that
\begin{align*}
rem_2&\leq\omega_f|\bm{v}^{\top}(\Sig_r^{-1}-\widetilde{\Sig}_r^{-1})(\widehat{\Sig}_r-\widetilde{\Sig}_r)\widehat{\Sig}_p^{-1}(\widehat{\Sig}_f\mathring{\btheta}_r^{(\text{rtr})}-\widehat{M}_f)|\\
&\quad +\omega_f|\bm{v}^{\top}\Sig_r^{-1}(\widehat{\Sig}_r-\widetilde{\Sig}_r)\{\widehat{\Sig}_p^{-1}-\Sig_p^{-1}\}\Sig_f(\btheta_r-\btheta_f)|\\
&\quad+\omega_f|\bm{v}^{\top}\Sig_r^{-1}(\widehat{\Sig}_r-\widetilde{\Sig}_r)[\widehat{\Sig}_p^{-1}(\widehat{\Sig}_f\widehat{\Sig}_r^{-1}\widehat{E}_r-\widehat{E}_f)]|.
\end{align*}
By (\ref{concen1}) and event $\mathcal{E}_1$,
\begin{align*}
rem_2&\leq \frac{p\omega_f\|\bm{v}\|_2}{\tn_r}\|\btheta_r-\btheta_f\|_2+\frac{p\omega_f\|\bm{v}\|_2}{\sqrt{\tn_rN}}\|\btheta_r-\btheta_f\|_2+\omega_f\|\bm{v}\|_2\sqrt{\frac{p}{\tn_r}}\sqrt{\frac{p}{N_r\wedge N_f}}\\
&=\frac{\omega_f\|\bm{v}\|_2\|\btheta_r-\btheta_f\|_2}{\sqrt{\tn_r}}\frac{p+\sqrt{\log N_r}}{\sqrt{\min\{\tn_r,N_r,N_f\}}}
\end{align*}
with probability at least $1-\exp\{-c_1\log N_r\}$.

By our assumption that $p^2=o(\min\{\tn_r,N_f\})$, we have 
\begin{align}
\label{rem-order}
   rem_1=o\left(\frac{\|\bm{v}\|_2}{\sqrt{N_r}}\right)~\text{and}~rem_2=o\left(\frac{\omega_f\|\bm{v}\|_2\delta}{\sqrt{\tn_r}}\right)
\end{align}
with probability at least $1-\exp\{-c_1\log N_r\}$.

For $T_3$, note that
\[
 T_3=\frac{1}{N_r}\bm{v}^{\top}\Sig_r^{-1}(\tilde{X}^{(r)})^{\top}\tilde{\bm\eps}^{(r)}+\frac{1}{N_r}\bm{v}^{\top}\Sig_r^{-1}(\check{X}^{(r)})^{\top}\check{\bm\eps}^{(r)}.
\]
\begin{align*}
T_4&=\frac{\tn_r-N_r}{\tn_rN_r}\omega_f\bm{v}^{\top}\Sig_r^{-1}((\widetilde{X}^{(r)})^{\top}\widetilde{X}^{(r)}-\tn_r\Sig_r)\Sig_p^{-1}\Sig_f(\btheta_r-\btheta_f)\\
&\quad+\frac{1}{N_r}\omega_f\bm{v}^{\top}\Sig_r^{-1}((\check{X}^{(r)})^{\top}\check{X}^{(r)}-(N_r-\tn_r)\Sig_r)\Sig_p^{-1}\Sig_f(\btheta_r-\btheta_f).
\end{align*}

By definition of $a_i^{(r)}$ and $b_i^{(r)}$,
 \[
    T_3+T_4=\sum_{i\in\tilde{\mathcal{N}}_r}(\frac{1}{N_r}a^{(r)}_i+\frac{\tn_r-N_r}{\tn_rN_r}b^{(r)}_i)+\sum_{i\notin\tilde{\mathcal{N}}_r}(\frac{1}{N_r}a^{(r)}_i+\frac{1}{N_r}b^{(r)}_i),
 \]
$\E[T_3+T_4]=0$, and $\textup{Var}(T_3+T_4)=V_r$. 

As $\E[a_i^{(r)}]=0$ and $\E[b_i^{(r)}]=0$ and $a_i^{(r)}+cb_i^{(r)}$ are independent of each other for any constant $c$. We will apply Lyapunov's central limit theorem. We first compute
\begin{align*}
\textup{Var}(T_3+T_4)&=\frac{1}{N_r^2}\sum_{i\in\tilde{\mathcal{N}}_r}\E[(a^{(r)}_i+\frac{N_r-\tn_r}{\tn_r}b^{(r)}_i)^2]+\frac{1}{N_r^2}\sum_{i\notin\tilde{\mathcal{N}}_r}\E[(a^{(r)}_i+b_i^{(r)})^2].
\end{align*}

By Condition \ref{cond-inf},
\[
\textup{Var}(T_3+T_4)\geq \frac{1-\rho}{N_r^2}\sum_{i=1}^{N_r}\E[(a_i^{(r)})^2]+\frac{(1-\rho)(N_r-\tn_r)^2}{\tn_r^2N_r^2}\sum_{i\in\tilde{\mathcal{N}}_r}\E[(b^{(r)}_i)^2]+\frac{1-\rho}{N_r^2}\sum_{i\notin\tilde{\mathcal{N}}_r}\E[(b_i^{(r)})^2].
\]
As $\tn_r\leq cN_r$ for some constant $0<c<1$, we have
\[
\textup{Var}(T_3+T_4)\geq \frac{1-\rho}{N_r^2}\sum_{i=1}^{N_r}\E[(a_i^{(r)})^2]+\frac{c(1-\rho)}{\tn_r^2}\sum_{i\in\tilde{\mathcal{N}}_r}\E[(b^{(r)}_i)^2]+\frac{1-\rho}{N_r^2}\sum_{i\notin\tilde{\mathcal{N}}_r}\E[(b_i^{(r)})^2].
\]
Using Condition \ref{cond-inf} again, we have
\begin{align}
\label{var-lowerbound}
\textup{Var}(T_3+T_4)\geq \frac{c_1\|\bm{v}\|_2^2}{N_r}+\frac{c_2\|\bm{v}\|_2^2\omega_f^2\|\btheta_r-\btheta_f\|_2^2}{\tn_r}.
\end{align}
By (\ref{rem-order}), we have
\[
   rem_1+rem_2=o(1)\textup{Var}^{1/2}(T_3+T_4).
\]
We only need to check the fourth-moment conditions. Let
\[
 s_4=\sum_{i\in\tilde{\mathcal{N}}_r}\E[(\frac{1}{N_r}a^{(r)}_i+\frac{\tn_r-N_r}{\tn_rN_r}b^{(r)}_i)^4]+\sum_{i\not\in\tilde{\mathcal{N}}_r}\E[(\frac{1}{N_r}a^{(r)}_i+\frac{1}{N_r}b^{(r)}_i)^4].
\]
Note that
\begin{align*}
s_4\leq \frac{8}{N_r^4}\sum_{i=1}^{N_r}\E[(a_i^{(r)})^4]+8(\frac{\tn_r-N_r}{\tn_rN_r})^4\sum_{i\in\tilde{\mathcal{N}}_r}\E[(b^{(r)}_i)^4]+\frac{8}{N_r^4}\sum_{i\notin\tilde{\mathcal{N}}_r}\E[(b^{(r)}_i)^4].
\end{align*}
Using the sub-Gaussian properties of $\bx_i^{(r)}$ and $\eps_i^{(r)}$, we know that
\[
  \E[(a_i^{(r)})^4]\leq C\|\bm{v}\|_2^4~\text{and}~\E[(b_i^{(r)})^4]\leq C\omega_f^4\|\bm{v}\|_2^4\delta^4.
\]
Hence,
\[
  s_4\leq \frac{C\|\bm{v}\|_2^4}{N_r^3}+\frac{\omega_f^4\|\bm{v}\|_2^4\delta^4}{\tn_r^3}.
\]
By (\ref{var-lowerbound}), 
\begin{align}
\label{var-lowerbound2}
\frac{s_4}{\textup{Var}^2(T_3+T_4)}\leq \frac{1}{\tn_r}.
\end{align}

Therefore, by Lyapunov's central limit theorem, we have
\begin{align*}
\frac{\bm{v}^{\top}(\hat{\btheta}^{(\uls)}_r-\btheta_r)}{\sqrt{V_r}}\xrightarrow{D}N(0,1).
\end{align*}

\end{proof}

\begin{proof}[Proof of Lemma \ref{lem-var}]

Note that 
\[
 b_i^{(r)}=\bm{v}^{\top}\Sig_r^{-1}(\bx_i^{(r)}(\bx_i^{(r)})^{\top}-\Sig_r)(\btheta_r-\btheta_p).
\]
Let
\begin{align*}
\widetilde{V}_r=\frac{1}{N_r^2}\sum_{i\in\widetilde{\mathcal{N}}_r}(a^{(r)}_i+\frac{\tn_r-N_r}{\tn_r}b^{(r)}_i)^2+\frac{N_r-\tn_r}{N_r^2\tn_r}\sum_{i\in \widetilde{\mathcal{N}}_r}(a^{(r)}_i+b^{(r)}_i)^2.
\end{align*}
Let
\begin{align*}
\check{V}_r=\frac{1}{N_r^2}\sum_{i\in\widetilde{\mathcal{N}}_r}(a^{(r)}_i+\frac{\tn_r-N_r}{\tn_r}b^{(r)}_i)^2+\frac{1}{N_r^2}\sum_{i\notin \widetilde{\mathcal{N}}_r}(a^{(r)}_i+b^{(r)}_i)^2.
\end{align*}
Consider the decomposition
\begin{align}
|\widehat{V}_r-V_r|\leq \underbrace{|\widehat{V}_r-\widetilde{V}_r|}_{T_5}+\underbrace{|\widetilde{V}_r-\check{V}_r|}_{T_6}+\underbrace{|\check{V}_r-V_r|}_{T_7}.
\end{align}
We will bound each term separately.

For $T_7$, by Chebyshev's inequality
\begin{align*}
\P(|\check{V}_r-V_r|/V_r\geq t)\leq \frac{\E[\check{V}^2_r]}{V_r^2t^2}.
\end{align*}
By (\ref{var-lowerbound2}), 
\[
  \E[\check{V}_r^2]=s_4\leq CV_r^2/\tn_r.
\]
Hence,
\begin{align*}
\P(T_7/V_r\geq \tn_r^{-1/4})\leq \frac{1}{\sqrt{\tn_r}}.
\end{align*}
For $T_6$, note that
\begin{align*}
  |\widetilde{V}_r-\check{V}_r|&=\frac{N_r-\tn_r}{N_r^2\tn_r}\sum_{i\in \widetilde{\mathcal{N}}_r}(a^{(r)}_i+b^{(r)}_i)^2-\frac{1}{N_r^2}\sum_{i \notin \widetilde{\mathcal{N}}_r}(a^{(r)}_i+b^{(r)}_i)^2\\
  &=\frac{N_r-\tn_r}{N_r^2\tn_r}\sum_{i=1}^{N_r}(a^{(r)}_i+b^{(r)}_i)^2\mathbb{1}(i\in\widetilde{\mathcal{N}}_r)-\frac{1}{N_r^2}\sum_{i=1}^{N_r}(a^{(r)}_i+b^{(r)}_i)^2\mathbb{1}(i\not\in\widetilde{\mathcal{N}}_r)\\
  &=\frac{N_r}{N_r^2\tn_r}\sum_{i=1}^{N_r}(a^{(r)}_i+b^{(r)}_i)^2\mathbb{1}(i\in\widetilde{\mathcal{N}}_r)-\frac{1}{N_r^2}\sum_{i=1}^{N_r}(a^{(r)}_i+b^{(r)}_i)^2.
\end{align*}
As $\widetilde{\calD}_r$ is a random sample from $\calD_r$, we know that
\[
  \E[ \frac{N_r}{N_r^2\tn_r}\sum_{i\in \mathcal{N}_r}(a^{(r)}_i+b^{(r)}_i)^2\mathbb{1}(i\in\widetilde{\mathcal{N}}_r)|\calD_r]=\frac{1}{N_r^2}\sum_{i\in \mathcal{N}_r}(a^{(r)}_i+b^{(r)}_i)^2.
\]
By Bernstein's inequality, we have
\begin{align*}
\P(T_6\geq t|\calD_r)\leq \exp\left\{-\frac{t^2}{\frac{1}{N_r^2\tn_r^2}\sum_{i=1}^{N_r}(a_i^{(r)}+b_i^{(r)})^4\P(i\in\widetilde{\mathcal{N}}_r|\calD_r)+\frac{1}{N_r\tn_r}\max_{i\leq N_r}(a_i^{(r)}+b_i^{(r)})^2t}\right\}.
\end{align*}
We know that $\P(i\in\widetilde{\mathcal{N}}_r|\calD_r)=\tn_r/N_r$.
For 
\[
t\geq C\max\left\{\frac{\sqrt{\tn_r\log N_r\sum_{i=1}^{N_r}(a_i^{(r)}+b_i^{(r)})^4/N_r}}{N_r\tn_r},\frac{\log N_r\max_{i\leq N_r}(a_i^{(r)}+b_i^{(r)})^2}{N_r\tn_r}\right\},
\]
we have
\[
   \P(T_6\geq t|\calD_r)\leq \exp\{-c_1\log N_r\}.
\]
Using the sub-Gaussian property of $\bx_i^{(r)}$ and $\eps_i^{(r)}$, we know that
\[
   \max_{i\leq \mathcal{N}_r}(a_i^{(r)}+b_i^{(r)})^2\leq C\|\bm{v}\|_2^2\log N_r(1+\omega_f^2\|\bdelta\|_2^2)
\]
with probability at least $1-\exp\{-c_1 N_r\}$. Hence,
\begin{align*}
\P\left(T_6\geq \frac{\|\bm{v}\|_2^2(1+\omega_f^2\|\btheta_r-\btheta_f\|_2^2)\log N_r}{N_r\sqrt{\tn_r}}\right)\leq \exp\{-c_1\log N_r\}.
\end{align*}
By (\ref{var-lowerbound}), we arrive at
\[
  \P\left(\frac{T_6}{V_r}\geq \frac{C\log N_r}{\sqrt{\tn_r}}+\frac{C\sqrt{\tn_r}\log N_r}{N_r}\right)\leq \exp\{-c_1\log N_r\}
\]
and $\log N_r/\sqrt{\tn_r}+\sqrt{\tn_r}\log N_r/N_r=o(1)$.

It is left to bound $T_5$.
\begin{align*}
\widehat{V}_r-\widetilde{V}_r&=\underbrace{\frac{1}{N_r^2}\sum_{i\in\widetilde{\mathcal{N}}_r}(\hat{a}^{(r)}_i+\frac{\tn_r-N_r}{\tn_r}\hat{b}^{(r)}_i)^2-\frac{1}{N_r^2}\sum_{i\in\widetilde{\mathcal{N}}_r}(a^{(r)}_i+\frac{\tn_r-N_r}{\tn_r}b^{(r)}_i)^2}_{T_{5,1}}\\
&\quad+\underbrace{\frac{N_r-\tn_r}{N_r^2\tn_r}\sum_{i\in \widetilde{\mathcal{N}}_r}(\hat{a}^{(r)}_i+\hat{b}^{(r)}_i)^2-\frac{N_r-\tn_r}{N_r^2\tn_r}\sum_{i\in \widetilde{\mathcal{N}}_r}(a^{(r)}_i+b^{(r)}_i)^2}_{T_{5,2}}.
\end{align*}
We first bound $T_{5,1}$. Note that
\begin{align*}
T_{5,1}&=\frac{1}{N_r^2}\sum_{i\in\widetilde{\mathcal{N}}_r}\{\hat{a}_i^{(r)}-a_i^{(r)}+\frac{\tn_r-N_r}{\tn_r}(\hat{b}_i^{(r)}-b_i^{(r)})\}^2\\
&\quad+\frac{2}{N_r^2}\sum_{i\in\widetilde{\mathcal{N}}_r}\{\hat{a}_i^{(r)}-a_i^{(r)}+\frac{\tn_r-N_r}{\tn_r}(\hat{b}_i^{(r)}-b_i^{(r)})\}\{a_i^{(r)}+\frac{\tn_r-N_r}{\tn_r}b_i^{(r)}\}\\
&\leq \frac{1}{N_r^2}\sum_{i\in\widetilde{\mathcal{N}}_r}\{\hat{a}_i^{(r)}-a_i^{(r)}+\frac{\tn_r-N_r}{\tn_r}(\hat{b}_i^{(r)}-b_i^{(r)})\}^2\\
&\quad+\widetilde{V}^{1/2}_r\sqrt{\frac{1}{N_r^2}\sum_{i\in\widetilde{\mathcal{N}}_r}\{\hat{a}_i^{(r)}-a_i^{(r)}+\frac{\tn_r-N_r}{\tn_r}(\hat{b}_i^{(r)}-b_i^{(r)})\}^2}.
\end{align*}
As we have shown that $|\widetilde{V}_r-V_r|/V_r=o_P(1)$, to show that $T_{5,1}/V_r=o_P(1)$, it suffices to show that
\begin{align}
\label{toshow1}
   \frac{1}{N_r^2}\sum_{i\in\widetilde{N}_r}\{\hat{a}_i^{(r)}-a_i^{(r)}\}^2/V_r=o_P(1)~\text{and} ~\frac{1}{N_r^2}(\frac{\tn_r-N_r}{\tn_r})^2\sum_{i\in\widetilde{N}_r}\{\hat{b}_i^{(r)}-b_i^{(r)}\}^2/V_r=o_P(1).
\end{align}

Similarly, for $T_{5,2}$, we have
\begin{align*}
T_{5,2}
&\leq \frac{N_r-\tn_r}{N_r^2\tn_r}\sum_{i\in\widetilde{\mathcal{N}}_r}\{\hat{a}_i^{(r)}-a_i^{(r)}+\hat{b}_i^{(r)}-b_i^{(r)}\}^2\\
&\quad+\widetilde{V}^{1/2}_r\sqrt{\frac{N_r-\tn_r}{N_r^2\tn_r}\sum_{i\in\widetilde{\mathcal{N}}_r}\{\hat{a}_i^{(r)}-a_i^{(r)}+\hat{b}_i^{(r)}-b_i^{(r)}\}^2}.
\end{align*}
To show that $T_{5,2}/V_r=o_P(1)$, it suffices to show that
\begin{align}
\label{toshow2}
   \frac{N_r-\tn_r}{N_r^2\tn_r}\sum_{i\in\widetilde{\mathcal{N}}_r}\{\hat{a}_i^{(r)}-a_i^{(r)}\}^2/V_r=o_P(1)~\text{and}~ \frac{N_r-\tn_r}{N_r^2\tn_r}\sum_{i\in\widetilde{\mathcal{N}}_r}\{\hat{b}_i^{(r)}-b_i^{(r)}\}^2/V_r=o_P(1).
\end{align}
In view of (\ref{toshow1}) and (\ref{toshow2}), to show $(T_{5,1}+T_{5,2})/V_r=o(1)$, it suffices to show that
\begin{align}
\label{toshow3}
   \frac{1}{N_r\tn_r}\sum_{i\in\widetilde{\mathcal{N}}_r}\{\hat{a}_i^{(r)}-a_i^{(r)}\}^2/V_r=o_P(1)~\text{and} \frac{1}{\tn^2_r}\sum_{i\in\widetilde{\mathcal{N}}_r}\{\hat{b}_i^{(r)}-b_i^{(r)}\}^2/V_r=o_P(1).
\end{align}
Let $\hat{\eps}_i^{(r)}=y_i^{(r)}-\bx_i^{(r)}\hat{\btheta}_r^{(\ols)}$, $i=1,\dots,N_r$. 
By definition,
\begin{align*}
&\sum_{i\in\widetilde{\mathcal{N}}_r}(\hat{a}_i^{(r)}-a_i^{(r)})^2=\sum_{i\in\widetilde{\mathcal{N}}_r}\{\bm{v}^{\top}(\widetilde{\Sig}_r^{-1}\bx_i^{(r)}\hat{\eps}^{(r)}_i-\Sig_r^{-1}\bx_i^{(r)}\eps_i^{(r)})\}^2\\
&\leq 2\sum_{i\in\widetilde{\mathcal{N}}_r}\{\bm{v}^{\top}(\widetilde{\Sig}_r^{-1}-\Sig_r^{-1})\bx_i^{(r)}\eps^{(r)}_i\}^2+2\sum_{i\in\widetilde{\mathcal{N}}_r}\{\bm{v}^{\top}\widetilde{\Sig}_r^{-1}\bx_i^{(r)}(\hat{\eps}_i^{(r)}-\eps_i^{(r)})\}^2\\
&\leq 2\|\bm{v}\|_2^2\sum_{i\in\widetilde{\mathcal{N}}_r}\|\bx_i^{(r)}\eps^{(r)}_i\|_2^2\|\widetilde{\Sig}_r^{-1}-\Sig_r^{-1}\|_2^2+2\|\bm{v}\|_2^2\max_{i\in\tilde{\mathcal{N}}_r}\|(\bx_i^{(r)})^{\top}(\hat{\btheta}_r^{(\uls)}-\btheta_r)\|_2^2\sum_{i\in\tilde{\mathcal{N}}_r}\|(\bx_i^{(r)})^{\top}\widetilde{\Sig}_r^{-1}\bm{v}\|_2^2,
\end{align*}
We know that $\E[|\bx_i^{(r)}\eps^{(r)}_i\|_2^2]\leq Cp$.
By Markov's inequality,
\[
  \P\left(\sum_{i\in\widetilde{\mathcal{N}}_r}\|\bx_i^{(r)}\eps^{(r)}_i\|_2^2\geq C\tn_r p\log \tn_r\right)\leq \frac{C}{\log \tn_r}.
  \]
  On the other hand,
  \[
     \sum_{i\in\tilde{\mathcal{N}}_r}\|\bx_i^{(r)}\widetilde{\Sig}_r^{-1}\bm{v}\|_2^2= \sum_{i\in\tilde{\mathcal{N}}_r}\bm{v}^{\top}\widetilde{\Sig}_r^{-1}\bx_i^{(r)}(\bx_i^{(r)})^{\top}\widetilde{\Sig}_r^{-1}\bm{v}=\tn_r\bm{v}^{\top}\widetilde{\Sig}_r^{-1}\bm{v}.
  \]
  We have shown that
\[
  \P\left(\|\widetilde{\Sig}_r^{-1}-\Sig_r^{-1}\|_2\leq \sqrt{\frac{p+\log \tn_r}{\tn_r}},~\|\hat{\btheta}_r^{(\uls)}-\btheta_r\|_2\leq \sqrt{\frac{p+\log \tn_r}{N_r}}+\omega_f\delta\sqrt{\frac{p+\log \tn_r}{\tn_r}} \right)\geq 1-\exp\{-c_1\log \tn_r\}.
\]
  Hence,
  \[
    \P\left(  \sum_{i\in\tilde{\mathcal{N}}_r}\|\bx_i^{(r)}\widetilde{\Sig}_r^{-1}\bm{v}\|_2^2\geq C\tn_r\right)\leq \exp\{-c_1 \tn_r\}.
  \]
Moreover,
\[
  \P\left(\max_{i\in\tilde{\mathcal{N}}_r}\|\bx_i\|_2^2\geq p\log \tn_r\right)\leq \exp\{-c_1\log \tn_r\}.
\]
To summarize, with probability at least $1-1/\log \tn_r$,
\begin{align*}
\sum_{i\in\widetilde{\mathcal{N}}_r}(\hat{a}_i^{(r)}-a_i^{(r)})^2
&\leq 2\|\bm{v}\|_2^2(p\log \tn_r(p+\log \tn_r)+\tn_rp\log \tn_r\|\hat{\btheta}^{(\uls)}_r-\btheta_r\|_2^2)\\
\frac{N_r-n_r}{N_r^2n_r}\sum_{i\in\widetilde{\mathcal{N}}_r}(\hat{a}_i^{(r)}-a_i^{(r)})^2
&\leq C\frac{\|\bm{v}\|_2^2(p+\log \tn_r)p\log \tn_r}{N_r\tn_r}.
\end{align*}
By (\ref{var-lowerbound}), the first equation in (\ref{toshow3}) holds assuming $\max\{p,\log\tn_r\}^2=o(\tn_r)$.

Similarly,
\begin{align*}
\sum_{i\in\widetilde{N}_r}(\hat{b}_i^{(r)}-b_i^{(r)})^2&\leq 4\sum_{i\in\widetilde{\mathcal{N}}_r}\{\bm{v}^{\top}(\widetilde{\Sig}_r^{-1}-\Sig_r^{-1})(\bx_i^{(r)}(\bx_i^{(r)})^{\top}-\Sig_r)(\hat{\btheta}_r^{(\uls)}-\hat{\btheta}_p)\}^2\\
&\quad+ 4\sum_{i\in\widetilde{\mathcal{N}}_r}\{\bm{v}^{\top}\widetilde{\Sig}_r^{-1}(\widetilde{\Sig}_r-\Sig_r)(\hat{\btheta}_r^{(\uls)}-\hat{\btheta}_p)\}^2\\
&+4\sum_{i\in\widetilde{\mathcal{N}}_r}\{\bm{v}^{\top}\Sig_r^{-1}(\bx_i^{(r)}(\bx_i^{(r)})^{\top}-\Sig_r)(\hat{\btheta}_r^{(\uls)}-\hat{\btheta}_p-\btheta_r+\btheta_p)\}^2\\
&\leq C\|\hat{\btheta}_r^{(\uls)}-\hat{\btheta}_p\|_2^2\|\bm{v}^{\top}(\widetilde{\Sig}_r^{-1}-\Sig_r^{-1})\|_2^2\sup_{\|\bm{u}\|_2=1}\sum_{i\in\widetilde{\mathcal{N}}_r}|\bm{u}^{\top}(\bx_i^{(r)}(\bx_i^{(r)})^{\top}-\Sig_r)\bm{u}|^2\\
&\quad+C\tn_r\|\bm{v}\|_2^2\|\widetilde{\Sig}_r-\Sig_r\|_2^2\|\hat{\btheta}_r^{(\uls)}-\hat{\btheta}_p\|_2^2\\
&\quad +C\|\bm{v}\|_2^2\|\hat{\btheta}_r^{(\uls)}-\hat{\btheta}_p-\btheta_r+\btheta_p\|_2^2\sup_{\|\bm{u}\|
_2=1}\sum_{i\in\widetilde{\mathcal{N}}_r}|\bm{u}^{\top}(\bx_i^{(r)}(\bx_i^{(r)})^{\top}-\Sig_r)\bm{u}|^2.
\end{align*}

By (\ref{unlearn-est}),  with probability at least $1-\exp\{-c_1\min\{N_f,\tn_r\}\}$,
\begin{align*}
\|\hat{\btheta}_r^{(\uls)}-\hat{\btheta}_p\|_2=\frac{N_f}{N_r}\|\widetilde{\Sig}_r^{-1}\widehat{\Sig}_f(\hat{\btheta}_f-\hat{\btheta}_p)\|_2\leq C\omega_f\delta.
\end{align*}
By Theorem \ref{thm2} and Lemma \ref{tlem2}, with probability at least $1-\exp\{-c_1\log n\}$,
\begin{align*}
\|\hat{\btheta}_r^{(\uls)}-\hat{\btheta}_p-\btheta_r+\btheta_p\|_2&\leq \|\hat{\btheta}_r^{(\uls)}-\btheta_r\|_2+\|\hat{\btheta}_p-\btheta_p\|_2\\
&\leq C\sqrt{\frac{p+\log \tn_r}{N_r}}+\omega_f\delta\sqrt{\frac{p+\log\tn_r}{\tn_r}}.
\end{align*}

For any given $\bm{u}\in\R^p$ such that $\|\bm{u}\|_2=1$, $\bm{u}^{\top}\bx_i^{(r)}(\bx_i^{(r)})^{\top}\bm{u}$ is sub-exponential. Using the concentration inequality for sub-Weibull random variables \citep{kuchibhotla2022moving}, we have
\begin{align*}
\P\left(
\sum_{i\in\widetilde{\mathcal{N}}_r}|\bm{u}^{\top}(\bx_i^{(r)}(\bx_i^{(r)})^{\top}-\Sig_r)\bm{u}|^2\geq t\right)\leq \exp\left\{-C\min\{\frac{t^2}{\tn_r},\sqrt{t}\}\right\}.
\end{align*}
Taking a uniform over the unit ball, we have
\[
   \P\left(
\sum_{i\in\widetilde{\mathcal{N}}_r}|\bm{u}^{\top}(\bx_i^{(r)}(\bx_i^{(r)})^{\top}-\Sig_r)\bm{u}|^2\geq \sqrt{\tn_rp}+p^2+\log\tn_r\right)\leq \exp\left\{-C\sqrt{\log \tn_r}\right\}.
\]
Combining above inequalities, we arrive at
\begin{align*}
\sum_{i\in\widetilde{N}_r}(\hat{b}_i^{(r)}-b_i^{(r)})^2&\lesssim\|\bm{v}\|_2^2\omega_f^2\delta^2\frac{p+\log\tn_r}{\tn_r}(\sqrt{\tn_rp}+p^2+\log\tn_r)+\|\bm{v}\|_2^2\omega_f^2\delta^2(p+\log \tn_r)\\
&\quad+\|\bm{v}\|_2^2(\frac{p+\log\tn_r}{N_r}+\omega_f^2\delta^2\frac{p+\log\tn_r}{\tn_r})(\sqrt{\tn_rp}+p^2+\log\tn_r)\\
&\lesssim\|\bm{v}\|_2^2\omega_f^2\delta^2(p+\log \tn_r)+\|\bm{v}\|_2^2\frac{\tn_r(p+\log\tn_r)}{N_r}
\end{align*}
given that $\tn_r\gg p^2$.

By (\ref{var-lowerbound}), we know that the second equation in (\ref{toshow3}) holds with probability at least $1-\exp\{-c_1\log \tn_r\}$.

Hence, we have shown that $T_5/V_r=o_P(1)$. Together with previous arguments, the proof is complete.
\end{proof}

\subsection{Proofs in Section \ref{sec6}}
\begin{proof}[Proof of Theorem \ref{thm-combine}]

The first-order condition gives
\[
  \omega_f(\widehat{M}_f-\widehat{\Sig}_f\hat{\btheta}^{(\uls+)}_r)-\widetilde{\Sig}_p(\hat{\btheta}_p-\hat{\btheta}^{(\uls+)}_r)-\lambda(\widetilde{M}_r-\widetilde{\Sig}_r\hat{\btheta}^{(\uls+)}_r)=0.
\]
Therefore,
\begin{align*}
\hat{\btheta}_r^{(\uls+)}&=\{(\omega_r+\lambda)\widetilde{\Sig}_r\}^{-1}(\widetilde{\Sig}_p\hat{\btheta}_p+\lambda\widetilde{M}_r-\omega_f\widehat{M}_f)\\
\hat{\btheta}^{(\uls+)}_r-\btheta_r&=\{(\omega_r+\lambda)\widetilde{\Sig}_r\}^{-1}(\widetilde{\Sig}_p\hat{\btheta}_p-\omega_f\widehat{M}_f-\omega_r\widetilde{\Sig}_r\btheta_r+\lambda\widetilde{E}_r)\\
&=\{(\omega_r+\lambda)\widetilde{\Sig}_r\}^{-1}(\omega_r\widetilde{\Sig}_r(\hat{\btheta}^{(\uls)}_r-\btheta_r)+\lambda\widetilde{E}_r).
\end{align*}
Using the results of Theorem \ref{thm2}, we know that with probability at least $1-\exp\{-c_1p\}$, we have
\begin{align*}
\|\hat{\btheta}^{(\uls+)}_r-\btheta_r\|_2&\leq C\frac{\omega_r}{\omega_r+\lambda}(\sqrt{\frac{p}{N_r}}+\omega_f\delta\sqrt{\frac{p}{\tilde{n}_r}})+C\frac{\lambda}{\omega_r+\lambda}\sqrt{\frac{p}{\tn_r}}\\
&\leq C\sqrt{\frac{p}{N_r}}+C\sqrt{\frac{p}{\tn_r}}(\frac{\omega_r}{\omega_r+\lambda}\omega_f\delta+\frac{\lambda}{\omega_r+\lambda}).
\end{align*}
Hence, for $\lambda=C\omega_r\omega_f\delta$ for any positive constant $C$, we have
\[
(\frac{\omega_r}{\omega_r+\lambda}\omega_f\delta+\frac{\lambda}{\omega_r+\lambda})\leq (1+C)\min \{\omega_f\delta,1\}.
\]
The proof is complete now.

\end{proof}

\begin{proof}[Proof of Theorem \ref{thm-gd}]
Let $\lambda^{(\textup{diff})}\geq 2\Lambda_{\max}(\Sig_f)/\Lambda_{\min}(\Sig_r)$  so that $\lambda^{(\textup{diff})}\widetilde{\Sig}_r-\widehat{\Sig}_f$ is positive definite with probability $1-\exp\{-c_1 p\}$. In this case,
\[
    \Lambda_{\min}(\lambda^{(\textup{diff})}\widetilde{\Sig}_r-\widehat{\Sig}_f)\geq c\lambda^{(\textup{diff})} \Lambda_{\min}(\Sig_r)
\]
for some positive constant $c$ with probability $1-\exp\{-c_1 p\}$.
\begin{align*}
\hat{\btheta}^{(\textup{diff})}_r=(\lambda^{(\textup{diff})}\widetilde{\Sig}_r-\widehat{\Sig}_f)^{-1}(\lambda^{(\textup{diff})}\widetilde{M}_r-\widehat{M}_f).
\end{align*}
Hence,
\begin{align*}
\hat{\btheta}^{(\textup{diff})}_r-\btheta_r&=(\lambda\widetilde{\Sig}_r-\widehat{\Sig}_f)^{-1}(\lambda\widetilde{M}_r-\lambda^{(\textup{diff})}\widetilde{\Sig}_r\btheta_r-\widehat{M}_f+\widehat{\Sig}_f\btheta_r)\\
&=\underbrace{(\lambda^{(\textup{diff})}\widetilde{\Sig}_r-\widehat{\Sig}_f)^{-1}(\lambda^{(\textup{diff})}\widetilde{E}_r-\widehat{E}_f)}_{T_8}+\underbrace{(\lambda^{(\textup{diff})}\widetilde{\Sig}_r-\widehat{\Sig}_f)^{-1}\widehat{\Sig}_f(\btheta_r-\btheta_f)}_{T_9}.
\end{align*}
We know that
\[
  \|T_9\|_2\leq \Lambda_{\min}^{-1}(\lambda^{(\textup{diff})}\widetilde{\Sig}_r-\widehat{\Sig}_f)\|\widehat{\Sig}_f\|_2\delta\leq C\Lambda_{\min}^{-1}(\lambda^{(\textup{diff})}\widetilde{\Sig}_r-\widehat{\Sig}_f)\delta,
\]
with probability at least $1-\exp\{-c p\}$.

For $T_8$, with probability at least $1-\exp\{-c p\}$, we have
\begin{align*}
\|T_8\|_2\leq C\Lambda_{\min}^{-1}(\lambda^{(\textup{diff})}\widetilde{\Sig}_r-\widehat{\Sig}_f)(\lambda^{(\textup{diff})} \sqrt{\frac{p}{\tn_r}}+\sqrt{\frac{p}{N_f}}).
\end{align*}
Therefore, with probability at least $1-\exp\{-c p\}$,
\begin{align*}
\|\hat{\btheta}^{(\textup{diff})}_r-\btheta_r\|_2&\leq C\Lambda_{\min}^{-1}(\lambda^{(\textup{diff})}\widetilde{\Sig}_r-\widehat{\Sig}_f)(\lambda^{(\textup{diff})} \sqrt{\frac{p}{\tn_r}}+\sqrt{\frac{p}{N_f}}+\delta)\\
&\leq C\Lambda^{-1}_{\min}(\Sig_r)( \sqrt{\frac{p}{\tn_r}}+\frac{\sqrt{\frac{p}{N_f}}+\delta}{\lambda^{(\textup{diff})}}).
\end{align*}
 For 
 \begin{align*}
   \lambda^{(\textup{diff})}&=\max\left\{\frac{\sqrt{\frac{p}{N_f}}+\delta}{\sqrt{p/\tn_r}},2\Lambda_{\max}(\Sig_f)/\Lambda_{\min}(\Sig_r)\right\}\\
   &=\max\left\{\sqrt{\frac{\tilde{\omega}_r}{\omega_f}}+\sqrt{\frac{\tn_r}{p}}\delta,2\Lambda_{\max}(\Sig_f)/\Lambda_{\min}(\Sig_r)\right\},
 \end{align*}
 we have
 \begin{align*}
\|\hat{\btheta}^{(\textup{diff})}_r-\btheta_r\|_2
&\leq C\sqrt{\frac{p}{\tn_r}}+C\min\left\{\sqrt{\frac{p}{N_f}}+\delta,\sqrt{\frac{p}{\tn_r}}\right\}\\
&\leq C'\sqrt{\frac{p}{\tn_r}}.
\end{align*}

\end{proof}

\subsection{Proof of Theorem \ref{thm-minimax} and Theorem \ref{thm-minimax2}}
We see that Theorem \ref{thm-minimax} is a special case of Theorem \ref{thm-minimax2}. Hence, it suffices to prove the more general theorem, Theorem \ref{thm-minimax2}.
\begin{proof}[Proof of Theorem \ref{thm-minimax2}]

\underline{\textbf{Preliminaries.}}
We will prove this bound based on Theorem 8 of \citet{ma2024high}.  Specifically, we will first construct $B_p(\delta)\subset \Theta_p(\delta)$ and let $D=\max_{\bbeta,\bbeta'\in B_p(\delta)} \|E_{1:p,}(\bbeta-\bbeta')\|_2$ for $E_{1:p,}=(I_p, 0)\in\R^{p\times \text{dim}(\bbeta)}$.
We show that
\[
    \inf_{\hat{\btheta}_r\in\mathcal{F}(\hat{\btheta}_p,\calD_f,\widetilde{\calD}_r)}\sup_{B_p(\delta)} \E[\|\hat{\btheta}_r-\btheta_r\|_2^2]\geq \rho^2 D^2
\]
for some constant $\rho>0$. Then Theorem 8 of \citet{ma2024high} implies that
\begin{align}
\label{lower-prob}
  \inf_{\hat{\btheta}_r\in\mathcal{F}(\hat{\btheta}_p,\calD_f,\widetilde{\calD}_r)}\sup_{\Theta_p(\delta)} \P(\||\hat{\btheta}_r-\btheta_r\|_2^2\geq c^2D^2)\geq \rho^2/2
\end{align}
for some small enough constant $c$. 

We first describe the construction of $B_p(\delta)$.
Let $\Phi=\{\bm\phi^{(1)},\dots,\bm\phi^{(M)}\}=\{0,1\}^p$ for $M=2^p$. 
Let $\bbeta^{(j)}=(\btheta_{r}^{(j)},\btheta_{f}^{(j)},\Sig_r^{(j)},\Sig_f^{(j)},(\sig^{(j)}_r)^2,\sig^2_f)$. With parameter $\bbeta^{(j)}$, we specify the data distribution as
\begin{align*}
 y^{(r)}_i&=(\bx^{(r)}_{i})^{\top}\btheta^{(j)}_r+\eps^{(r)}_{i},~\bx^{(r)}_{i}\sim N(0,\Sig^{(j)}_r),~\eps^{(r)}_{i}\sim_{i.i.d} N(0,(\sig_r^{(j)})^2),~i=1,\dots,N_r,\\
y^{(f)}_i&=(\bx^{(f)}_{i})^{\top}\btheta^{(j)}_f+\eps^{(f)}_{i},~\bx^{(f)}_{i}\sim N(0,\Sig^{(j)}_f),~\eps^{(f)}_{i}\sim_{i.i.d} N(0,\sig_f^2),~i=1,\dots,N_f.
\end{align*}
We set  $\sig_f^2=1$ throughout the proof. 

Define $d_j(\bbeta,\bbeta')=|e_j^{\top}(\bbeta-\bbeta')|$ and $g(x)=x^2$. Then 
\[
      \|\btheta_{r}-\btheta_{r}'\|_2^2=\sum_{j=1}^pg(d_j(\bbeta,\bbeta')).
\]
Following the notations in \citet{ma2024high}. For $\bm\phi,\bm\phi'\in\Phi$, write $\bm\phi\sim\bm\phi'$ whenever $\bm\phi$ and $\bm\phi'$ differ in precisely one coordinate, and $\bm\phi\sim_j\bm\phi'$ when that coordinate is the $j$-th.

\underline{\textbf{Part 1.}} 
 Let
\[
\btheta_{f}^{(j)}=\btheta_{r}^{(j)}=c_rN^{-1/2}\bm\phi^{(j)}~\text{and}~ \Sig_r^{(j)}=\Sig_f^{(j)}=I_p,
\] where $c_r$ is a constant determined later. Let $B_p(\delta)=\{\bbeta^{(1)},\dots,\bbeta^{(M)}\}$. We note that
$B_p(\delta)\subseteq\Theta_p(\delta)$ for any $\delta\geq 0$.

We first show that there exists some constant $C>0$ such that
\begin{align}
  \inf_{\hat{\btheta}_r\in\mathcal{F}(\hat{\btheta}_p,\calD_f,\widetilde{\calD}_r)}\sup_{B_p(\delta)} \E[\|\hat{\btheta}_r-\btheta_r\|_2^2]\geq \frac{Cp}{N}.\label{lb1}
\end{align}
As $\hat{\btheta}_p$ is a function of $\calD_r$ and $\calD_f$ and $\widetilde{\calD}_r\subseteq \calD_r$, we know that
$\mathcal{F}(\hat{\btheta}_p,\calD_f,\widetilde{\calD}_r)\subseteq\mathcal{F}(\calD_f,\calD_r)$. Hence,
\begin{align*}
  \inf_{\hat{\btheta}_r\in\mathcal{F}(\hat{\btheta}_p,\calD_f,\widetilde{\calD}_r)}\sup_{B_p(\delta)} \E[\|\hat{\btheta}_r-\btheta_r\|_2^2]\geq   \inf_{\hat{\btheta}_r\in\mathcal{F}(\calD_f,\calD_r)}\sup_{B_p(\delta)} \E[\|\hat{\btheta}_r-\btheta_r\|_2^2]
\end{align*}
and it suffices to show that
\begin{align}
  \inf_{\hat{\btheta}_r\in\mathcal{F}(\calD_f,\calD_r)}\sup_{B_p(\delta)} \E[\|\hat{\btheta}_r-\btheta_r\|_2^2]\geq \frac{Cp}{N}.\label{lb1-1}
\end{align}

 Let $\alpha_j=d_j(\bbeta,\bbeta')$ for $\bbeta,\bbeta'\in B_p(\delta)$ and $\bbeta\sim _j\bbeta'$. By our construction of $\bbeta^{(j)}$, $j=1,\dots,M$, $\alpha_j=c_rN^{-1/2}$ and
\[
   \sum_{j=1}^pg(\alpha_j)=p(c_rN^{-1/2})^2=c_r^2p/N.
\]

By Lemma 23 in \citet{ma2024high}, we know that
\begin{align}
\inf_{\hat{\btheta}_r\in\mathcal{F}(\calD_f,\calD_r)}\sup_{B_p(\delta)} \E[\|\hat{\btheta}_r-\btheta_r\|_2^2]\geq \frac{1}{2A}\{1-\max_{\bbeta,\bbeta'\in B_p(\delta):\bbeta\sim \bbeta'}\textup{TV}(p_{\bbeta}(\calD_f,\calD_r),p_{\bbeta'}(\calD_f,\calD_r))\}\sum_{j=1}^pg(\alpha_j),\label{tv0}
\end{align}
where $\textup{TV}(p_{\bbeta}(\calD_f,\calD_r),p_{\bbeta'}(\calD_f,\calD_r))$ denotes the total variation distance between the density of $(\calD_f,\calD_r)$ under $\bbeta$ and that under $\bbeta'$. 

It is left to bound $\max_{\bbeta,\bbeta'\in B_p(\delta):\bbeta\sim \bbeta'}\textup{TV}(p_{\bbeta}(\calD_f,\calD_r),p_{\bbeta'}(\calD_f,\calD_r))$.
Note that the retain samples and forget samples are \textit{i.i.d.} under $P_{\bbeta^{(j)}}$ such that 
\[
    ((\bx^{(f)}_i)^{\top},y^{(f)}_i)\sim_{\bbeta^{(j)}}((\bx^{(r)}_i)^{\top},y^{(r)}_i)\sim_{\bbeta^{(j)}} N(0,\Omega^{(j)})~\text{where}~\Omega^{(j)}=\begin{pmatrix}
   \|\btheta_r^{(j)}\|_2^2+1 & (\btheta_r^{(j)})^{\top}\\
   \btheta_r^{(j)} & I_p\end{pmatrix}.
\]

Hence,
\begin{align}
\textup{TV}^2(p_{\bbeta^{(j)}}(\calD_f,\calD_r),p_{\bbeta^{(k)}}(\calD_f,\calD_r))&\leq \textup{KL}(p_{\bbeta^{(j)}}(\calD_f,\calD_r),p_{\bbeta^{(k)}}(\calD_f,\calD_r))\nonumber\\
&=\frac{N}{2}\{\textup{Tr}((\Omega^{(k)})^{-1}(\Omega^{(j)}-\Omega^{(k)}))-\log det((\Omega^{(k)})^{-1}\Omega^{(j)})\}.\label{tv1}
\end{align}
Let $\lambda_1\geq \dots\geq \lambda_p$ denote the singular values of $(\Omega^{(k)})^{-1}( \Omega^{(k)}-\Omega^{(j)})$. Given that $\lambda_1\leq 1/2$, we have
\begin{align}
&\textup{Tr}((\Omega^{(k)})^{-1}(\Omega^{(j)}-\Omega^{(k)}))-\log det((\Omega^{(k)})^{-1}\Omega^{(j)})=\sum_{l=1}^p\{-\lambda_l-\log(1-\lambda_l)\}\nonumber\\
&\quad\leq \sum_{l=1}^p \sum_{k=2}^{\infty}\lambda_l^k/k\leq\sum_{l=1}^p\lambda_l^2=\|(\Omega^{(k)})^{-1}(\Omega^{(j)}-\Omega^{(k)})\|_F^2.\label{tv2}
\end{align}

We can calculate that
\begin{align*}
(\Omega^{(k)})^{-1}&=\begin{pmatrix}
1 &-\btheta_{r}^{(k)}\\
-\btheta_{r}^{(k)} &I_p+\btheta_{r}^{(k)}(\btheta_{r}^{(k)})^{\top}
\end{pmatrix}\\
\Omega^{(j)}-\Omega^{(k)}&=\begin{pmatrix}
\|\btheta_{r}^{(k)}\|_2^2- \|\btheta_{r}^{(j)}\|_2^2& (\btheta_{r}^{(k)}-\btheta_{r}^{(j)})^{\top}\\
\btheta_{r}^{(k)}-\btheta_{r}^{(j)} & 0
\end{pmatrix} \\
(\Omega^{(k)})^{-1}(\Omega^{(j)}-\Omega^{(k)})&=\begin{pmatrix}
(\btheta_{r}^{(k)}-\btheta_{r}^{(j)})^{\top}\btheta_{r}^{(j)}& (\btheta_{r}^{(k)}-\btheta_{r}^{(j)})^{\top}\\
-\btheta_{r}^{(k)}((\btheta_{r}^{(k)})^{\top}\btheta_{r}^{(j)}- \|\btheta_{r}^{(j)}\|_2^2)+\btheta_{r}^{(k)}-\btheta_{r}^{(j)}&-\btheta_{r}^{(k)}(\btheta_{r}^{(k)}-\btheta_{r}^{(j)})^{\top}
\end{pmatrix}.
\end{align*}
For  $\btheta^{(k)}_{r}\sim \btheta^{(j)}_{r}$, $\btheta^{(k)}_{r}$ and $\btheta^{(j)}_{r}$ only differ at one coordinate. Suppose they differ at the $l$-th coordinate. Then $\btheta_{r}^{(k)}-\btheta_{r}^{(j)}=\pm \frac{c_r}{\sqrt{N}}\bm{e}_l$ and
\begin{align*}
(\Omega^{(k)})^{-1}(\Omega^{(j)}-\Omega^{(k)})&=\begin{pmatrix}
\pm\frac{c_r}{\sqrt{N}}\bm{e}_l^{\top}\btheta_{r}^{(j)}&\frac{c_r}{\sqrt{N}}\bm{e}_l^{\top}\\
-\frac{c_r}{\sqrt{N}}(I_p-\btheta_{r}^{(k)}(\btheta_{r}^{(j)})^{\top})\bm{e}_l&\frac{c_r}{\sqrt{N}}\btheta_{r}^{(k)}\bm{e}_l^{\top}
\end{pmatrix}.
\end{align*}
Hence,
\begin{align*}
\|(\Omega^{(k)})^{-1}(\Omega^{(j)}-\Omega^{(k)})\|_F^2&=\frac{c_r^2}{N}(\bm{e}_l^{\top}\btheta_{r}^{(j)})^2+\frac{c_r^2}{N}+\frac{c_r^2}{N}\|(I_p-\btheta_{r}^{(k)}(\btheta_{r}^{(j)})^{\top})\bm{e}_l\|_2^2+\frac{c_r^2}{N}\|\btheta_r^{(k)}\|_2^2\\
&\leq \frac{c_r^2(1+\max_{j\leq M}\|\btheta_r^{(j)}\|_2^2)}{N}\leq \frac{c_r^2(1+c_r^2p/N)}{N}\leq \frac{Cc_r^2}{N}.
\end{align*}
where the last line is due to $\|(I_p-\btheta_{r}^{(k)}(\btheta_{r}^{(j)})^{\top})\bm{e}_l\|^2_2\leq 1$ and $p=o(N)$.

By (\ref{tv1}) and (\ref{tv2}), we arrive at
\begin{align*}
\max_{1\leq j\leq k\leq M}\textup{TV}^2(p_{\bbeta^{(j)}}(\calD_f,\calD_r),p_{\bbeta^{(k)}}(\calD_f,\calD_r))\leq \frac{N}{2}\|(\Omega^{(k)})^{-1}(\Omega^{(j)}-\Omega^{(k)})\|_F^2\leq Cc_r^2\leq 1/2
\end{align*}
for some small enough constant $c_r$.

In view of (\ref{tv0}), by choosing $c_r$ to be a small enough constant, we arrive at
\begin{align*}
\inf_{\hat{\btheta}_r\in\mathcal{F}(\calD_f,\calD_r)}\sup_{B_p(\delta)} \E[\|\hat{\btheta}_r-\btheta_r\|_2^2]\geq \frac{Cp}{N},
\end{align*}
which concludes the proof of (\ref{lb1}).

On the other hand,
\begin{align*}
D=\max_{\bbeta,\bbeta'\in B_p(\delta)}\|E_{1:p,.}(\bbeta-\bbeta')\|_2=c_r\sqrt{p/N}.
\end{align*}
By (\ref{lower-prob}), we arrive at
\begin{align*}
  \inf_{\hat{\btheta}_r\in\mathcal{F}(\hat{\btheta}_p,\calD_f,\widetilde{\calD}_r)}\sup_{\Theta_p(\delta)} \P(\||\hat{\btheta}_r-\btheta_r\|_2^2\geq \frac{c_1p}{N})\geq c_2
\end{align*}
for some small enough positive constants $c_1$ and $c_2$.

\underline{\textbf{Part 2.}} Next, we show that when $\tn_r\leq c_1N$,
\begin{align}
  \inf_{\hat{\btheta}_r\in\mathcal{F}(\hat{\btheta}_p,\calD_f,\widetilde{\calD}_r)}\sup_{\Theta_p(\delta)} \E[\|\hat{\btheta}_r-\btheta_r\|_2^2]\geq \min\{\omega_f^2\delta^2,1\}\frac{p}{\tilde{n}_r}.\label{lb2}
\end{align}

Analogous to part (i), we will use Assouad's lemma. 
Let $\bm\psi^{(j)}=\frac{c_r}{\sqrt{\tn_r}\delta}\bm\phi^{(j)}$, 
\begin{align}
\Sig_{r}^{(j)}&=I_p+\bm\psi^{(j)}\btheta_f^{\top}+\btheta_f(\bm\psi^{(j)})^{\top}, ~~\Sig_f^{(j)}=I_p,\nonumber\\
\btheta_{r}^{(j)}&=\btheta_f-(\omega_r\Sig_{r}^{(j)})^{-1}\Sig_{p}^{(j)}\btheta_f,~\btheta_{f}^{(j)}=\btheta_f=\frac{\omega_r\min\{\delta,c_0\omega_f^{-1}\}}{\sqrt{p}}\mathbf{1}_p\nonumber\\
(\sig_r^{(j)})^2&=5-(\btheta^{(j)}_r)^{\top}\Sig_r^{(j)}\btheta_r^{(j)},~\sig^2_f=1.\label{theta-def}
\end{align}
for some constant $0<c_0\leq 1$.
It is easy to verify the following facts: For any $1\leq j\leq k\leq M$,
\begin{align}
&\Sig_r^{(j)}\btheta_r^{(j)}=\Sig_r^{(k)}\btheta_r^{(k)}=\frac{\omega_f}{\omega_r}\Sig_f\btheta_f\nonumber\\
&\omega_f\Sig_f\btheta_f+\omega_r\Sig_r^{(j)}\btheta_r^{(j)}=0\nonumber\\
&\|\btheta_r^{(j)}\|_2=\frac{\omega_f}{\omega_r}\|(\Sig_r^{(j)})^{-1}\Sig_f\btheta_f\|_2\leq\frac{\omega_f}{\omega_r}\ \|\btheta_f\|_2\leq  \min\{\omega_f\delta,1\},\label{facts}
\end{align}
where the last line is due to $\Sig_r^{(j)}\succeq I_p$ and $\|(\Sig_{r}^{(j)})^{-1}\|_2\leq 1$. As
\[
(\btheta_r^{(j)})^{\top}\Sig_r^{(j)}\btheta_r^{(j)}\leq \|\Sig_r^{(j)}\btheta_r^{(j)}\|_2^2\leq \min\{\omega_f^2\delta^2,1\}\leq 1,\]
we know that $(\sig_r^{(j)})^2=5-(\btheta_r^{(j)})^{\top}\Sig_r^{(j)}\btheta_r^{(j)}\geq1$ for all $1\leq j\leq M$.
We now verify that
$B_p(\delta)=\{\bbeta^{(1)},\dots,\bbeta^{(M)}\}\subseteq\Theta_p(\delta)$ for any $\delta\geq 0$. 
By (\ref{theta-def}) and (\ref{facts}),
\begin{align*}
  & \max_{1\leq j\leq M}\|\btheta_{r}^{(j)}-\btheta_{f}^{(j)}\|_2\leq  \max_{j\leq M}\|\btheta_r^{(j)}\|_2+\|\btheta_f\|_2\leq \|\btheta_f\|_2/\omega_r\leq \delta.
\end{align*}
We can also check that for any $c_1>1$ there exists some small enough $c_r>0$ such that for all $1\leq j\leq M$,
\[
\Lambda_{\min}(\Sig_r^{(j)})\geq 1-2\|\bm\psi^{(j)}\|_2\|\btheta_f\|_2\geq 1-\frac{c_r\sqrt{p}}{\sqrt{\tn_r}}\geq 1/c_1
\] and 
\[
 \Lambda_{\max}(\Sig_r^{(j)})\leq 1+2\|\bm\psi^{(j)}\|_2\|\btheta_f\|_2\leq 1+\frac{c_r\sqrt{p}}{\sqrt{\tn_r}}\leq c_1.
 \]
We have verified that $B_p(\delta)=\{\bbeta^{(1)},\dots,\bbeta^{(M)}\}\subseteq\Theta_p(\delta)$ for any $\delta\geq 0$.

\textbf{Part 2(i).}  By Lemma \ref{lem-mini0}, we know that
\begin{align*}
   \sum_{j=1}^pg(\alpha_j)\geq \frac{c_r^2p\min\{(\omega_f\delta)^2,1\}}{\tn_r}.
  \end{align*}

\textbf{Part 2(ii). Upper bound total variation distance}.

In view of (\ref{tv0}) is left to show that
\begin{align*}
\max_{\bbeta^{(j)},\bbeta^{(k)}\in B_p(\delta):\bbeta^{(j)}\sim \bbeta^{(k)}}\textup{TV}(p_{\bbeta^{(j)}}(\hat{\btheta}_p,\calD_f,\widetilde{\calD}_r),p_{\bbeta^{(k)}}(\hat{\btheta}_p,\calD_f,\widetilde{\calD}_r))\leq 1/2.
\end{align*}
Define
\begin{align}
\label{eq-z}
\bm{z}_r=\bm{y}_r-X_r\btheta_p~\text{and} ~\bm{z}_f=\bm{y}_f-X_f\btheta_p.
\end{align}

Note that by the definition of $\btheta_p$,
\begin{align}
\hat{\btheta}_p&=\btheta_p+\frac{1}{N}\widehat{\Sig}_p^{-1}(X_r^{\top}\bm{y}_r/N+X_f^{\top}\bm{y}_f/N-\omega_r\widehat{\Sig}_r\btheta_p-\omega_f\widehat{\Sig}_f\btheta_p)\nonumber\\
&=\btheta_p+\frac{1}{N}\widehat{\Sig}_p^{-1}(X_r^{\top}\bz_r+X_f^{\top}\bz_f)\nonumber\\
&:=\underbrace{\btheta_p+\frac{1}{N}\Sig_p^{-1}(X_r^{\top}\bz_r+X_f^{\top}\bz_f)}_{\mathring{\btheta}_p}+\hat{\bu},\label{eq-thetap-r}
\end{align}
for any $1\leq j\leq p$, where 
\[
   \hat{\bu}=\frac{1}{N}(\widehat{\Sig}_p^{-1}-\Sig_p^{-1})(X_r^{\top}\bz_r+X_f^{\top}\bz_f).
\]

Therefore,
\begin{align}
&\max_{\bbeta^{(j)},\bbeta^{(k)}\in B_p(\delta):\bbeta^{(j)}\sim\bbeta^{(k)}}\textup{TV}(p_{\bbeta^{(j)}}(\hat{\btheta}_p,\calD_f,\widetilde{\calD}_r),p_{\bbeta^{(k)}}(\hat{\btheta}_p,\calD_f,\widetilde{\calD}_r))\nonumber\\
&\quad\leq\max_{\bbeta^{(j)},\bbeta^{(k)}\in B_p(\delta):\bbeta^{(j)}\sim\bbeta^{(k)}}\textup{TV}(p_{\bbeta^{(j)}}(\mathring{\btheta}_p,\calD_f,\widetilde{\calD}_r),p_{\bbeta^{(k)}}(\mathring{\btheta}_p,\calD_f,\widetilde{\calD}_r))\nonumber\\
&\quad\quad+2\max_{\bbeta^{(j)}\in B_p(\delta)}\textup{TV}(p_{\bbeta^{(j)}}(\hat{\btheta}_p,\calD_f,\widetilde{\calD}_r),p_{\bbeta^{(j)}}(\mathring{\btheta}_p,\calD_f,\widetilde{\calD}_r)).\label{tv-decomp0}
\end{align}
We now bound the first term of (\ref{tv-decomp0}). Note that
\begin{align}
&\max_{\bbeta^{(j)}\sim\bbeta^{(k)}}\textup{TV}(p_{\bbeta^{(j)}}(\mathring{\btheta}_p,\calD_f,\widetilde{\calD}_r),p_{\bbeta^{(k)}}(\mathring{\btheta}_p,\calD_f,\widetilde{\calD}_r))\nonumber\\
&=\max_{\bbeta^{(j)}\sim\bbeta^{(k)}}\frac{1}{2}\int | p_{\bbeta^{(j)}}(\mathring{\btheta}_p|\calD_f,\widetilde{\calD}_r)p_{\bbeta^{(j)}}(\calD_f,\widetilde{\calD}_r)-p_{\bbeta^{(k)}}(\mathring{\btheta}_p|\calD_f,\widetilde{\calD}_r)p_{\bbeta^{(k)}}(\calD_f,\widetilde{\calD}_r)|d(\mathring{\btheta}_p,\calD_f,\widetilde{\calD}_r)\nonumber\\
&\leq  \max_{\bbeta^{(j)}\sim\bbeta^{(k)}}\frac{1}{2}\int |p_{\bbeta^{(j)}}(\mathring{\btheta}_p|\calD_f,\widetilde{\calD}_r)-p_{\bbeta^{(k)}}(\mathring{\btheta}_p|\calD_f,\widetilde{\calD}_r)|p_{\bbeta^{(j)}}(\calD_f,\widetilde{\calD}_r)d(\mathring{\btheta}_p,\calD_f,\widetilde{\calD}_r)\nonumber\\
&\quad+\max_{\bbeta^{(j)}\sim\bbeta^{(k)}}\frac{1}{2}\int p_{\bbeta^{(k)}}(\mathring{\btheta}_p|\calD_f,\widetilde{\calD}_r)|p_{\bbeta^{(j)}}(\calD_f,\widetilde{\calD}_r)-p_{\bbeta^{(k)}}(\calD_f,\widetilde{\calD}_r)|d(\hat{\btheta}_p,\calD_f,\widetilde{\calD}_r)\nonumber\\
&= \max_{\bbeta^{(j)}\sim\bbeta^{(k)}}\E_{\bbeta^{(j)}}[\textup{TV}(p_{\bbeta^{(j)}}(\mathring{\btheta}_p|\calD_f,\widetilde{\calD}_r),p_{\bbeta^{(k)}}(\mathring{\btheta}_p|\calD_f,\widetilde{\calD}_r))]+ \max_{\bbeta^{(j)}\sim\bbeta^{(k)}}\textup{TV}(p_{\bbeta^{(j)}}(\widetilde{\calD}_r),p_{\bbeta^{(k)}}(\widetilde{\calD}_r)),\label{tv-decomp}
\end{align}
where the last step  is due to the distribution of $\calD_f$ is unchanged under $\bbeta^{(j)}$, $j=1,\dots,M$. We will bound each term separately.

(ii-1) First, we bound the second term of (\ref{tv-decomp}). In Lemma \ref{lem-mini1}, we show that
\[
\max_{\bbeta^{(j)},\bbeta^{(k)}\in B_p(\delta):\bbeta^{(j)}\sim \bbeta^{(k)}}\textup{TV}(p_{\bbeta^{(j)}}(\widetilde{\calD}_r),p_{\bbeta^{(k)}}(\widetilde{\calD}_r))\leq Cc_r.
\]

(ii-2) Next, we bound the first term of (\ref{tv-decomp}). We first find the distribution of $\mathring{\btheta}_p$ conditioning on $\calD_r$ and $\calD_f$.
 Let $\check{N}_r=N_r-\tn_r$. Let $\check{X}_r\in\R^{\check{N}_r\times p}$ and $\check{\bm{z}}_r\in\R^{\check{N}_r}$ denote the samples in $\calD_r\setminus \widetilde{\calD}_r$. 
Let $\check{\omega}_r=\check{N}_r/N$.

As  $\btheta^{(j)}_{p}=0$, for $j=1,\dots, M$, we have
\begin{align*}
&z_{r,i}\sim_{\bbeta^{(j)}} N(0,(\btheta_r^{(j)})^{\top}\Sig_r^{(j)}\btheta_r^{(j)}+(\sig_r^{(j)})^2)=N(0,5), ~j=1,\dots,M.\\
&z_{f,i}\sim_{\bbeta^{(j)}} N(0,\|\btheta_f\|_2^2+1),~j=1,\dots,M.
\end{align*}
We see that the distribution of $\bm{z}_r$ and $\bm{z}_f$ are invariant under different hypotheses.
Moreover, $\check{\bz}_r$ is independent of $\calD_f$ and $\widetilde{\calD}_r$.
Hence,
\begin{align*}
&\max_{\bbeta^{(j)}\sim\bbeta^{(k)}}\E_{\bbeta^{(j)}}[\textup{TV}(p_{\bbeta^{(j)}}(\mathring{\btheta}_p|\calD_f,\widetilde{\calD}_r),p_{\bbeta^{(k)}}(\mathring{\btheta}_p|\calD_f,\widetilde{\calD}_r))]\\
&\quad\leq \E_{\bbeta^{(j)}}[\textup{TV}(p_{\bbeta^{(j)}}(\mathring{\btheta}_p|\calD_f,\widetilde{\calD}_r,\check{\bz}_r),p_{\bbeta^{(k)}}(\mathring{\btheta}_p|\calD_f,\widetilde{\calD}_r,\check{\bz}_r))].
\end{align*}
By Lemma \ref{lem-T1}, we arrive at
\begin{align*}
\max_{\bbeta^{(j)}\sim\bbeta^{(k)}}\E_{\bbeta^{(j)}}[\textup{TV}(p_{\bbeta^{(j)}}(\mathring{\btheta}_p|\calD_f,\widetilde{\calD}_r),p_{\bbeta^{(k)}}(\mathring{\btheta}_p|\calD_f,\widetilde{\calD}_r))]\leq Cc_r\sqrt{\frac{p}{\tn_r}}.
\end{align*}


\textbf{Part 2(iii). Upper bound the second term of (\ref{tv-decomp0})}.
By Lemma \ref{lem-T2},
\[
\max_{j\leq M}\textup{TV}(p_{\bbeta^{(j)}}(\hat{\btheta}_p,\calD_f,\widetilde{\calD}_r),p_{\bbeta^{(j)}}(\mathring{\btheta}_p,\calD_f,\widetilde{\calD}_r))\leq C'\sqrt{\frac{p(p+\log\tn_r)}{N}}+\frac{1}{\tn_r}.
\]

\textbf{Part 2(iv). Summarization.}

By (\ref{tv-decomp0}), and the results in Part 2(ii) and Part 2(iii), we know that
\begin{align*}
\max_{\bbeta^{(j)}\sim\bbeta^{(k)}}\textup{TV}(p_{\bbeta^{(j)}}(\hat{\btheta}_p,\calD_f,\widetilde{\calD}_r),p_{\bbeta^{(k)}}(\hat{\btheta}_p,\calD_f,\widetilde{\calD}_r))\leq C(c_r+c_r\sqrt{\frac{p}{\tn_r}}+\sqrt{\frac{p(p+\log\tn_r)}{N}}+\frac{1}{\tn_r}).
\end{align*}
As we assume $p\leq c_0\min\{\tn_r,\sqrt{N}\}$ for some small enough constant $c_0$, there exists some small enough constant $c_r$, such that
\[
\max_{\bbeta^{(j)}\sim\bbeta^{(k)}}\textup{TV}(p_{\bbeta^{(j)}}(\hat{\btheta}_p,\calD_f,\widetilde{\calD}_r),p_{\bbeta^{(k)}}(\hat{\btheta}_p,\calD_f,\widetilde{\calD}_r))\leq1/2.
\]
On the other hand,
\begin{align*}
D=\max_{\bbeta,\bbeta'\in B_p(\delta)}\|E_{1:p,.}(\bbeta-\bbeta')\|_2=\max_{j,k\leq M}\|\btheta^{(j)}-\btheta^{(k)}\|_2=c_r\sqrt{\frac{p}{\tn_r}}\min\{\omega_f\delta,1\}.
\end{align*}
By (\ref{lower-prob}), we arrive at
\begin{align*}
  \inf_{\hat{\btheta}_r\in\mathcal{F}(\hat{\btheta}_p,\calD_f,\widetilde{\calD}_r)}\sup_{\Theta_p(\delta)} \P\left(\||\hat{\btheta}_r-\btheta_r\|_2^2\geq c_1\frac{p}{\tn_r}\min\{\omega_f^2\delta^2,1\}\right)\geq c_2
\end{align*}
for some small enough positive constants $c_1$ and $c_2$.
\end{proof}
\begin{lemma}
\label{lem-mini0}
Assume the conditions of Theorem \ref{thm-minimax2}.  Under the distribution configuration in (\ref{theta-def}), it holds that
\begin{align*}
   \sum_{j=1}^pg(\alpha_j)= \sum_{j=1}^p\alpha_j^2\geq \frac{c_r^2p\min\{(\omega_f\delta)^2,1\}}{\tn_r}.
  \end{align*}
\end{lemma}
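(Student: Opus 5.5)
The plan is to compute the coordinatewise separations $\alpha_l=d_l(\bbeta^{(j)},\bbeta^{(k)})$ explicitly for neighbouring hypotheses $\bbeta^{(j)}\sim_l\bbeta^{(k)}$, i.e.\ $\bm\phi^{(j)}$ and $\bm\phi^{(k)}$ differ only in coordinate $l$. I would show that $\alpha_l\ge c_r\min\{\omega_f\delta,1\}/\sqrt{\tn_r}$ uniformly in $l$ and in the pair, and then sum over $l=1,\dots,p$.

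\textbf{Closed form for $\btheta_r^{(j)}$.} By the first two identities in (\ref{facts}) and $\Sig_f=I_p$, $\btheta_r^{(j)}=(\Sig_r^{(j)})^{-1}\bm a$ with $\bm a=-(\omega_f/\omega_r)\btheta_f$, a vector that does not depend on $j$. The resolvent identity then gives
\[
\btheta_r^{(j)}-\btheta_r^{(k)}=(\Sig_r^{(j)})^{-1}(\Sig_r^{(k)}-\Sig_r^{(j)})(\Sig_r^{(k)})^{-1}\bm a .
\]
Here $\Sig_r^{(k)}-\Sig_r^{(j)}=\pm\frac{c_r}{\sqrt{\tn_r}\delta}(\bm e_l\btheta_f^{\top}+\btheta_f\bm e_l^{\top})$.

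\textbf{Leading term.} Replace both inverses by $I_p$. Write $m=\min\{\delta,c_0\omega_f^{-1}\}$, so that $\btheta_f=\omega_r m\mathbf{1}_p/\sqrt p$. The leading term is then
\[
\pm\frac{c_r}{\sqrt{\tn_r}\delta}\bigl(\bm e_l\,\btheta_f^{\top}\bm a+\btheta_f\,a_l\bigr).
\]
Its $l$-th coordinate equals, in absolute value,
\[
\frac{c_r}{\sqrt{\tn_r}\delta}\,\omega_f\omega_r m^2\Bigl(1+\frac1p\Bigr)\ \ge\ \frac{c_r}{2\sqrt{\tn_r}}\,\frac{\omega_f m^2}{\delta},
\]
using $\omega_r\ge 1/2$. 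When $\omega_f\delta\le c_0$, we have $m=\delta$ and this is $c_r\omega_f\delta/(2\sqrt{\tn_r})\gtrsim c_r\min\{\omega_f\delta,1\}/\sqrt{\tn_r}$.

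\textbf{Remainder.} The remaining error comes from $(\Sig_r^{(\cdot)})^{-1}-I_p$. Its operator norm is at most $2\|\bpsi^{(\cdot)}\|_2\|\btheta_f\|_2\le c_r\sqrt{p/\tn_r}$, as already checked when verifying the eigenvalue bounds. So the correction to the leading term is smaller by a factor $O(c_r\sqrt{p/\tn_r})$. This is negligible because $p\le c_0\tn_r$ with $c_0$ small, and it leaves $\alpha_l\ge c\,c_r\min\{\omega_f\delta,1\}/\sqrt{\tn_r}$. Squaring and summing over $l$ gives $\sum_l\alpha_l^2\ge c_r^2p\min\{(\omega_f\delta)^2,1\}/\tn_r$, after adjusting constants (absorbing $c,c_0$ into $c_r$ as in the statement).

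\textbf{Main obstacle: the regime $\omega_f\delta>c_0$.} There $m=c_0/\omega_f$, and the leading term becomes $c_r c_0^2/(\omega_f\delta\sqrt{\tn_r})$. This decays in $\omega_f\delta$ instead of staying of order $1/\sqrt{\tn_r}$: it is $\asymp 1/\sqrt{\tn_r}$ only when $\omega_f\delta$ is bounded, not as $\omega_f\delta\to\infty$. The cleanest fix I would try first is to modify the construction. Set $\bpsi^{(j)}=\frac{c_r}{\sqrt{\tn_r}\,m}\bm\phi^{(j)}$, i.e.\ replace $\delta$ by $m$ in the scaling of the perturbation. The leading term then becomes $c_r\omega_f m/\sqrt{\tn_r}\asymp c_r\min\{\omega_f\delta,c_0\}/\sqrt{\tn_r}$ in both regimes. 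The eigenvalue perturbation stays $\|\bpsi\|_2\|\btheta_f\|_2\le c_r\sqrt{p/\tn_r}$, so membership in $\Theta_p(\delta)$ and the total-variation bounds used later in Part 2(ii) are unaffected, since they only use this quantity. I would re-check that point to confirm the modification is harmless.
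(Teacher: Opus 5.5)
Your computation is correct, and for $\omega_f\delta\le c_0$ it is the paper's proof. The paper also writes $\btheta_r^{(j)}-\btheta_r^{(k)}$ via the resolvent identity and lower-bounds the first-order term $R_1=\frac{\omega_f}{\omega_r}|\bm{e}_l^{\top}(\Sig_r^{(k)}-\Sig_r^{(j)})\btheta_f|$, which is your leading term. It then shows that the second-order terms $R_2,R_3$ are smaller by a factor of order $\sqrt{p/\tn_r}$.

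The two arguments part ways when $\omega_f\delta>c_0$. There your diagnosis is right and the paper's proof is wrong. The paper asserts
\[
R_1\ \ge\ \frac{c_r\omega_f^2\min\{\delta^2,\omega_f^{-2}\}}{\sqrt{\tn_r}\,\omega_f\delta}\ \ge\ \frac{\min\{\omega_f\delta,1\}}{\sqrt{\tn_r}},
\]
but for $\omega_f\delta>1$ the middle quantity equals $c_r/(\sqrt{\tn_r}\,\omega_f\delta)$, so the second inequality is false. Your remainder bound shows that $\alpha_l$ is within a factor $1+O(c_r\sqrt{p/\tn_r})$ of the leading term. Hence, under (\ref{theta-def}), the lemma as literally stated fails once $\omega_f\delta$ is large, which Theorem \ref{thm-minimax2} allows. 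In that regime the original construction only certifies a rate of order $\sqrt{p/\tn_r}/(\omega_f\delta)$, and the paper's value of $D$ at the end of Part 2 carries the same error. So you cannot prove the literal statement there, and proving a corrected statement is the right move.

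Your rescaling $\bpsi^{(j)}=\frac{c_r}{\sqrt{\tn_r}\,m}\bm\phi^{(j)}$ does repair it, and the re-check you defer is immediate. The modified family is exactly configuration (\ref{theta-def}) built with $\delta$ replaced by $m=\min\{\delta,c_0/\omega_f\}$, since $\min\{m,c_0/\omega_f\}=m$. It therefore lies in $\Theta_p(m)\subseteq\Theta_p(\delta)$. Lemmas \ref{lem-mini1}, \ref{lem-T1} and \ref{lem-T2} hold at any discrepancy level, so they apply verbatim.

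Your stated reason for harmlessness is slightly off. The TV bounds run through the neighbour difference $\|\bpsi^{(j)}-\bpsi^{(k)}\|_2\|\btheta_f\|_2$, which under your scaling equals $c_r\omega_r/\sqrt{\tn_r}$. They do not run through $\|\bpsi^{(j)}\|_2\|\btheta_f\|_2$.

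The diameter must then be taken as $D\asymp c_r\sqrt{p/\tn_r}\,\min\{\omega_f\delta,c_0\}$. This matches $(\sum_l\alpha_l^2)^{1/2}$, as the high-probability reduction requires.

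Finally, you obtain $\sum_l\alpha_l^2\ge c\,c_r^2p\min\{(\omega_f\delta)^2,1\}/\tn_r$ with $c$ of order $c_0^2$, not the literal constant $c_r^2$. You cannot absorb $c$ into $c_r$, because $c_r$ also fixes the construction. Only the order matters downstream, and the paper's own chain is equally loose with the factors $c_r$, $\omega_r$ and $c_0$.
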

\begin{proof}[Proof of Lemma \ref{lem-mini0}]
We know that
\begin{align}
\alpha_l&=|\bm{e}_l^{\top}(\btheta_r^{(j)}-\btheta_r^{(k)})|\nonumber\\
&=|\bm{e}_l^{\top}\{(\omega_r\Sig_{r}^{(j)})^{-1}\Sig_{p}^{(j)}\btheta_f-(\omega_r\Sig_{r}^{(k)})^{-1}\Sig_p^{(k)}\btheta_f\}|\nonumber\\
   &=\frac{\omega_f}{\omega_r}|\bm{e}_l^{\top}\{(\Sig_{r}^{(j)})^{-1}-(\Sig_{r}^{(k)})^{-1}\}\btheta_f|.\label{eq-alphaj}
\end{align}
We can calculate that
\begin{align*}
&\{(\Sig_{r}^{(j)})^{-1}-(\Sig_{r}^{(k)})^{-1}\}\Sig_f\btheta_f\\
&= (\Sig_r^{(j)})^{-1}(\Sig_r^{(k)}-\Sig_r^{(j)})(\Sig_r^{(k)})^{-1}\btheta_f\\
&=(\Sig_r^{(k)}-\Sig_r^{(j)})\btheta_f+(\Sig_r^{(j)})^{-1}(\Sig_r^{(k)}-\Sig_r^{(j)})((\Sig_r^{(k)})^{-1}-I_p)\btheta_f+((\Sig_r^{(j)})^{-1}-I_p)(\Sig_r^{(k)}-\Sig_r^{(j)})\btheta_f.
\end{align*}
Therefore,
\begin{align*}
\alpha_l&\geq \underbrace{\frac{\omega_f}{\omega_r}|\bm{e}_l^{\top}(\Sig_r^{(k)}-\Sig_r^{(j)})\btheta_f|}_{R_{1}}-\underbrace{\frac{\omega_f}{\omega_r}\|(\Sig_r^{(j)})^{-1}(\Sig_r^{(k)}-\Sig_r^{(j)})(I_p-\Sig_r^{(k)})(\Sig_r^{(k)})^{-1}\btheta_f\|_2}_{R_{2}}\\
&\quad-\underbrace{\frac{\omega_f}{\omega_r}\|(\Sig_r^{(j)})^{-1}(I_p-\Sig_r^{(j)})(\Sig_r^{(k)}-\Sig_r^{(j)})\btheta_f\|_2}_{R_{3}}.
\end{align*}
As 
\begin{align}
\Sig_r^{(k)}-\Sig_r^{(j)}=(\bm\psi^{(k)}-\bm\psi^{(j)})\btheta_f^{\top}+\btheta_f(\bm\psi^{(k)}-\bm\psi^{(j)})^{\top},\label{eq-diff}
\end{align} 
we have 
\begin{align*}
R_{1}&=\frac{\omega_f}{\omega_r}|\bm{e}_l^{\top}(\bm\psi^{(k)}-\bm\psi^{(j)})\|\btheta_f\|_2^2+\bm{e}_l^{\top}\btheta_f(\bm\psi^{(k)}-\bm\psi^{(j)})^{\top}\btheta_f|\\
&\geq\frac{c_r\omega_f|\|\btheta_f\|_2^2+(\bm{e}_l^{\top}\btheta_f)^2|}{\omega_r\sqrt{\tn_r}\delta}\geq \frac{c_r\omega_f^2\min\{\delta^2,\omega_f^{-2}\}}{\sqrt{\tn_r}\omega_f\delta}\geq \frac{\min\{\omega_f\delta,1\}}{\sqrt{\tn_r}} \\
R_{2}&\leq \frac{\omega_f}{\omega_r}\|\btheta_f\|_2\|\Sig_r^{(k)}-\Sig_r^{(j)}\|_2\|I_p-\Sig_r^{(k)}\|_2\\
&\leq \omega_f\delta \|\bm\psi^{(k)}-\bm\psi^{(j)}\|_2\|\btheta_f\|_2\|\bm\psi^{(k)}\|_2\|\btheta_f\|_2\\
&\leq \omega_f\delta\|\btheta_f\|_2^2|\bm{e}_l^{\top}(\bm\psi^{(k)}-\bm\psi^{(j)})|\|\bm\psi^{(k)}\|_2\\
&\leq \frac{\omega_f^3\min\{\delta^3,\omega_f^{-3}\}\sqrt{p}}{\tn_r\omega_f^2\delta^2}=\frac{\min\{\omega_f,1\}\sqrt{p}}{\tn_r}=o(1)\frac{1}{\sqrt{\tn_r}}\\
R_{3}&\leq \frac{\omega_f}{\omega_r}\|\btheta_f\|_2\|\Sig_r^{(k)}-\Sig_r^{(j)}\|_2\|I_p-\Sig_r^{(j)}\|_2=o(1)\frac{1}{\sqrt{\tn_r}}
\end{align*}
where we use the facts that $p=o(\tn_r)$ and $\omega_r\geq c_0>0$.

Hence, $\min_{l\leq p}\alpha_l\geq \frac{c_r\min\{\omega_f\delta,1\}}{\sqrt{\tn_r}} (1-o(1))$ and
\begin{align*}
   \sum_{j=1}^pg(\alpha_j)&= \sum_{j=1}^p\alpha_j^2\geq  \sum_{j=1}^p(R_1-R_2-R_3)^2\geq\frac{c_r^2p\min\{(\omega_f\delta)^2,1\}}{\tn_r}.
  \end{align*}

\end{proof}

\begin{lemma}
\label{lem-KL}
Let $P^{(j)}= N(\bm\mu^{(j)},\Omega^{(j)})$ where $\Sig^{(j)}$ are positive definite. Given that $\|(\Omega^{(k)})^{-1}(\Omega^{(j)}-\Omega^{(k)})\|_2\leq 1/2$, then
\[
  \textup{TV}(P^{(j)},P^{(k)})\leq \frac{1}{2}(\bm\mu^{(j)}-\bm{\mu}^{(k)})^{\top}(\Omega^{(k)})^{-1}(\bm\mu^{(j)}-\bm{\mu}^{(k)})+\frac{1}{2}\|(\Omega^{(k)})^{-1}(\Omega^{(j)}-\Omega^{(k)})\|_F^2.
\]
\end{lemma}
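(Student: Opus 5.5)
The plan is to go through Kullback--Leibler divergence and Pinsker's inequality, reusing the eigenvalue computation already carried out in (\ref{tv2}). Write $A=(\Omega^{(k)})^{-1}(\Omega^{(j)}-\Omega^{(k)})$ and $\Delta=\bm\mu^{(j)}-\bm\mu^{(k)}$.

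\textbf{Step 1: the Gaussian KL formula.} The closed form for Gaussian KL gives
\[
\textup{KL}(P^{(j)}\,\|\,P^{(k)})=\tfrac12\bigl\{\textup{Tr}(A)-\log\det(I+A)+\Delta^{\top}(\Omega^{(k)})^{-1}\Delta\bigr\}.
\]

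\textbf{Step 2: the covariance part.} The matrix $A$ is similar to the symmetric matrix $(\Omega^{(k)})^{-1/2}(\Omega^{(j)}-\Omega^{(k)})(\Omega^{(k)})^{-1/2}$. Hence its eigenvalues $\lambda_l$ are real, and $\|A\|_2\le 1/2$ gives $|\lambda_l|\le 1/2$. As in (\ref{tv2}), $\lambda-\log(1+\lambda)\le \lambda^2$ for $|\lambda|\le 1/2$. Summing over $l$, and using $\sum_l\lambda_l^2=\textup{Tr}(A^2)\le\|A\|_F^2$, yields
\[
\textup{KL}\le R:=\tfrac12\Delta^{\top}(\Omega^{(k)})^{-1}\Delta+\tfrac12\|A\|_F^2 .
\]

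\textbf{Step 3: passing to total variation.} Pinsker's inequality gives $\textup{TV}\le\sqrt{\textup{KL}/2}\le\sqrt{R/2}$. To reach the linear form $\textup{TV}\le R$ as worded, I would split into cases.
\begin{itemize}
\item If $R\ge 1$, the claim is trivial because $\textup{TV}\le1$.
\item If $1/2\le R<1$, then $\sqrt{R/2}\le R$, so the claim holds.
\end{itemize}

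\textbf{The obstacle is $R<1/2$.} There the linear bound cannot be obtained this way, and I expect it to be false. Take $d=1$, $\Omega^{(j)}=\Omega^{(k)}=1$ and a mean shift $\Delta=\epsilon$. Then $\textup{TV}=2\Phi(\epsilon/2)-1\approx\epsilon/\sqrt{2\pi}$, while the right-hand side is $\epsilon^2/2$. Since $\epsilon/\sqrt{2\pi}>\epsilon^2/2$ for small $\epsilon$, the inequality fails.

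So the honest plan is as follows. I would prove the statement in the regime $R\ge 1/2$ by the two cases above, and point out that it fails in general for $R<1/2$. What the argument actually delivers, and what the later uses (e.g.\ Lemmas \ref{lem-mini1} and \ref{lem-T1}, where one only needs $\textup{TV}\le Cc_r$ small) require, is $\textup{TV}^2\le R/2$. Equivalently, one can use $\textup{TV}\le\sqrt{R}$; such applications are unaffected by replacing $R$ with $\sqrt{R}$, because $R\lesssim c_r^2$ there.
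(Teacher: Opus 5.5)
Your diagnosis is correct. With $\Omega^{(j)}=\Omega^{(k)}$ and a mean shift $\epsilon$, the total variation is of order $\epsilon$ while the right-hand side is of order $\epsilon^2$, so the bound cannot hold as printed, and there is no trick you are missing.

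The paper's own proof starts from $\textup{TV}^2(P^{(j)},P^{(k)})\le\textup{KL}(P^{(j)},P^{(k)})$. It then bounds the trace/log-determinant part by $\|(\Omega^{(k)})^{-1}(\Omega^{(j)}-\Omega^{(k)})\|_F^2$, exactly as in your Step 2. So what it actually establishes is that $\textup{TV}^2(P^{(j)},P^{(k)})$, not $\textup{TV}(P^{(j)},P^{(k)})$, is bounded by the displayed right-hand side. The missing square is a typo in the statement. The later applications all use the bound in this $\textup{TV}^2\le\textup{KL}$ form: Lemmas \ref{lem-T1} and \ref{lem-T2}, and the same computation in Lemma \ref{lem-mini1}.

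Otherwise your argument is the paper's, with two small improvements:
\begin{itemize}
\item You justify that the $\lambda_l$ are real eigenvalues with $|\lambda_l|\le 1/2$ via similarity to a symmetric matrix. The paper calls them singular values while using them as eigenvalues in the log-determinant identity. Your step $\sum_l\lambda_l^2=\textup{Tr}(A^2)\le\|A\|_F^2$ is what makes the final Frobenius bound valid; the paper writes it as an equality.
\item Using Pinsker gives the sharper constant $\textup{TV}^2\le\textup{KL}/2$.
\end{itemize}
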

\begin{proof}[Proof of Lemma \ref{lem-KL}]
\begin{align*}
\textup{TV}^2(P^{(j)},P^{(k)})&\leq \textup{KL}(P^{(j)},P^{(k)})\\
&=\frac{1}{2}(\bm\mu^{(j)}-\bm{\mu}^{(k)})^{\top}(\Omega^{(k)})^{-1}(\bm\mu^{(j)}-\bm{\mu}^{(k)})+\frac{1}{2}\textup{Tr}((\Omega^{(k)})^{-1}(\Omega^{(j)}-\Omega^{(k)}))-\frac{1}{2}\log det((\Omega^{(k)})^{-1}\Omega^{(j)}).
\end{align*}
Let $\lambda_1\geq \dots\geq \lambda_p$ denote the singular values of $(\Omega^{(k)})^{-1}( \Omega^{(k)}-\Omega^{(j)})$. Given that $\lambda_1\leq 1/2$, we have
\begin{align}
&\textup{Tr}((\Omega^{(k)})^{-1}(\Omega^{(j)}-\Omega^{(k)}))-\log det((\Omega^{(k)})^{-1}\Omega^{(j)})=\sum_{l=1}^p\{-\lambda_l-\log(1-\lambda_l)\}\nonumber\\
&\quad\leq \sum_{l=1}^p \sum_{k=2}^{\infty}\lambda_l^k/k\leq\sum_{l=1}^p\lambda_l^2=\|(\Omega^{(k)})^{-1}(\Omega^{(j)}-\Omega^{(k)})\|_F^2.\label{tv2}
\end{align}
\end{proof}

\begin{lemma}
\label{lem-T1}
Assume the conditions of Theorem \ref{thm-minimax2}. Under the distribution configuration in (\ref{theta-def}), it holds that
\[
   \max_{\bbeta^{(j)}\sim \bbeta^{(k)}}\E_{\bbeta^{(j)}}[\textup{TV}(p_{\bbeta^{(j)}}(\mathring{\btheta}_p|\calD_f,\widetilde{\calD}_r,\check{\bz}_r),p_{\bbeta^{(k)}}(\mathring{\btheta}_p|\calD_f,\widetilde{\calD}_r,\check{\bz}_r))]\leq Cc_r\sqrt{\frac{p}{\tn_r}}.
\]
\end{lemma}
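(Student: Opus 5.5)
The plan is to exploit the fact that, once we condition on $(\calD_f,\widetilde{\calD}_r,\check{\bz}_r)$, the statistic $\mathring{\btheta}_p$ is an affine function of the Gaussian matrix $\check{X}_r$ alone. This makes both conditional laws Gaussian, and Lemma \ref{lem-KL} can then be applied directly. By (\ref{eq-thetap-r}) and $\btheta_p^{(j)}=0$,
\[
\mathring{\btheta}_p=\frac{1}{N}(\Sig_p^{(j)})^{-1}\bigl(\bm{s}+\check{X}_r^{\top}\check{\bz}_r\bigr),\qquad \bm{s}=(\widetilde{X}_r)^{\top}\tilde{\bz}_r+X_f^{\top}\bz_f .
\]
Here $\bm{s}$ is measurable with respect to the conditioning variables. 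Under $\bbeta^{(j)}$ the pair $(\bx^{(r)}_i,z_{r,i})$ is jointly Gaussian with $\textup{Var}(z_{r,i})=5$ and $\textup{Cov}(\bx_i^{(r)},z_{r,i})=\Sig_r^{(j)}\btheta_r^{(j)}=-\frac{\omega_f}{\omega_r}\btheta_f=:\bm{b}$, by (\ref{facts}). Neither quantity depends on $j$. Therefore, given $\check{\bz}_r$, the rows of $\check{X}_r$ are independent with
\[
\bx_i\mid z_i\sim N\bigl(z_i\bm{b}/5,\ \Sig_r^{(j)}-\bm{b}\bm{b}^{\top}/5\bigr).
\]
It follows that $\check{X}_r^{\top}\check{\bz}_r\mid\check{\bz}_r\sim N\bigl(\|\check{\bz}_r\|_2^2\bm{b}/5,\ \|\check{\bz}_r\|_2^2(\Sig_r^{(j)}-\bm{b}\bm{b}^{\top}/5)\bigr)$. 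Consequently $\mathring{\btheta}_p$ is conditionally Gaussian with
\[
\bmu^{(j)}=N^{-1}(\Sig_p^{(j)})^{-1}\bm{m},\qquad \Omega^{(j)}=N^{-2}\|\check{\bz}_r\|_2^2(\Sig_p^{(j)})^{-1}(\Sig_r^{(j)}-\bm{b}\bm{b}^{\top}/5)(\Sig_p^{(j)})^{-1},
\]
where $\bm{m}=\bm{s}+\|\check{\bz}_r\|_2^2\bm{b}/5$ is common to all hypotheses. Only $\Sig_r^{(j)}$, and through it $\Sig_p^{(j)}=\omega_r\Sig_r^{(j)}+\omega_f I_p$, changes between neighbours $\bbeta^{(j)}\sim\bbeta^{(k)}$.

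Next we quantify that change. By (\ref{eq-diff}), $\Sig_r^{(k)}-\Sig_r^{(j)}$ has rank at most two and operator (and Frobenius) norm at most $C\|\bm\psi^{(k)}-\bm\psi^{(j)}\|_2\|\btheta_f\|_2\le Cc_r\min\{\omega_f\delta,1\}/(\sqrt{\tn_r}\,\omega_f\delta)\le Cc_r/\sqrt{\tn_r}$. All of $\Sig_r^{(j)},\Sig_p^{(j)}$ and $\Sig_r^{(j)}-\bm{b}\bm{b}^{\top}/5$ have eigenvalues bounded above and below; the last one because $\|\bm{b}\|_2^2\le 1<5$. Writing $A=\Sig_p^{(j)}$ and $B=\Sig_p^{(k)}$, the mean difference is $N^{-1}(A^{-1}-B^{-1})\bm{m}=N^{-1}A^{-1}(B-A)B^{-1}\bm{m}$. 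The precision satisfies $(\Omega^{(k)})^{-1}\preceq CN^2/\|\check{\bz}_r\|_2^2$. Hence the Mahalanobis term in Lemma \ref{lem-KL} is at most
\[
C\frac{\|(B-A)B^{-1}\bm{m}\|_2^2}{\|\check{\bz}_r\|_2^2}\le C\frac{c_r^2}{\tn_r}\cdot\frac{\|\bm{m}\|_2^2}{\|\check{\bz}_r\|_2^2}.
\]
The covariance term $\|(\Omega^{(k)})^{-1}(\Omega^{(j)}-\Omega^{(k)})\|_F^2$ is a product of well-conditioned matrices with one factor that is a rank-$O(1)$ perturbation of size $Cc_r/\sqrt{\tn_r}$. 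It is therefore $O(c_r^2/\tn_r)$, and its operator norm is below $1/2$ for small $c_r$, so Lemma \ref{lem-KL} applies.

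It remains to control $\|\bm{m}\|_2^2/\|\check{\bz}_r\|_2^2$ in expectation under $\bbeta^{(j)}$; this is the step I expect to need the most care. Since $\tn_r\le cN$ with $c<1$ and $\omega_r\ge1/2$, we have $\check N_r\ge c'N$. A $\chi^2$ tail bound then gives $\|\check{\bz}_r\|_2^2\ge 4\check N_r\ge c''N$ with probability $1-e^{-cN}$, and on this event $\|\check{\bz}_r\|_2^2\bm{b}/5$ contributes $O(1)$ to the ratio. For $\bm{s}$, note that $\E[\bm{s}]=\tn_r\bm{b}+N_f\btheta_f$. Using $\omega_r\bm{b}\cdot N+\omega_f N\btheta_f=0$, this equals $-\check N_r\bm{b}$, which is again of order $N$ and cancels against the other piece of $\bm{m}$ up to centering. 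So $\bm{m}$ reduces to a centered sub-exponential sum, whose norm is bounded by $C\sqrt{Np}$ with probability $1-e^{-cp}$ via the covering argument already used for $\mathcal{E}_0$. On the intersection of these events the ratio is $O(p/N)+O(1)\cdot O(1/N)$ after the cancellation. If the cancellation is only partial, the bound is still $O(p)$ in the worst case, which gives the Mahalanobis term $\le Cc_r^2p/\tn_r$. Combining this with Lemma \ref{lem-KL} yields conditional $\textup{TV}^2\le Cc_r^2p/\tn_r$. On the complement we bound $\textup{TV}\le1$, which adds $e^{-cp}+e^{-cN}$; to absorb this cleanly into $Cc_r\sqrt{p/\tn_r}$ I would run the tail bounds at level $\log\tn_r$ instead of $p$. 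Taking expectations and then the maximum over neighbouring pairs gives the claimed bound $Cc_r\sqrt{p/\tn_r}$.
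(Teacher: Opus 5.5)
Your route is the paper's. You condition on $(\calD_f,\widetilde{\calD}_r,\check{\bz}_r)$ and note that $\mathring{\btheta}_p$ is then Gaussian with exactly the mean and covariance the paper writes down; your $\bm{m}$ is the paper's $N\Sig_p^{(j)}\mathring{\bmu}^{(j)}$. You then apply Lemma \ref{lem-KL}, bound both perturbations by $Cc_r/\sqrt{\tn_r}$, and control the random factor $\|\bm{m}\|_2^2/\|\check{\bz}_r\|_2^2$ through the identity $\E[\bm{s}]=-\check N_r\bm{b}$. Your rank observation ($\Omega^{(j)}-\Omega^{(k)}$ has rank at most $6$) makes the covariance term $O(c_r^2/\tn_r)$ instead of the paper's $O(c_r^2p/\tn_r)$, which is a harmless sharpening.

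The final step, however, does not give the stated rate as you set it up. You bound $\|\bm{s}-\E\bm{s}\|_2$ with high probability and use $\textup{TV}\le 1$ off the event, and neither tail level works:
\begin{itemize}
\item At tail level $p$, the discarded mass $e^{-cp}$ is not $O(c_r\sqrt{p/\tn_r})$ when $p$ is bounded, or more generally when $p\lesssim\log\tn_r$.
\item At tail level $\log\tn_r$, the good-event bound becomes $\|\bm{s}-\E\bm{s}\|_2\lesssim\sqrt{N(p+\log\tn_r)}$, so you only get $Cc_r\sqrt{(p+\log\tn_r)/\tn_r}$. That would still suffice for Theorem \ref{thm-minimax2}, but it is not the lemma.
\end{itemize}

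The fix, which is what the paper does, is to control the numerator in expectation. The hypothesis of Lemma \ref{lem-KL} holds deterministically, because the scalar $\|\check{\bz}_r\|_2^2/N^2$ cancels in $(\Omega^{(k)})^{-1}(\Omega^{(j)}-\Omega^{(k)})$. Hence $\E[\textup{TV}]\le(\E[\textup{KL}])^{1/2}$. Moreover, $\bm{s}-\E\bm{s}$ is centered and independent of $\check{\bz}_r$, and $\|\check{\bz}_r\|_2^2/5\sim\chi^2_{\check N_r}$, so
\[
\E\Bigl[\frac{\|\bm{m}\|_2^2}{\|\check{\bz}_r\|_2^2}\Bigr]=\frac{\E\|\bm{s}-\E\bm{s}\|_2^2}{5(\check N_r-2)}+\|\bm{b}\|_2^2\,\E\Bigl[\frac{(\|\check{\bz}_r\|_2^2/5-\check N_r)^2}{\|\check{\bz}_r\|_2^2}\Bigr]=O(p)+O(1).
\]
Alternatively, keep a high-probability event only for the denominator, whose failure probability $e^{-cN}$ is negligible, and bound $\E\|\bm{m}\|_2^2\lesssim Np$ directly.

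Two smaller corrections:
\begin{itemize}
\item On your good event the ratio is $O(p)$, not $O(p/N)$: the numerator is of order $Np$ and the denominator of order $N$. $O(p)$ is exactly what the lemma needs.
\item The cancellation $\E[\bm{s}]+\check N_r\bm{b}=0$ is exact and indispensable, so your fallback remark that partial cancellation still gives $O(p)$ is wrong. Any uncancelled deterministic part of size $\check N_r\|\bm{b}\|_2$ in $\bm{m}$ would make the ratio of order $N\|\bm{b}\|_2^2$, and the bound would fail.
\end{itemize}
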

\begin{proof}[Proof of Lemma \ref{lem-T1}]
By the definition of $\bz_r$ in (\ref{eq-z}), conditioning on $\{\calD_f,\widetilde{\calD}_r,\check{\bz}_r\}$ is equivalent to conditioning on $\{\calD_f,\widetilde{\calD}_r,\bz_r\}$.
Hence,
\begin{align*}
   &\mathring{\btheta}_p|\calD_f,\widetilde{\calD}_r,\check{\bz}_r\sim_{\bbeta^{(j)}} \mathring{\btheta}_p|\calD_f,\widetilde{\calD}_r,\bz_r\sim_{\bbeta^{(j)}} N(\mathring{\bm\mu}^{(j)},\mathring{\Sig}^{(j)}),
\end{align*}
where by (\ref{eq-thetap-r}),
\begin{align*}
 \mathring{\bm\mu}^{(j)}  &=\frac{1}{N}(\Sig_{p}^{(j)})^{-1}\{X_f^{\top}\bz_f+\tilde{X}_r^{\top}\tilde{\bz}_r+\frac{\|\check{\bz}_r\|_2^2}{\E[z_r^2]}\Sig_r^{(j)}(\btheta^{(j)}_r-\btheta^{(j)}_{p})\}\\
 &=\frac{1}{N}(\Sig_{p}^{(j)})^{-1}\{X_f^{\top}\bz_f+\tilde{X}_r^{\top}\tilde{\bz}_r-\E[X_f^{\top}\bz_f+\tilde{X}_r^{\top}\tilde{\bz}_r]+\frac{(\|\check{\bz}_r\|_2^2-\check{n}_r\E[z_r^2])}{\E[z_r^2]}\Sig_r^{(j)}(\btheta^{(j)}_r-\btheta^{(j)}_{p})\}\\
\mathring{\Sig}^{(j)}&= \frac{\|\check{\bz}_r\|_2^2}{N^2}(\Sig_{p}^{(j)})^{-1}\left(\Sig_r^{(j)}-\frac{\Sig_r^{(j)}(\btheta^{(j)}_r-\btheta^{(j)}_{p})(\btheta^{(j)}_r-\btheta^{(j)}_{p})^{\top}\Sig_r^{(j)}}{\E[z_r^2]}\right) (\Sig_{p}^{(j)})^{-1}.
\end{align*}
In the above equations, we use the fact that the distribution of $\calD_f$ and $\bz_r$ are unchanged under each hypothesis. 

We first show that $\mathring{\Sig}^{(j)}$ is positive definite. First, 
\[
\Lambda_{\min}((\Sig_r^{(j)})^{-1})\geq \|\Sig_r^{(j)}\|_2^{-1}\geq \frac{1}{1+\|\btheta_f\|_2\|\bpsi^{(j)}\|_2}\geq \frac{1}{1+\frac{\min\{\omega_f\delta,1\}\sqrt{p}}{\omega_f\delta\sqrt{\tn_r}}}\geq \frac{1}{2}.
\]
As $\omega_r>1/2$ and $\E[z_r^2]=5$, by (\ref{facts}),
\begin{align*}
  \min_{1\leq j\leq M}\Lambda_{\min}\left(\Sig_r^{(j)}-\frac{\Sig_r^{(j)}(\btheta^{(j)}_r-\btheta^{(j)}_{p})(\btheta^{(j)}_r-\btheta^{(j)}_{p})^{\top}\Sig_r^{(j)}}{\E[z_r^2]}\right)&\geq \min_{1\leq j\leq M}\Lambda_{\min}(\Sig_r^{(j)})-\frac{\omega_f^2\|\btheta_f\|_2^2}{\omega_r^2\E[z_r^2]}\\
  &\geq 1-\frac{4\min\{\omega_f^2\delta^2,1\}}{5}\geq 1/5.
\end{align*}

Hence, the distribution  $N(\mathring{\bmu}^{(j)},\mathring{\Sig}^{(j)})$ is well-defined. We move on to bound \newline $\textup{KL}(N(\mathring{\bm\mu}^{(j)},\mathring{\Xi}^{(j)}),N(\mathring{\bm\mu}^{(k)},\mathring{\Xi}^{(k)})))$.

By Lemma \ref{lem-KL}, if $\|(\mathring{\Sig}^{(k)})^{-1}(\mathring{\Sig}^{(j)}-\mathring{\Sig}^{(k)})\|_2\leq 1/2$,
\begin{align*}
&\textup{TV}^2(p_{\bbeta^{(j)}}(\mathring{\btheta}_p|\calD_f,\widetilde{\calD}_r,\check{\bz}_r),p_{\bbeta^{(k)}}(\mathring{\btheta}_p|\calD_f,\widetilde{\calD}_r,\check{\bz}_r))
\leq \textup{KL}(N(\mathring{\bm\mu}^{(j)},\mathring{\Sig}^{(j)}),N(\mathring{\bm\mu}^{(k)},\mathring{\Sig}^{(k)}))\\
&\quad\quad\leq \frac{1}{2}(\mathring{\bmu}^{(j)}-\mathring{\bmu}^{(k)})^{\top}(\mathring{\Sig}^{(k)})^{-1}(\mathring{\bmu}^{(j)}-\mathring{\bmu}^{(k)})+\frac{1}{2}\|(\mathring{\Sig}^{(k)})^{-1}(\mathring{\Sig}^{(j)}-\mathring{\Sig}^{(k)})\|_F^2.
\end{align*}

As $\Sig_r^{(j)}\btheta_r^{(j)}=\Sig_r^{(k)}\btheta_r^{(k)}=-\frac{\omega_f}{\omega_r}\btheta_f$ and $\btheta^{(j)}_{p}=0$, we have
\begin{align*}
 \mathring{\bm\mu}^{(j)} - \mathring{\bm\mu}^{(k)}&=(\Sig_p^{(k)})^{-1}(\Sig_p^{(k)}-\Sig_{p}^{(j)})\mathring{\bm\mu}^{(j)}\\
 \mathring{\Sig}^{(j)}-\mathring{\Sig}^{(k)}&=\frac{\|\check{\bz}_r\|_2^2}{N^2}(\Sig_{p}^{(j)})^{-1}\left(\Sig_r^{(j)}-\frac{\omega_f^2\btheta_f\btheta_f^{\top}}{\omega_r^2\E[z_r^2]}\right) (\Sig_{p}^{(j)})^{-1}-\frac{\|\check{\bz}_r\|_2^2}{N^2}(\Sig_p^{(k)})^{-1}\left(\Sig_r^{(k)}-\frac{\omega_f^2\btheta_f\btheta_f^{\top}}{\omega_r^2\E[z_r^2]}\right) (\Sig_p^{(k)})^{-1}.
\end{align*}

We first check that  $\|(\mathring{\Sig}^{(k)})^{-1}(\mathring{\Sig}^{(j)}-\mathring{\Sig}^{(k)})\|_2\leq 1/2$. 
Note that
\begin{align*}
&\{\mathring{\Sig}^{(k)}\}^{-1}(\mathring{\Sig}^{(j)}-\mathring{\Sig}^{(k)})= \Sig_p^{(k)}\left(\Sig_r^{(k)}-\frac{\omega_f^2\btheta_f\btheta_f^{\top}}{\omega_r^2\E[z_r^2]}\right)^{-1}\Sig_p^{(k)}(\Sig_{p}^{(j)})^{-1}\left(\Sig_r^{(j)}-\frac{\omega_f^2\btheta_f\btheta_f^{\top}}{\omega_r^2\E[z_r^2]}\right) (\Sig_{p}^{(j)})^{-1}-I_p\\
&\quad=\Sig_p^{(k)}\left(\Sig_r^{(k)}-\frac{\omega_f^2\btheta_f\btheta_f^{\top}}{\omega_r^2\E[z_r^2]}\right)^{-1}\left(\Sig_r^{(j)}-\frac{\omega_f^2\btheta_f\btheta_f^{\top}}{\omega_r^2\E[z_r^2]}\right) (\Sig_{p}^{(j)})^{-1}-I_p\\
&\quad\quad+\underbrace{\Sig_p^{(k)}\left(\Sig_r^{(k)}-\frac{\omega_f^2\btheta_f\btheta_f^{\top}}{\omega_r^2\E[z_r^2]}\right)^{-1}\{(\Sig_p^{(k)}(\Sig_{p}^{(j)})^{-1}-I_p\}\left(\Sig_r^{(j)}-\frac{\omega_f^2\btheta_f\btheta_f^{\top}}{\omega_r^2\E[z_r^2]}\right) (\Sig_{p}^{(j)})^{-1}}_{A_1^{(j,k)}}\\
&\quad=A_1^{(j,k)}+\Sig_p^{(k)}(\Sig_{p}^{(j)})^{-1}-I_p+\underbrace{\Sig_p^{(k)}\{\left(\Sig_r^{(k)}-\frac{\omega_f^2\btheta_f\btheta_f^{\top}}{\omega_r^2\E[z_r^2]}\right)^{-1}\left(\Sig_r^{(j)}-\frac{\omega_f^2\btheta_f\btheta_f^{\top}}{\omega_r^2\E[z_r^2]}\right)-I_p\} (\Sig_{p}^{(j)})^{-1}}_{A_2^{(j,k)}}.
\end{align*}

As $\Sig_{p}^{(j)}$ and $\Sig_p^{(k)}$ have finite positive eigenvalues, it is easy to show that
\begin{align*}
 & \max_{\bbeta^{(j)}\sim \bbeta^{(k)}}\|(\mathring{\Sig}^{(k)})^{-1}(\mathring{\Sig}^{(j)}-\mathring{\Sig}^{(k)})\|_2\leq C\|\Sig_{p}^{(j)}-\Sig_p^{(k)}\|_2\leq C\omega_r\|\Sig_r^{(j)}-\Sig_r^{(k)}\|_2\\
  &\quad\leq 2C\omega_r\|\bpsi^{(k)}-\bpsi^{(j)}\|_2\|\btheta_f\|_2\leq \frac{c_r\min\{\delta,1/\omega_f\}}{\sqrt{\tn_r}\delta}\leq \frac{c_r}{\sqrt{\tn_r}}.
\end{align*}

Similarly,
\begin{align*}
& \|\mathring{\bm\mu}^{(j)} - \mathring{\bm\mu}^{(k)}\|_2\leq C\omega_r\|\mathring{\bm\mu}^{(j)}\|_2\|\Sig_r^{(j)}-\Sig_r^{(k)}\|_2\leq \frac{Cc_r\|\mathring{\bm\mu}^{(j)}\|_2}{\sqrt{\tn_r}}\\
 & \|(\mathring{\Sig}^{(k)})^{-1/2}(\mathring{\bm\mu}^{(j)} - \mathring{\bm\mu}^{(k)})\|_2^2\leq Cc_r^2\frac{N^2\|\mathring{\bm\mu}^{(j)}\|_2^2}{\|\check{\bz}_r\|_2^2\tn_r}.
\end{align*}
Hence,
\begin{align*}
 \textup{TV}^2(N(\mathring{\bm\mu}^{(j)},\mathring{\Sig}^{(j)}),N(\mathring{\bm\mu}^{(k)},\mathring{\Sig}^{(k)}))\leq \frac{Cc_r^2\|\mathring{\bm\mu}^{(j)}\|^2_2N^2}{\|\check{\bz}_r\|_2^2\tn_r}+\frac{Cc_r^2p}{\tn_r}.
\end{align*}
Note that
\[
 \|\mathring{\bm\mu}_j\|_2\leq  \frac{1}{N}\|X_f^{\top}\bz_f+\tilde{X}_r^{\top}\tilde{\bz}_r\|_2+\frac{\omega_f\|\btheta_f\|_2}{\omega_rN}\frac{|\|\check{\bz}_r\|_2^2-\check{N}_r\E[z_r^2]|}{\E[z_r^2]}.
\]
Hence,
\begin{align*}
\E_j[\frac{\|\mathring{\bm\mu}^{(j)}\|_2^2}{\|\check{\bz}_r\|_2^2}]&\leq \frac{2}{N^2} \E[\|X_f^{\top}\bz_f+\tilde{X}_r^{\top}\tilde{\bz}_r-\E[X_f^{\top}\bz_f+\tilde{X}_r^{\top}\tilde{\bz}_r]\|_2^2]\E[\frac{1}{\|\check{\bz}_r\|_2^2}]+ \frac{2\min\{(\omega_f\delta)^2,1\}}{5N^2}\E[\frac{(\|\check{\bz}_r\|_2^2-\check{N}_r\E[z_r^2])^2}{\|\check{\bz}_r\|_2^2}]\\
&\leq\frac{C(\tn_r+N_f)p}{N^2(\check{N}_r-2)}+ \frac{C\min\{(\omega_f\delta)^2,1\}}{N^2}(\check{N}_r-2-2\check{N}_r+\frac{\check{N}_r^2}{\check{N}_r-2})\\
&\leq C\frac{(\tn_r+N_f)p}{N^2\check{N}_r}+\frac{\min\{(\omega_f\delta)^2,1\}}{N^2},
\end{align*}
where we use the fact that $\|\check{\bz}_r\|_2^2/5$ follows chi-squared distribution with parameter $\check{N}_r$.
Therefore,
\begin{align*}
 \E_j[\textup{TV}^2(N(\mathring{\bm\mu}^{(j)},\mathring{\Xi}^{(j)}),N(\mathring{\bm\mu}^{(k)},\mathring{\Xi}^{(k)}))]&\leq \frac{Cc_r^2((\tn_r+N_f)p/\check{N}_r+\min\{(\omega_f\delta)^2,1\})}{\tn_r}+\frac{Cc_r^2p}{\tn_r}\\
 &\leq \frac{Cc_r^2(p+\min\{(\omega_f\delta)^2,1\})}{\tn_r}=\frac{Cc_r^2p}{\tn_r},
\end{align*}
where we use the fact that $\check{N}_r=N_r-\tn_r\geq cN$.
\end{proof}

\begin{lemma}
\label{lem-T2}
Assume the conditions of Theorem \ref{thm-minimax2}. Under the distribution configuration in (\ref{theta-def}), it holds that
\[
  \max_{\bbeta^{(j)}\in B_p(\delta)}\textup{TV}(p_{\bbeta^{(j)}}(\hat{\btheta}_p,\calD_f,\widetilde{\calD}_r),p_{\bbeta^{(j)}}(\mathring{\btheta}_p,\calD_f,\widetilde{\calD}_r))\leq C'\sqrt{\frac{p(p+\log\tn_r)}{N}}+\frac{1}{\tn_r}.
\]
\end{lemma}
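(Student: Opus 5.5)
The plan is to control the total variation distance through a coupling. By construction, $\hat{\btheta}_p=\mathring{\btheta}_p+\hat{\bu}$ with
\[
\hat{\bu}=\tfrac{1}{N}(\widehat{\Sig}_p^{-1}-\Sig_p^{-1})(X_r^{\top}\bz_r+X_f^{\top}\bz_f).
\]
Since $\hat{\bu}$ is small, the laws of $(\hat{\btheta}_p,\calD_f,\widetilde{\calD}_r)$ and $(\mathring{\btheta}_p,\calD_f,\widetilde{\calD}_r)$ should be close. A raw coupling is not enough, however, because TV is not controlled by closeness of the random vectors themselves. I would therefore smooth. Conditional on $(\calD_f,\widetilde{\calD}_r,\check{\bz}_r)$, the vector $\mathring{\btheta}_p$ is Gaussian $N(\mathring{\bmu}^{(j)},\mathring{\Sig}^{(j)})$, as computed in the proof of Lemma~\ref{lem-T1}. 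Its covariance has eigenvalues of order $\|\check{\bz}_r\|_2^2/N^2\asymp 1/N$, because $\Lambda_{\min}$ of the middle matrix is at least $1/5$ and $\check N_r\geq cN$. Hence $\mathring{\btheta}_p$ has a nondegenerate density with scale $N^{-1/2}$.

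\textbf{Step 1 (decomposition).} I would write $\hat{\bu}$ as a function of the conditioning variables plus the remaining randomness, namely $\check X_r$ given $\check{\bz}_r$. Rather than bounding TV between the joint laws directly, I would pass to a conditional statement. Let $\mathcal{G}=(\calD_f,\widetilde{\calD}_r,\check{\bz}_r)$. Then
\[
\textup{TV}\le \E\big[\textup{TV}\big(\mathcal{L}(\mathring{\btheta}_p+\hat{\bu}\mid\mathcal{G}),\,\mathcal{L}(\mathring{\btheta}_p\mid\mathcal{G})\big)\big].
\]
The difficulty is that $\hat{\bu}$ and $\mathring{\btheta}_p$ are dependent through $\check X_r$.

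\textbf{Step 2 (good event).} On an event $\mathcal{E}$ of probability at least $1-1/\tn_r$, the following hold: $\|\widehat{\Sig}_p-\Sig_p\|_2\lesssim\sqrt{(p+\log\tn_r)/N}$ by the matrix concentration (\ref{concen1}), and $\|X_r^{\top}\bz_r+X_f^{\top}\bz_f\|_2/N\lesssim\sqrt{(p+\log\tn_r)/N}$. Together these give
\[
\|\hat{\bu}\|_2\lesssim (p+\log\tn_r)/N .
\]
The complement of $\mathcal{E}$ contributes the additive term $1/\tn_r$.

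\textbf{Step 3 (Gaussian shift bound).} On $\mathcal{E}$, I would compare with a shifted Gaussian using Lemma~\ref{lem-KL}. Shifting $N(\mathring{\bmu},\mathring{\Sig})$ by a vector $\bu$ costs TV at most of order
\[
\|\mathring{\Sig}^{-1/2}\bu\|_2\lesssim\sqrt N\,\|\bu\|_2\lesssim (p+\log\tn_r)/\sqrt N.
\]
To handle the fact that $\hat{\bu}$ is random and correlated with $\mathring{\btheta}_p$, I would apply a change of variables $\btheta\mapsto\btheta+\hat{\bu}(\btheta)$. Linearizing in $\check X_r$, the Jacobian is $I+O(\sqrt{(p+\log\tn_r)/N})$ in operator norm. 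The log-determinant of this map then contributes
\[
p\cdot\sqrt{(p+\log\tn_r)/N}\;\lesssim\;\sqrt{p(p+\log\tn_r)/N}\quad(\text{after Pinsker/Hellinger}),
\]
which matches the target. Alternatively, one can bound the Hellinger distance dimension-wise via a Frobenius bound $\sqrt p\cdot\sqrt{N}\|\hat{\bu}\|_2$, which gives the same order once $p\le\sqrt N$.

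\textbf{Main obstacle.} The hardest step is Step 3, namely making rigorous that a data-dependent perturbation of a conditionally Gaussian vector changes its law only by $\sqrt{p(p+\log\tn_r)/N}$ in TV. The first approach I would try is to express everything through $\check X_r$. Given $\check{\bz}_r$, its rows are Gaussian with mean linear in $\check z_i$. Then $\hat{\btheta}_p$ is a smooth function of $\check X_r$, and I would bound the TV between the pushforwards by a Gaussian-measure smoothing/Jacobian argument restricted to $\mathcal{E}$.
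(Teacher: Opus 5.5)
Your Steps 1--2 are sound. The joint TV is bounded by the expected conditional TV because the conditioning variables have the same law on both sides, and the good event costs $1/\tn_r$. Step 3, which you flag as the main obstacle, is a genuine gap, and your choice of conditioning creates it. With $\mathcal{G}=(\calD_f,\widetilde{\calD}_r,\check{\bz}_r)$, the matrix $\check{X}_r$ is still random. So $\widehat{\Sig}_p$ is random, and $\hat{\btheta}_p$ is a nonlinear function of the same Gaussian matrix that drives $\mathring{\btheta}_p$. Its conditional law is not Gaussian, and nothing in your argument compares it with $N(\mathring{\bmu}^{(j)},\mathring{\Sig}^{(j)})$.

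The proposed change of variables $\btheta\mapsto\btheta+\hat{\bu}(\btheta)$ is not well defined. The shift $\hat{\bu}$ is a function of the $\check{N}_r\times p$ matrix $\check{X}_r$, not of the $p$-vector $\mathring{\btheta}_p$. The heuristic rates also miss the target. A log-Jacobian of size $p\sqrt{(p+\log\tn_r)/N}$ bounds the TV only by that quantity, which is $\sqrt{p}$ times larger than $\sqrt{p(p+\log\tn_r)/N}$. The first-order term cancels only inside the Gaussian KL formula (trace minus log-determinant), leaving a squared Frobenius norm. Likewise, $\sqrt{p}\sqrt{N}\|\hat{\bu}\|_2\lesssim\sqrt{p}(p+\log\tn_r)/\sqrt{N}$ can be of order $N^{1/4}$ under $p\leq c_0\sqrt{N}$, so it is not ``the same order''.

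The missing idea is to condition on $(\calD_f,\widetilde{\calD}_r,\check{X}_r)$ instead of $\check{\bz}_r$. You can do this because both laws are taken under the same $\bbeta^{(j)}$. The invariance of $\bz_r$ across hypotheses, which forced conditioning on $\check{\bz}_r$ in Lemma \ref{lem-T1}, is irrelevant here. Once you condition this way, $\widehat{\Sig}_p$ is frozen. Both $\hat{\btheta}_p$ and $\mathring{\btheta}_p$ are affine in $\check{X}_r^{\top}\check{\bz}_r$, which given $\check{X}_r$ is $N(\check{N}_r\check{\Sig}_r(\btheta_r^{(j)}-\btheta_p),\sigma_r^2\check{N}_r\check{\Sig}_r)$. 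So both are exactly Gaussian, and their means and covariances differ only through $\widehat{\Sig}_p^{-1}$ versus $\Sig_p^{-1}$.

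Work on the event where $\|\widehat{\Sig}_p-\Sig_p\|_2\lesssim\sqrt{(p+\log\tn_r)/N}$ and $\check{\Sig}_r$ is well conditioned. There Lemma \ref{lem-KL} bounds the KL by two terms. The covariance term is $p(p+\log\tn_r)/N$, the squared Frobenius norm of a $p\times p$ matrix with operator norm $\lesssim\sqrt{(p+\log\tn_r)/N}$. The mean term is $N^{-1}\|\widehat{\Sig}_p-\Sig_p\|_2^2\|\bm{s}\|_2^2$, where $\bm{s}=X_f^{\top}\bz_f+\widetilde{X}_r^{\top}\tilde{\bz}_r+\check{N}_r\check{\Sig}_r(\btheta_r^{(j)}-\btheta_p)$ has mean zero and $\E\|\bm{s}\|_2^2\lesssim Np$. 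Taking expectations and applying Pinsker and Jensen gives the $\sqrt{p(p+\log\tn_r)/N}$ term, and the complement of the event gives $1/\tn_r$. This is the paper's proof. No smoothing or Jacobian argument is needed, and your Steps 1--2 go through unchanged with the new conditioning variables.
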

\begin{proof}[Proof of Lemma \ref{lem-T2}]
Note that
\begin{align*}
&\textup{TV}(p_{\bbeta^{(j)}}(\hat{\btheta}_p,\calD_f,\widetilde{\calD}_r),p_{\bbeta^{(j)}}(\mathring{\btheta}_p,\calD_f,\widetilde{\calD}_r))\\
&\leq \E_{\bbeta^{(j)}}[\textup{TV}(p_{\bbeta^{(j)}}(\hat{\btheta}_p|\calD_f,\widetilde{\calD}_r,\check{X}_r),p_{\bbeta^{(j)}}(\mathring{\btheta}_p|\calD_f,\widetilde{\calD}_r,\check{X}_r))].
\end{align*}
As shown above,
\begin{align*}
&\mathring{\btheta}_p|\calD_f,\widetilde{\calD}_r,\check{X}_r\sim_j \\
&\quad N\left(\underbrace{\btheta_p+\frac{1}{N}(\Sig_{p}^{(j)})^{-1}(X_f^{\top}\bz_f+\tilde{X}_r^{\top}\tilde{\bz}_r+\check{N}_r\check{\Sig}_r(\btheta_r^{(j)}-\btheta_p))}_{\mathring{\bm\alpha}^{(j)}},\underbrace{\frac{(\sig_r^{(j)})^2\check{N}_r}{N^2}(\Sig_{p}^{(j)})^{-1}\check{\Sig}_r(\Sig_{p}^{(j)})^{-1}}_{\mathring{\Omega}^{(j)}}\right)\\
&\hat{\btheta}_p|\calD_f,\widetilde{\calD}_r,\check{X}_r\sim_j \\
&\quad N\left(\underbrace{\btheta_p+\frac{1}{N}\widehat{\Sig}_p^{-1}(X_f^{\top}\bz_f+\tilde{X}_r^{\top}\tilde{\bz}_r+\check{N}_r\check{\Sig}_r(\btheta_r^{(j)}-\btheta_p))}_{\hat{\bm\alpha}^{(j)}},\underbrace{\frac{(\sig_r^{(j)})^2\check{N}_r}{N^2}\widehat{\Sig}_p^{-1}\check{\Sig}_r\widehat{\Sig}_p^{-1}}_{\widehat{\Omega}^{(j)}}\right).
\end{align*}

We apply Lemma \ref{lem-KL} again. Specifically,
\begin{align*}
\|\{\widehat{\Omega}^{(j)}\}^{-1}(\mathring{\Omega}^{(j)}-\widehat{\Omega}^{(j)})\|_2&\leq \|I_p-\widehat{\Sig}_p\check{\Sig}_r^{-1}\widehat{\Sig}_p(\Sig_{p}^{(j)})^{-1}\check{\Sig}_r(\Sig_{p}^{(j)})^{-1}\|_2\\
&\leq \|\widehat{\Sig}_p\check{\Sig}_r^{-1}(\widehat{\Sig}_p(\Sig_{p}^{(j)})^{-1}-I_p)\check{\Sig}_r\Sig_{p}^{(j)})^{-1}\|_2+\|I_p-\widehat{\Sig}_p(\Sig_{p}^{(j)})^{-1}\|_2.
\end{align*}
Let 
\begin{align}
  \mathcal{E}_{j}&=\left\{\|\widehat{\Sig}_p-\Sig_{p}^{(j)}|_2\leq C\sqrt{\frac{p+\log \tn_r}{N}},\Lambda_{\max}(\check{\Sig}_r)\leq c_1,\Lambda_{\min}(\check{\Sig}_r)\geq c_2\right\}.\label{event-Ej}
\end{align}
In event $\mathcal{E}_{j}$, $\Lambda_{\min}(\widehat{\Omega}^{(j)})\geq c/N$, $\|\{\widehat{\Omega}^{(j)}\}^{-1}(\mathring{\Omega}^{(j)}-\Omega^{(j)})\|_2\leq C\sqrt{(p+\log \tn_r)/N}\leq 1/2$ and
\begin{align*}
\textup{KL}(N(\mathring{\bm\alpha}^{(j)},\mathring{\Omega}^{(j)}),N(\hat{\bm\alpha}^{(j)},\hat{\Omega}^{(j)}))
&\leq N\|\mathring{\bm\alpha}^{(j)}-\hat{\bm\alpha}^{(j)}\|_2^2+\frac{p(p+\log \tn_r)}{N}.
\end{align*}
Moreover, in event $\mathcal{E}_{j}$,
\begin{align*}
\|\mathring{\bm\alpha}^{(j)}-\hat{\bm\alpha}^{(j)}\|_2&\leq \|\widehat{\Sig}_p^{-1}(\widehat{\Sig}_p-\Sig_{p}^{(j)})\|_2\|\mathring{\bm\alpha}^{(j)}\|_2\\
&\leq \frac{C}{N}\|\widehat{\Sig}_p-\Sig_{p}^{(j)}\|_2\|X_f^{\top}\bz_f+\tilde{X}_r^{\top}\tilde{\bz}_r+\check{N}_r\check{\Sig}_r(\btheta^{(j)}_r-\btheta_p)\|_2.
\end{align*}

We arrive at in event $\mathcal{E}_{j}$,
\begin{align*}
&\textup{KL}(N(\mathring{\bm\alpha}^{(j)},\mathring{\Omega}^{(j)}),N(\hat{\bm\alpha}^{(j)},\hat{\Omega}^{(j)}))\\
&\quad\leq \frac{C|\widehat{\Sig}_p-\Sig_{p}^{(j)}\|_2^2}{N\Lambda_{\min}(\mathring{\Omega}^{(j)})}\|X_f^{\top}\bz_f+\tilde{X}_r^{\top}\tilde{\bz}_r+\check{N}_r\check{\Sig}_r(\btheta_r^{(j)}-\btheta_p)\|_2^2\\
&\quad\quad+\frac{p(p+\log \tn_r)}{N}.
\end{align*}

Therefore,
\begin{align*}
\E_j[\textup{TV}(N(\mathring{\bm\alpha}^{(j)},\mathring{\Omega}^{(j)}),N(\hat{\bm\alpha}^{(j)},\hat{\Omega}^{(j)}))]&\leq \underbrace{\E_j[\textup{KL}^{1/2}(N(\mathring{\bm\alpha}^{(j)},\mathring{\Omega}^{(j)}),N(\hat{\bm\alpha}^{(j)},\hat{\Omega}^{(j)}))\mathbb{1}(\mathcal{E}_{j})]}_{T_{1,j}}\\
&\quad+\underbrace{\E_j[\textup{TV}(N(\mathring{\bm\alpha}^{(j)},\mathring{\Omega}^{(j)}),N(\hat{\bm\alpha}^{(j)},\hat{\Omega}^{(j)}))\mathbb{1}(\mathcal{E}_{j}^c)]}_{T_{2,j}}.
\end{align*}
By Lemma \ref{lem-mini2}, 
\[
   \max_{j\leq M}T_{2,j}\leq \max_{j\leq M}\P(\mathcal{E}_{j}^c)\leq \exp\{-c_2\log\tn_r\}.
\]
For $T_{1,j}$, using the Gaussian property of $\bx_f,\bx_r,\bm{z}_r,\bm{z}_f$, we have
\begin{align*}
T_{1,j}&\leq \E_{j}[\frac{C}{N\Lambda_{\min}(\mathring{\Omega}^{(j)})}\|\widehat{\Sig}_p-\Sig_{p}^{(j)}\|_2^2\|X_f^{\top}\bz_f+\tilde{X}_r^{\top}\tilde{\bz}_r+\check{N}_r\check{\Sig}_r(\btheta_r^{(j)}-\btheta_p)\|_2^2\mathbb{1}(\mathcal{E}_{j})]+\frac{p(p+\log \tn_r)}{N}\\
&\leq \frac{C(p+\log\tn_r)p}{N^2}\E_{j}[\|X_f^{\top}\bz_f+\tilde{X}_r^{\top}\tilde{\bz}_r+\check{N}_r\check{\Sig}_r(\btheta_r^{(j)}-\btheta_p)\|_2^2]+\frac{p(p+\log\tn_r)}{N}\\
&\leq \frac{C(p+\log\tn_r)p}{N}+\frac{p(p+\log\tn_r)}{N}\\
&\leq \frac{C'p(p+\log\tn_r)}{N},
\end{align*}
where the second last line is due to $\E[X_f^{\top}\bz_f+\tilde{X}_r^{\top}\tilde{\bz}_r+\check{N}_r\check{\Sig}_r(\btheta_r^{(j)}-\btheta_p)]=0$.

To summarize,
\begin{align*}
& \max_{\bbeta^{(j)}\in B}\textup{TV}(p_{\bbeta^{(j)}}(\hat{\btheta}_p,\calD_f,\widetilde{\calD}_r),p_{\bbeta^{(j)}}(\mathring{\btheta}_p,\calD_f,\widetilde{\calD}_r))\leq \frac{C'\sqrt{p(p+\log\tn_r)}}{\sqrt{N}}+\exp\{-c_2\log\tn_r\}\\
 &\quad\leq C'\sqrt{\frac{p(p+\log\tn_r)}{N}}+\frac{1}{\tn_r}.
\end{align*}
\end{proof}

\begin{lemma}
\label{lem-mini1}
Assume the conditions of Theorem \ref{thm-minimax2}. Under the distribution configuration in (\ref{theta-def}), it holds that
\[
   \max_{\bbeta^{(j)},\bbeta^{(k)}\in B_p(\delta):\bbeta^{(j)}\sim \bbeta^{(k)}}\textup{TV}(p_{\bbeta^{(j)}}(\widetilde{\calD}_r),p_{\bbeta^{(k)}}(\widetilde{\calD}_r))\leq Cc_r.
\]
\end{lemma}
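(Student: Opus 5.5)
Under each hypothesis $\bbeta^{(j)}$ in (\ref{theta-def}), the subsample $\widetilde{\calD}_r$ consists of $\tn_r$ i.i.d.\ Gaussian vectors $((\bx_i^{(r)})^{\top},y_i^{(r)})\sim N(0,\Omega^{(j)})$, with
\[
\Omega^{(j)}=\begin{pmatrix}(\btheta_r^{(j)})^{\top}\Sig_r^{(j)}\btheta_r^{(j)}+(\sig_r^{(j)})^2 & (\Sig_r^{(j)}\btheta_r^{(j)})^{\top}\\ \Sig_r^{(j)}\btheta_r^{(j)} & \Sig_r^{(j)}\end{pmatrix}
=\begin{pmatrix}5 & -\frac{\omega_f}{\omega_r}\btheta_f^{\top}\\ -\frac{\omega_f}{\omega_r}\btheta_f & \Sig_r^{(j)}\end{pmatrix},
\]
where the second form uses (\ref{facts}) together with the choice of $(\sig_r^{(j)})^2$. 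The construction was designed so that the marginal variance of $y$ and the cross-covariance $\Sig_r^{(j)}\btheta_r^{(j)}$ are identical across hypotheses. Consequently $\Omega^{(j)}-\Omega^{(k)}$ is block diagonal, with zero everywhere except the lower-right block $\Sig_r^{(j)}-\Sig_r^{(k)}$. I will bound the total variation through Pinsker's inequality and the KL bound in Lemma \ref{lem-KL}, applied to the product measure with zero means:
\[
\textup{TV}^2\le \tn_r\,\textup{KL}(N(0,\Omega^{(j)}),N(0,\Omega^{(k)}))\le \frac{\tn_r}{2}\|(\Omega^{(k)})^{-1}(\Omega^{(j)}-\Omega^{(k)})\|_F^2 ,
\]
provided the operator-norm condition $\|(\Omega^{(k)})^{-1}(\Omega^{(j)}-\Omega^{(k)})\|_2\le 1/2$ holds.

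The key step is to control $\|\Sig_r^{(j)}-\Sig_r^{(k)}\|_F$ for $\bbeta^{(j)}\sim\bbeta^{(k)}$. By (\ref{eq-diff}), $\Sig_r^{(k)}-\Sig_r^{(j)}=(\bpsi^{(k)}-\bpsi^{(j)})\btheta_f^{\top}+\btheta_f(\bpsi^{(k)}-\bpsi^{(j)})^{\top}$, and since the two hypotheses differ in a single coordinate, $\bpsi^{(k)}-\bpsi^{(j)}=\pm\frac{c_r}{\sqrt{\tn_r}\delta}\bm{e}_l$. Using $\|\btheta_f\|_2=\omega_r\min\{\delta,c_0\omega_f^{-1}\}\le\delta$, this rank-two matrix satisfies
\[
\|\Sig_r^{(k)}-\Sig_r^{(j)}\|_F\le 2\|\bpsi^{(k)}-\bpsi^{(j)}\|_2\|\btheta_f\|_2\le \frac{2c_r}{\sqrt{\tn_r}} .
\]

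Next I need $\|(\Omega^{(k)})^{-1}\|_2\le C$. I will obtain this from the Schur complement: the complement of the $(1,1)$ entry is $\Sig_r^{(k)}-\frac{\omega_f^2}{5\omega_r^2}\btheta_f\btheta_f^{\top}$, and its smallest eigenvalue is at least $1/5$ by the computation already carried out in the proof of Lemma \ref{lem-T1}. The $(1,1)$ entry equals $5$. Together with bounded $\Lambda_{\max}(\Omega^{(k)})$, this gives $\Lambda_{\min}(\Omega^{(k)})\ge c>0$. Then
\[
\|(\Omega^{(k)})^{-1}(\Omega^{(j)}-\Omega^{(k)})\|_F\le C\|\Sig_r^{(j)}-\Sig_r^{(k)}\|_F\le \frac{2Cc_r}{\sqrt{\tn_r}} ,
\]
so the operator-norm condition of Lemma \ref{lem-KL} holds for $c_r$ small. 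Plugging into the KL bound gives $\textup{TV}^2\le \tn_r\cdot C c_r^2/\tn_r=Cc_r^2$, that is $\textup{TV}\le Cc_r$, uniformly over neighbouring pairs.

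The main obstacle is verifying the uniform lower eigenvalue bound of $\Omega^{(k)}$. This in turn depends on $\|\btheta_r^{(j)}\|_2\le\min\{\omega_f\delta,1\}$ and $\Lambda_{\min}(\Sig_r^{(j)})\ge 1/2$ from (\ref{facts}) and the remark following it. Both are already established, so the remaining argument reduces to the routine Gaussian KL calculation above.
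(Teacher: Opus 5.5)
Your proposal is correct and follows essentially the same route as the paper: Pinsker plus the Gaussian KL/Frobenius bound over the $\tn_r$ i.i.d.\ rows, the observation that $\Omega^{(j)}-\Omega^{(k)}$ is supported only on the $\Sig_r$ block, the rank-two bound $\|\Sig_r^{(j)}-\Sig_r^{(k)}\|_F\lesssim c_r/\sqrt{\tn_r}$, and a uniform lower bound on $\Lambda_{\min}(\Omega^{(k)})$. The only difference is how you get that eigenvalue bound: you use the Schur complement of the $(1,1)$ entry together with $\|\Sig_r^{(k)}\btheta_r^{(k)}\|_2\le 1$, whereas the paper compares $\Omega^{(k)}$ in Loewner order with the matrix having $5$ and $I_p$ on the diagonal. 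Your version is slightly more careful, since it needs only $\Lambda_{\min}(\Sig_r^{(k)})\ge 1-c_r\sqrt{p/\tn_r}$ rather than $\Sig_r^{(k)}\succeq I_p$, and you also check explicitly the operator-norm condition of Lemma~\ref{lem-KL}, which the paper leaves implicit.
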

\begin{proof}[Proof of Lemma \ref{lem-mini1}]
Note that 
\[
   ((\bx^{(r)}_i)^{\top},y^{(r)}_i)\sim_{\bbeta^{(j)}} N(0,\Omega^{(j)})~\text{where}~\Omega^{(j)}=\begin{pmatrix}
   (\btheta_r^{(j)})^{\top}\Sig_r^{(j)}\btheta_r^{(j)}+(\sig_r^{(j)})^2 & (\Sig_r^{(j)}\btheta_r^{(j)})^{\top}\\
   \Sig_r^{(j)}\btheta_r^{(j)} & \Sig_r^{(j)}\end{pmatrix}.
\]
Invoking (\ref{tv2}),   given that $\|(\Omega^{(k)})^{-1}(\Omega^{(j)}-\Omega^{(k)})\|_2\leq 1/2$,
 \begin{align}
 \label{eq-tv0}
&\textup{TV}^2(p_{\bbeta^{(j)}}(\widetilde{\calD}_r),p_{\bbeta^{(k)}}(\widetilde{\calD}_r)) \leq\textup{KL}(p_{\bbeta^{(j)}}(\widetilde{\calD}_r),p_{\bbeta^{(k)}}(\widetilde{\calD}_r))\leq \frac{\tilde{n}_r}{2}\|(\Omega^{(k)})^{-1}(\Omega^{(j)}-\Omega^{(k)})\|_F^2.
 \end{align}
By (\ref{theta-def}), we first calculate that
\begin{align*}
&\Omega^{(k)}-\Omega^{(j)}=\begin{pmatrix}
0& (\Sig_{r}^{(k)}\btheta_{r}^{(k)}-\Sig_{r}^{(j)}\btheta_{r}^{(j)})^{\top}\\
\Sig_{r}^{(k)}\btheta_{r}^{(k)}-\Sig_{r}^{(j)}\btheta_{r}^{(j)} & \Sig_{r}^{(k)}-\Sig_{r}^{(j)}
\end{pmatrix} .
\end{align*}
By (\ref{facts}), 
\begin{align*}
\Omega^{(k)}-\Omega^{(j)}=\begin{pmatrix}
0& 0\\
0 & \Sig_{r}^{(k)}-\Sig_{r}^{(j)}
\end{pmatrix} .
\end{align*}
We know that
\begin{align}
\|(\Omega^{(k)})^{-1}(\Omega^{(k)}-\Omega^{(j)})\|_F^2&\leq \Lambda^{-2}_{\min}(\Omega^{(k)})\|\Omega^{(k)}-\Omega^{(j)}\|_F^2\leq\Lambda^{-2}_{\min}(\Omega^{(k)})\|\Sig_{r}^{(k)}-\Sig_{r}^{(j)}\|_F^2.\label{eq-tv1}
\end{align}
We know that $\bm\phi^{(k)}$ and $\bm\phi^{(j)}$ only differ at one coordinate. Suppose they differ at the $l$-th coordinate.  
\begin{align*}
\|\Sig_{r}^{(k)}-\Sig_{r}^{(j)}\|_F^2&\leq \|(\bm\psi^{(j)}-\bm\psi^{(k)})\btheta_f^{\top}+\btheta_f(\bm\psi^{(j)}-\bm\psi^{(k)})^{\top}\|_F^2\\
&\leq 2\|\btheta_f\|_2^2\|\bm\psi^{(j)}-\bm\psi^{(k)}\|_2^2+2(\btheta_f^{\top}(\bm\psi^{(j)}-\bm\psi^{(k)}))^2\leq \frac{4c_r^2\min\{\delta^2,\omega_f^{-2}\}}{\tn_r\delta^2}\leq \frac{Cc_r^2}{\tn_r}.
\end{align*} 
On the other hand,
\[
   \Omega^{(k)}\succeq \begin{pmatrix} 5 & \frac{\omega_f}{\omega_r}\btheta_f\\
          \frac{\omega_f}{\omega_r}\btheta_f& I_p\end{pmatrix}.
\]
To show that $\Lambda_{\min}(\Omega^{(k)})\geq c_1$ for some positive constant $c_1$, it suffices to show that
\[
   \frac{\omega_f^2}{\omega_r^2}\|\btheta_f\|^2_2\leq c_2<1.
\]
By the definition of $\btheta_f$ in (\ref{theta-def}), we have
\[
\frac{\omega_f^2}{\omega_r^2}\|\btheta_f\|^2_2\leq \min\{\omega_f^2\delta^2,c_0^2\}<1.
\]

To summarize, by (\ref{eq-tv0}) and (\ref{eq-tv1}), we have
\begin{align*}
 \max_{\bbeta^{(j)},\bbeta^{(k)}\in B_p(\delta):\bbeta^{(j)}\sim \bbeta^{(k)}}\textup{TV}^2(p_{\bbeta^{(j)}}(\widetilde{\calD}_r),p_{\bbeta^{(k)}}(\widetilde{\calD}_r)) &\leq Cc_r^2.
\end{align*}

\end{proof}

\begin{lemma}
\label{lem-mini2}
Assume the conditions of Theorem \ref{thm-minimax2}. Under the distribution configuration in (\ref{theta-def}), by taking $C$ to be a large enough constant in $\mathcal{E}_j$ defined in (\ref{event-Ej}), it holds that
\[
   \min_{1\leq j\leq M}\P_{\bbeta^{(j)}}(\mathcal{E}_j)\geq 1-\exp\{-c_1\log\tn_r\}.
\]
for some positive constant $c_1$.
\end{lemma}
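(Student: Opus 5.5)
We need to bound the probability of the event $\mathcal{E}_j$ in (\ref{event-Ej}) under each hypothesis $\bbeta^{(j)}$. Since the three requirements are separate, the plan is a union bound over three pieces. Uniformity in $j$ then comes for free: all constants will depend only on the eigenvalue bounds of $\Sig_r^{(j)}$ and $\Sig_f$, which were verified uniformly in $j$ when checking $B_p(\delta)\subseteq\Theta_p(\delta)$.

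\textbf{Step 1: the pooled covariance $\widehat{\Sig}_p$.} Write
\[
\widehat{\Sig}_p-\Sig_p^{(j)}=\omega_r(\widehat{\Sig}_r-\Sig_r^{(j)})+\omega_f(\widehat{\Sig}_f-I_p).
\]
Under $\bbeta^{(j)}$ the rows of $X_r$ are i.i.d.\ $N(0,\Sig_r^{(j)})$, and the rows of $X_f$ are i.i.d.\ $N(0,I_p)$. In the setting of Theorem \ref{thm-minimax2} we have $c_1^{-1}\le\Lambda_{\min}(\Sig_r^{(j)})\le\Lambda_{\max}(\Sig_r^{(j)})\le c_1$. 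We apply the Gaussian sample-covariance bound used in (\ref{concen1}) (Exercise 4.7.3 of \citet{vershynin2018high}) with deviation parameter $t=p+\log\tn_r$. This gives, with probability at least $1-2e^{-(p+\log \tn_r)}$,
\[
\|\widehat{\Sig}_r-\Sig_r^{(j)}\|_2\le C\Big(\sqrt{\tfrac{p+\log\tn_r}{N_r}}+\tfrac{p+\log\tn_r}{N_r}\Big).
\]
The analogous bound holds for $\widehat{\Sig}_f$ with $N_f$ in place of $N_r$. The factors $\omega_r$ and $\omega_f$ convert $1/N_r$ and $1/N_f$ into $1/N$: indeed $\omega_r\sqrt{1/N_r}\le\sqrt{1/N}$ and $\omega_f\sqrt{1/N_f}=\sqrt{N_f}/N\le\sqrt{1/N}$. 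The linear terms are dominated by the square-root terms because $p\le c_0\sqrt N$ and $\log\tn_r\le\log N$. This yields $\|\widehat{\Sig}_p-\Sig_p^{(j)}\|_2\le C\sqrt{(p+\log\tn_r)/N}$.

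\textbf{Step 2: the eigenvalues of $\check{\Sig}_r$.} The matrix $\check{\Sig}_r$ is the sample covariance of the $\check N_r=N_r-\tn_r\ge cN$ held-out remaining samples. The same concentration bound gives
\[
\|\check{\Sig}_r-\Sig_r^{(j)}\|_2\le C\sqrt{(p+\log\tn_r)/\check N_r}=o(1).
\]
Hence $\Lambda_{\max}(\check{\Sig}_r)\le c_1$ and $\Lambda_{\min}(\check{\Sig}_r)\ge c_2$ for suitable constants, with probability at least $1-2e^{-(p+\log\tn_r)}$.

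\textbf{Step 3: union bound.} Summing the three failure probabilities gives $6e^{-\log\tn_r}\le\exp\{-c_1\log\tn_r\}$ for large $\tn_r$.

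\textbf{Main obstacle.} The only delicate point is Step 1. The sample sizes $N_r$ and $N_f$ differ, so the rate $1/N$ must come out correctly after weighting by $\omega_r$ and $\omega_f$. This must also hold when $N_f$ is small, in which case the linear term $\omega_f(p+\log\tn_r)/N_f$ is the one to check. It equals $(p+\log\tn_r)/N$, which is still at most $\sqrt{(p+\log\tn_r)/N}$ since $p\le c_0\sqrt N$. That is why $C$ must be taken large.
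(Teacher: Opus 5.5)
Your proposal is correct and follows the same route as the paper. The paper applies Gaussian sample-covariance concentration at deviation level $\log\tn_r$ to $\widehat{\Sig}_p$ and $\check{\Sig}_r$, uniformly in $j$ since the eigenvalues of $\Sig_r^{(j)}$ are bounded uniformly, and then takes a union bound. You supply the details the paper leaves as ``proved similarly'': the $\omega_r,\omega_f$ reweighting that yields the $1/N$ rate, and the use of $\check N_r=N_r-\tn_r\ge cN$. As in the paper, that last inequality rests on the Part 2 restriction $\tn_r\le c_1N$ together with $\omega_r\ge 1/2$.
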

\begin{proof}[Proof of Lemma \ref{lem-mini2}]
As $\bx^{(r)}_i\sim_{j} N(0,\Sig^{(j)})$ and $\bx^{(f)}_i\sim_j N(0,I_p)$, we know that for some large enough $c_0$
\[
  \P_{\bbeta^{(j)}}\left(\|\widehat{\Sig}_{p}^{(j)}-\Sig_{p}^{(j)}\|_2\geq c_0\sqrt{\frac{p+\log \tn_r}{N}}\right)\leq \exp\{-c_1\log \tn_r\}
\]
for some large enough constant $c_0$. Hence,
\[
     \max_{1\leq j\leq M}\P_{\bbeta^{(j)}}\left(\|\widehat{\Sig}_{p}^{(j)}-\Sig_{p}^{(j)}\|_2\geq c_0\sqrt{\frac{p+\log \tn_r}{N}}\right)\leq \exp\{-c_1\log \tn_r\}
\]
The rest of the statements can be proved similarly.
\end{proof}

\section{Additional numerical experiments}
\subsection{Estimation performance}
\label{sec:est_supp}
To complement the numerical analysis in Section~\ref{sec-simu}, this section provides supplementary experiments comparing the proposed unlearning estimator $\hat{\btheta}_r^{(\uls)}$ with its gradient descent variant $\hat{\btheta}_{r,T}^{(\uls)}$ and its robustified counterpart $\hat{\btheta}_r^{(\uls+)}$.

These experiments follow the same setup and configuration as described in the main text. Specifically, for $\hat{\btheta}_{r,T}^{(\uls)}$, we set the number of iterations at $T=500$ iterations with a constant step size $\alpha=0.05$. For $\hat{\btheta}_r^{(\uls+)}$, we employ cross-validation to adaptively tune $\lambda$ by searching over the range $\lambda \in \left[10^{-4}, 10^4\right]$. The estimation errors are summarized as boxplots in Figure~\ref{fig:first_supp} and Figure~\ref{fig:second_supp}. 

\begin{figure}[H]
\centering
\includegraphics[width=0.99\textwidth]{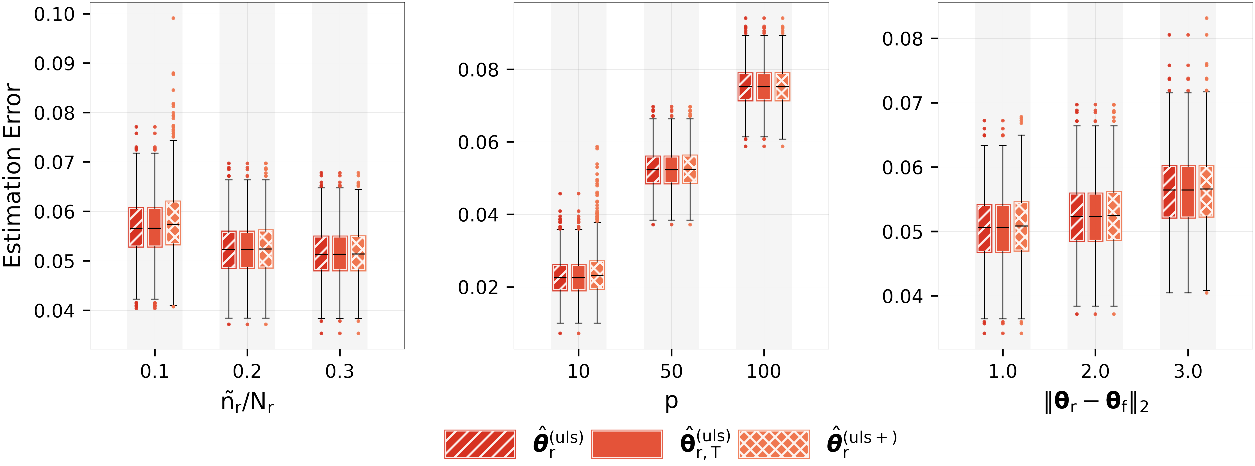}
\caption{Boxplots of the estimation errors for three unlearning estimators: $\hat{\btheta}_r^{(\uls)}$ (yellow with horizontal-line hatch), $\hat{\btheta}_{r,T}^{(\uls)}$ (orange with vertical-line hatch) and $\hat{\btheta}_r^{(\uls+)}$ (dark orange with plus-sign hatch). The three panels correspond to experimental settings (a), (b), and (c), with fixed $N_r =20000$ and $N_f =1000$. Each setting is replicated with 1000 Monte Carlo trials. \label{fig:first_supp}}
\end{figure}

\begin{figure}[H]
\centering
\includegraphics[width=0.99\textwidth]{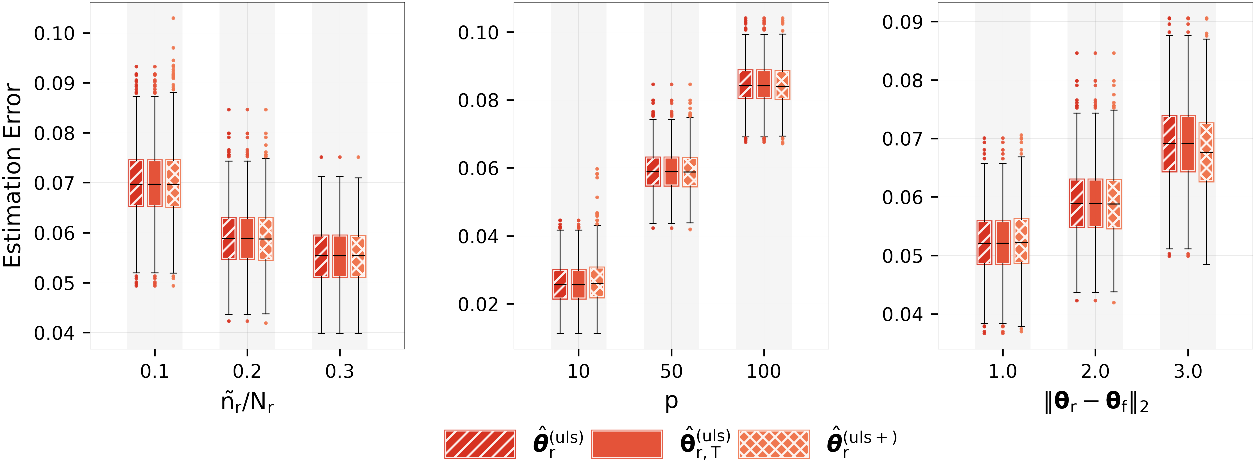}
\caption{Boxplots of the estimation errors for three unlearning estimators: $\hat{\btheta}_r^{(\uls)}$ (red with forward-slash hatch), $\hat{\btheta}_{r,T}^{(\uls)}$ (dark orange solid) and $\hat{\btheta}_r^{(\uls+)}$ (orange with cross-hatch). The three panels correspond to experimental settings (a), (b), and (c), with fixed $N_r =20000$ and $N_f =2000$. Each setting is replicated with 1000 Monte Carlo trials. \label{fig:second_supp}}
\end{figure}

From Figure~\ref{fig:first_supp} and Figure~\ref{fig:second_supp}, we see that $\hat{\btheta}_r^{(uls)}$, $\hat{\btheta}_{r,T}^{(\uls)}$ and $\hat{\btheta}_r^{(\uls+)}$ exhibit consistent trends across varing subsample size ratio $\tilde{n}_r / N_r$, dimension $p$, and parameter discrepancy $\delta$. Regarding the gradient descent variation $\hat{\btheta}_{r,T}^{(\uls)}$, we observe that for sufficiently large $T$, the estimation error of $\hat{\btheta}_{r,T}^{(\uls)}$ becomes virtually indistinguishable from that of $\hat{\btheta}_r^{(\uls)}$. For the robustified version $\hat{\btheta}_r^{(\uls+)}$, the empirical results in Figure~\ref{fig:second_supp} illustrates that $\hat{\btheta}_r^{(\uls+)}$ demonstrates superior performance in scenarios where $\omega_f\delta$ is large. This observation aligns with the convergence rate established in Theorem~\ref{thm-combine}, specifically the term $\min\{\omega_f\delta,1\}$, which characterizes the robustification effect of the retain loss. 

\subsection{Inference results for $N_f=2000$}
\label{sec:inf_supp}
To further investigate the robustness of our inference procedure, we report additional results for a larger $N_f=2000$. Table~\ref{tab:inf_res_supp} summarize the average coverage probabilities and average standard deviations under this setting. These results remain consistent with inference performance observed in Table~\ref{tab:inf_res}.
\begin{table}[!htbp]
\centering
\resizebox{\textwidth}{!}{
\setlength{\tabcolsep}{8pt} 
\begin{tabular}{|c|ccc|ccc|ccc|}
\hline
      & \multicolumn{3}{c|}{$\tn_r/N_r$} 
      & \multicolumn{3}{c|}{$p$} 
      & \multicolumn{3}{c|}{$\delta$} \\

      & 0.1 & 0.2 & 0.3 & 10 & 50 & 100  & 1 & 2 & 3 \\
\hline
\multicolumn{10}{|c|}{\textbf{Average Coverage}} \\
\hline
ULS & 0.954 & 0.950 & 0.951 & 0.948 &  0.950 & 0.942 & 0.957 & 0.950 & 0.943 \\
\hline
OLS & 0.963 & 0.947 & 0.956 & 0.951 & 0.947  & 0.946  & 0.939 &  0.947 & 0.944 \\
\hline
\multicolumn{10}{|c|}{\textbf{Average SD} $(\times 10^{-2})$} \\
\hline
ULS & $0.99 $ & $0.84$ & $0.78$ & $0.83$ & $0.84$ & $0.85$  & $0.75$ & $0.84 $ & $0.97$ \\
\hline
OLS & $2.26$ & $1.59$ & $1.30$ & $1.58$ & $1.59$ & $1.60 $ & $1.59 $ & $1.59$ & $1.59$ \\
\hline
\end{tabular}
}
\caption{Inference results for setting (a), (b) and (c) with $N_r=20000$ and $N_f=2000$. Each setting is replicated with 1000 Monte Carlo trials. 
\label{tab:inf_res_supp}}
\end{table}

\section{Additional details on real data applications}
\subsection{Pre-processing steps for the UK Biobank data}
\label{supp-data}
For predictor construction, we adopt a structured feature engineering pipeline utilizing six representative administrative and clinical covariates, all of which are categorical variables collectively defined by $114$ unique categories.  To handle the varying cardinality of these predictors, we implement a hybrid encoding scheme. The responsible clinician specialty feature, which contains $85$ unique categories, is encoded using target encoding to mitigate the risk of feature explosion. All other features, having fewer than 20 unique categories, are handled using one-hot encoding. 
After removing all-zero columns, feature selection is then conducted using Lasso method~\citep{tibshirani1996regression} with cross-validation on $\calD$, yielding an average final representation of $p=16$ predictors.

\subsection{Additional results for $\tilde{n}_r/N_r \in \{0.2, 0.3\}$}
We report the predictive performance across the Yelp review and the UK Biobank dataset with $\tilde{n}_r/N_r$ set to $0.2$ and $0.3$. As shown, our method is best regardless of the dataset or the value of $\tilde{n}_r/N_r$. These results suggest the robustness of our proposed method. 
\begin{figure}[H]
\centering
\includegraphics[width=0.6\textwidth]{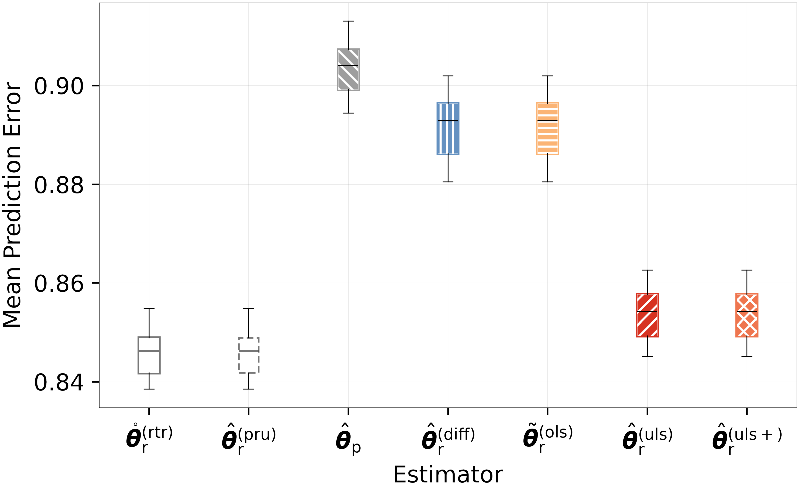}
\caption{Boxplots of mean prediction errors for the Yelp review rating prediction unlearning task. $\hat{\btheta}_{r}^{(\textup{diff})}$, $\tilde{\btheta}_{r}^{(\ols)}$, and $\hat{\btheta}_{r}^{(\uls)}$ are evaluated under the retained subsample proportion $\tilde{n}_r/N_r=0.2$. Each boxplot is based on 20 random splits of training and test samples.}
\label{fig:yelp-1-supp}
\end{figure}

\begin{figure}[H]
\centering
\includegraphics[width=0.6\textwidth]{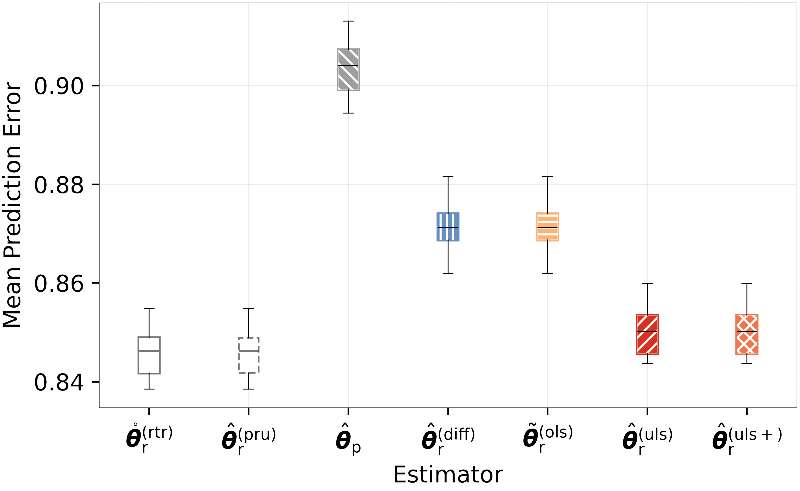}
\caption{Boxplots of mean prediction errors for the Yelp review rating prediction unlearning task. $\hat{\btheta}_{r}^{(\textup{diff})}$, $\tilde{\btheta}_{r}^{(\ols)}$, and $\hat{\btheta}_{r}^{(\uls)}$ are evaluated under the retained subsample proportion $\tilde{n}_r/N_r=0.3$. Each boxplot is based on 20 random splits of training and test samples.}
\label{fig:yelp-2-supp}
\end{figure}

\begin{figure}[H]
\centering
\includegraphics[width=0.6\textwidth]{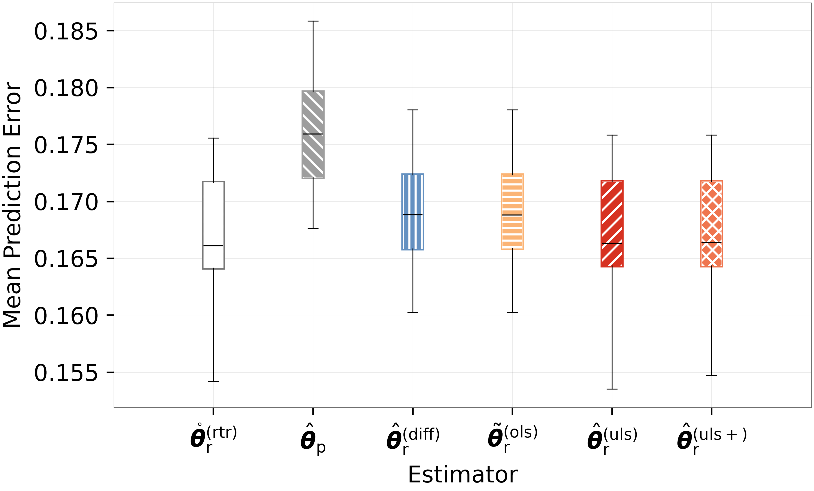}
\caption{Boxplots of mean prediction errors for the UK Biobank hospital episode unlearning task. $\hat{\btheta}_{r}^{(\textup{diff})}$, $\tilde{\btheta}_{r}^{(\ols)}$, and $\hat{\btheta}_{r}^{(\uls)}$ are evaluated under the retained subsample proportion $\tilde{n}_r/N_r=0.2$.  Each boxplot is based on 20 random splits of training and test samples.
\label{fig:ukb-uls-supp1}}
\end{figure}

\begin{figure}[H]
\centering
\includegraphics[width=0.6\textwidth]{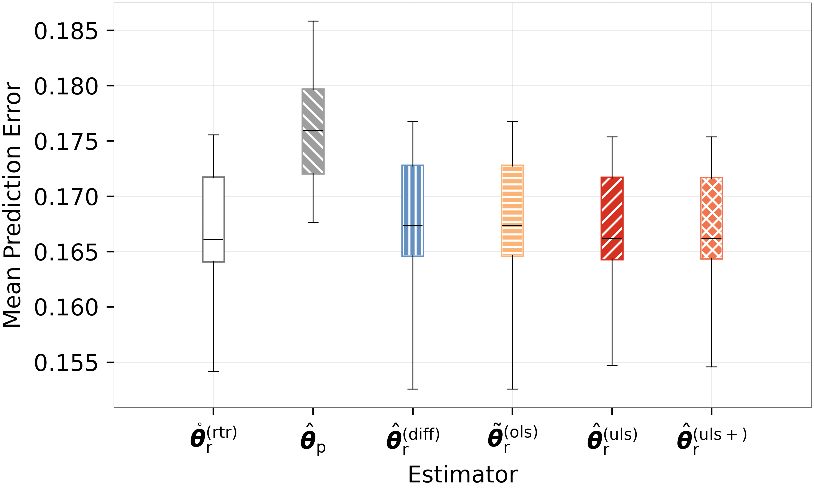}
\caption{Boxplots of mean prediction errors for the UK Biobank hospital episode unlearning task. $\hat{\btheta}_{r}^{(\textup{diff})}$, $\tilde{\btheta}_{r}^{(\ols)}$, and $\hat{\btheta}_{r}^{(\uls)}$ are evaluated under the retained subsample proportion $\tilde{n}_r/N_r=0.3$.  Each boxplot is based on 20 random splits of training and test samples.
\label{fig:ukb-uls-supp2}}
\end{figure}

\bibliographystyle{plainnat}
\bibliography{mu.bib}

\begin{thebibliography}{40}
\providecommand{\natexlab}[1]{#1}
\providecommand{\url}[1]{\texttt{#1}}
\expandafter\ifx\csname urlstyle\endcsname\relax
  \providecommand{\doi}[1]{doi: #1}\else
  \providecommand{\doi}{doi: \begingroup \urlstyle{rm}\Url}\fi

\bibitem[Anjarlekar and Pombra(2025)]{anjarlekar2025llm}
A.~Anjarlekar and S.~Pombra.
\newblock Llm unlearning using gradient ratio-based influence estimation and
  noise injection.
\newblock \emph{arXiv preprint arXiv:2508.06467}, 2025.

\bibitem[Balakrishnan et~al.(2017)Balakrishnan, Wainwright, and
  Yu]{balakrishnan2017statistical}
S.~Balakrishnan, M.~J. Wainwright, and B.~Yu.
\newblock Statistical guarantees for the em algorithm: From population to
  sample-based analysis.
\newblock \emph{The Annals of Statistics}, 45\penalty0 (1):\penalty0 77--120,
  2017.

\bibitem[Brophy and Lowd(2021)]{brophy2021machine}
J.~Brophy and D.~Lowd.
\newblock Machine unlearning for random forests.
\newblock In \emph{International Conference on Machine Learning}, pages
  1092--1104. PMLR, 2021.

\bibitem[Cai and Wei(2021)]{cai2021transfer}
T.~T. Cai and H.~Wei.
\newblock Transfer learning for nonparametric classification.
\newblock \emph{The Annals of Statistics}, 49\penalty0 (1):\penalty0 100--128,
  2021.

\bibitem[Cao and Yang(2015)]{cao2015towards}
Y.~Cao and J.~Yang.
\newblock Towards making systems forget with machine unlearning.
\newblock In \emph{2015 IEEE symposium on security and privacy}, pages
  463--480. IEEE, 2015.

\bibitem[Cook and Weisberg(1982)]{cook1982residuals}
R.~D. Cook and S.~Weisberg.
\newblock Residuals and influence in regression.
\newblock 1982.

\bibitem[Dekking(2005)]{dekking2005modern}
F.~M. Dekking.
\newblock \emph{A Modern Introduction to Probability and Statistics:
  Understanding why and how}.
\newblock Springer Science \& Business Media, 2005.

\bibitem[Fan et~al.(2024)Fan, Liu, Lin, Jia, Zhang, Mei, and
  Liu]{fan2024simplicity}
C.~Fan, J.~Liu, L.~Lin, J.~Jia, R.~Zhang, S.~Mei, and S.~Liu.
\newblock Simplicity prevails: Rethinking negative preference optimization for
  llm unlearning.
\newblock \emph{arXiv preprint arXiv:2410.07163}, 2024.

\bibitem[Ginart et~al.(2019)Ginart, Guan, Valiant, and Zou]{ginart2019making}
A.~Ginart, M.~Guan, G.~Valiant, and J.~Y. Zou.
\newblock Making ai forget you: Data deletion in machine learning.
\newblock \emph{Advances in neural information processing systems}, 32, 2019.

\bibitem[Graves et~al.(2021)Graves, Nagisetty, and Ganesh]{graves2021amnesiac}
L.~Graves, V.~Nagisetty, and V.~Ganesh.
\newblock Amnesiac machine learning.
\newblock In \emph{Proceedings of the AAAI Conference on Artificial
  Intelligence}, volume~35, pages 11516--11524, 2021.

\bibitem[Guo et~al.(2020)Guo, Goldstein, Hannun, and Van
  Der~Maaten]{guo2020certified}
C.~Guo, T.~Goldstein, A.~Hannun, and L.~Van Der~Maaten.
\newblock Certified data removal from machine learning models.
\newblock In \emph{International Conference on Machine Learning}, pages
  3832--3842. PMLR, 2020.

\bibitem[He et~al.(2025)He, Li, Cheng, Huang, and Huang]{he2025towards}
Z.~He, T.~Li, X.~Cheng, Z.~Huang, and X.~Huang.
\newblock Towards natural machine unlearning.
\newblock \emph{IEEE Transactions on Pattern Analysis and Machine
  Intelligence}, 2025.

\bibitem[Izzo et~al.(2021)Izzo, Smart, Chaudhuri, and Zou]{izzo2021approximate}
Z.~Izzo, M.~A. Smart, K.~Chaudhuri, and J.~Zou.
\newblock Approximate data deletion from machine learning models.
\newblock In \emph{International conference on artificial intelligence and
  statistics}, pages 2008--2016. PMLR, 2021.

\bibitem[Jin et~al.(2023)Jin, Chen, Zhang, and Li]{jin2023forgettable}
R.~Jin, M.~Chen, Q.~Zhang, and X.~Li.
\newblock Forgettable federated linear learning with certified data unlearning.
\newblock \emph{arXiv preprint arXiv:2306.02216}, 2023.

\bibitem[Kuchibhotla and Chakrabortty(2022)]{kuchibhotla2022moving}
A.~K. Kuchibhotla and A.~Chakrabortty.
\newblock Moving beyond sub-gaussianity in high-dimensional statistics:
  Applications in covariance estimation and linear regression.
\newblock \emph{Information and Inference: A Journal of the IMA}, 11\penalty0
  (4):\penalty0 1389--1456, 2022.

\bibitem[Li et~al.(2024{\natexlab{a}})Li, Pan, Gopal, Yue, Berrios, Gatti, Li,
  Dombrowski, Goel, Phan, et~al.]{li2024wmdp}
N.~Li, A.~Pan, A.~Gopal, S.~Yue, D.~Berrios, A.~Gatti, J.~D. Li, A.-K.
  Dombrowski, S.~Goel, L.~Phan, et~al.
\newblock The wmdp benchmark: Measuring and reducing malicious use with
  unlearning.
\newblock \emph{arXiv preprint arXiv:2403.03218}, 2024{\natexlab{a}}.

\bibitem[Li et~al.(2022)Li, Cai, and Li]{li2022transfer}
S.~Li, T.~T. Cai, and H.~Li.
\newblock Transfer learning for high-dimensional linear regression: Prediction,
  estimation and minimax optimality.
\newblock \emph{Journal of the Royal Statistical Society Series B: Statistical
  Methodology}, 84\penalty0 (1):\penalty0 149--173, 2022.

\bibitem[Li et~al.(2024{\natexlab{b}})Li, Zhang, Cai, and Li]{li2024estimation}
S.~Li, L.~Zhang, T.~T. Cai, and H.~Li.
\newblock Estimation and inference for high-dimensional generalized linear
  models with knowledge transfer.
\newblock \emph{Journal of the American Statistical Association}, 119\penalty0
  (546):\penalty0 1274--1285, 2024{\natexlab{b}}.

\bibitem[Lin et~al.(2023)Lin, Chung, Lao, and Zhao]{lin2023machine}
H.~Lin, J.~W. Chung, Y.~Lao, and W.~Zhao.
\newblock Machine unlearning in gradient boosting decision trees.
\newblock In \emph{Proceedings of the 29th ACM SIGKDD Conference on Knowledge
  Discovery and Data Mining}, pages 1374--1383, 2023.

\bibitem[Liu et~al.(2022)Liu, Liu, and Stone]{liu2022continual}
B.~Liu, Q.~Liu, and P.~Stone.
\newblock Continual learning and private unlearning.
\newblock In \emph{Conference on Lifelong Learning Agents}, pages 243--254.
  PMLR, 2022.

\bibitem[Liu et~al.(2025{\natexlab{a}})Liu, Yao, Jia, Casper, Baracaldo, Hase,
  Yao, Liu, Xu, Li, et~al.]{liu2025rethinking}
S.~Liu, Y.~Yao, J.~Jia, S.~Casper, N.~Baracaldo, P.~Hase, Y.~Yao, C.~Y. Liu,
  X.~Xu, H.~Li, et~al.
\newblock Rethinking machine unlearning for large language models.
\newblock \emph{Nature Machine Intelligence}, pages 1--14, 2025{\natexlab{a}}.

\bibitem[Liu et~al.(2025{\natexlab{b}})Liu, Chen, Huang, Ni, and
  Imani]{liu2025recovertoforget}
Y.~Liu, H.~Chen, W.~Huang, Y.~Ni, and M.~Imani.
\newblock Recover-to-forget: Gradient reconstruction from lo{RA} for efficient
  {LLM} unlearning.
\newblock In \emph{Socially Responsible and Trustworthy Foundation Models at
  NeurIPS 2025}, 2025{\natexlab{b}}.
\newblock URL \url{https://openreview.net/forum?id=n7peBaPUmk}.

\bibitem[Ma et~al.(2024)Ma, Verchand, and Samworth]{ma2024high}
T.~Ma, K.~A. Verchand, and R.~J. Samworth.
\newblock High-probability minimax lower bounds.
\newblock \emph{arXiv preprint arXiv:2406.13447}, 2024.

\bibitem[Maini et~al.(2024)Maini, Feng, Schwarzschild, Lipton, and
  Kolter]{maini2024tofu}
P.~Maini, Z.~Feng, A.~Schwarzschild, Z.~C. Lipton, and J.~Z. Kolter.
\newblock Tofu: A task of fictitious unlearning for llms.
\newblock \emph{arXiv preprint arXiv:2401.06121}, 2024.

\bibitem[Neel et~al.(2021)Neel, Roth, and Sharifi-Malvajerdi]{neel2021descent}
S.~Neel, A.~Roth, and S.~Sharifi-Malvajerdi.
\newblock Descent-to-delete: Gradient-based methods for machine unlearning.
\newblock In \emph{Algorithmic Learning Theory}, pages 931--962. PMLR, 2021.

\bibitem[Nesterov et~al.(2018)]{nesterov2018lectures}
Y.~Nesterov et~al.
\newblock \emph{Lectures on convex optimization}, volume 137.
\newblock Springer, 2018.

\bibitem[Nguyen et~al.(2024)Nguyen, Vu, Nguyen, Nguyen, Doan, and
  Wong]{nguyen2024empirical}
T.-H. Nguyen, H.-P. Vu, D.~T. Nguyen, T.~M. Nguyen, K.~D. Doan, and K.-S. Wong.
\newblock Empirical study of federated unlearning: Efficiency and
  effectiveness.
\newblock In \emph{Asian Conference on Machine Learning}, pages 959--974. PMLR,
  2024.

\bibitem[Reeve et~al.(2021)Reeve, Cannings, and Samworth]{reeve2021adaptive}
H.~W. Reeve, T.~I. Cannings, and R.~J. Samworth.
\newblock Adaptive transfer learning.
\newblock \emph{The Annals of Statistics}, 49\penalty0 (6):\penalty0
  3618--3649, 2021.

\bibitem[Sekhari et~al.(2021)Sekhari, Acharya, Kamath, and
  Suresh]{sekhari2021remember}
A.~Sekhari, J.~Acharya, G.~Kamath, and A.~T. Suresh.
\newblock Remember what you want to forget: Algorithms for machine unlearning.
\newblock \emph{Advances in Neural Information Processing Systems},
  34:\penalty0 18075--18086, 2021.

\bibitem[Shaik et~al.(2024)Shaik, Tao, Xie, Li, Zhu, and
  Li]{shaik2024exploring}
T.~Shaik, X.~Tao, H.~Xie, L.~Li, X.~Zhu, and Q.~Li.
\newblock Exploring the landscape of machine unlearning: A comprehensive survey
  and taxonomy.
\newblock \emph{IEEE Transactions on Neural Networks and Learning Systems},
  2024.

\bibitem[Sudlow et~al.(2015)Sudlow, Gallacher, Allen, Beral, Burton, Danesh,
  Downey, Elliott, Green, Landray, et~al.]{sudlow2015uk}
C.~Sudlow, J.~Gallacher, N.~Allen, V.~Beral, P.~Burton, J.~Danesh, P.~Downey,
  P.~Elliott, J.~Green, M.~Landray, et~al.
\newblock Uk biobank: an open access resource for identifying the causes of a
  wide range of complex diseases of middle and old age.
\newblock \emph{PLoS medicine}, 12\penalty0 (3):\penalty0 e1001779, 2015.

\bibitem[Tian and Feng(2023)]{tian2023transfer}
Y.~Tian and Y.~Feng.
\newblock Transfer learning under high-dimensional generalized linear models.
\newblock \emph{Journal of the American Statistical Association}, 118\penalty0
  (544):\penalty0 2684--2697, 2023.

\bibitem[Tibshirani(1996)]{tibshirani1996regression}
R.~Tibshirani.
\newblock Regression shrinkage and selection via the lasso.
\newblock \emph{Journal of the Royal Statistical Society Series B: Statistical
  Methodology}, 58\penalty0 (1):\penalty0 267--288, 1996.

\bibitem[Vershynin(2018)]{vershynin2018high}
R.~Vershynin.
\newblock \emph{High-dimensional probability: An introduction with applications
  in data science}, volume~47.
\newblock Cambridge university press, 2018.

\bibitem[Wang et~al.(2025)Wang, Zhou, Zhou, Shin, Han, and
  Weinberger]{wang2025rethinking}
Q.~Wang, J.~P. Zhou, Z.~Zhou, S.~Shin, B.~Han, and K.~Q. Weinberger.
\newblock Rethinking {LLM} unlearning objectives: A gradient perspective and go
  beyond.
\newblock In \emph{The Thirteenth International Conference on Learning
  Representations}, 2025.
\newblock URL \url{https://openreview.net/forum?id=huo8MqVH6t}.

\bibitem[Wu et~al.(2020)Wu, Dobriban, and Davidson]{wu2020deltagrad}
Y.~Wu, E.~Dobriban, and S.~Davidson.
\newblock Deltagrad: Rapid retraining of machine learning models.
\newblock In \emph{International Conference on Machine Learning}, pages
  10355--10366. PMLR, 2020.

\bibitem[Yang et~al.(2025)Yang, Wang, Huang, Liu, Zhang, and
  Han]{yang2025exploring}
P.~Yang, Q.~Wang, Z.~Huang, T.~Liu, C.~Zhang, and B.~Han.
\newblock Exploring criteria of loss reweighting to enhance {LLM} unlearning.
\newblock In \emph{Forty-second International Conference on Machine Learning},
  2025.
\newblock URL \url{https://openreview.net/forum?id=mGOugCZlAq}.

\bibitem[Yao et~al.(2024)Yao, Xu, and Liu]{yao2024large}
Y.~Yao, X.~Xu, and Y.~Liu.
\newblock Large language model unlearning.
\newblock \emph{Advances in Neural Information Processing Systems},
  37:\penalty0 105425--105475, 2024.

\bibitem[Zhang et~al.(2024)Zhang, Lin, Bai, and Mei]{zhang2024negative}
R.~Zhang, L.~Lin, Y.~Bai, and S.~Mei.
\newblock Negative preference optimization: From catastrophic collapse to
  effective unlearning.
\newblock \emph{CoRR}, 2024.

\bibitem[Zhang et~al.(2015)Zhang, Zhao, and LeCun]{zhang2015character}
X.~Zhang, J.~Zhao, and Y.~LeCun.
\newblock Character-level convolutional networks for text classification.
\newblock \emph{Advances in neural information processing systems}, 28, 2015.

\end{thebibliography}


\begin{thebibliography}{6}
\providecommand{\natexlab}[1]{#1}
\providecommand{\url}[1]{\texttt{#1}}
\expandafter\ifx\csname urlstyle\endcsname\relax
  \providecommand{\doi}[1]{doi: #1}\else
  \providecommand{\doi}{doi: \begingroup \urlstyle{rm}\Url}\fi

\bibitem[Balakrishnan et~al.(2017)Balakrishnan, Wainwright, and
  Yu]{balakrishnan2017statistical}
Sivaraman Balakrishnan, Martin~J Wainwright, and Bin Yu.
\newblock Statistical guarantees for the em algorithm: From population to
  sample-based analysis.
\newblock \emph{The Annals of Statistics}, 45\penalty0 (1):\penalty0 77--120,
  2017.

\bibitem[Nesterov et~al.(2018)]{nesterov2018lectures}
Yurii Nesterov et~al.
\newblock \emph{Lectures on convex optimization}, volume 137.
\newblock Springer, 2018.

\bibitem[Vershynin(2018)]{vershynin2018high}
Roman Vershynin.
\newblock \emph{High-dimensional probability: An introduction with applications
  in data science}, volume~47.
\newblock Cambridge university press, 2018.

\bibitem[Kuchibhotla and Chakrabortty(2022)]{kuchibhotla2022moving}
Arun~Kumar Kuchibhotla and Abhishek Chakrabortty.
\newblock Moving beyond sub-gaussianity in high-dimensional statistics:
  Applications in covariance estimation and linear regression.
\newblock \emph{Information and Inference: A Journal of the IMA}, 11\penalty0
  (4):\penalty0 1389--1456, 2022.

\bibitem[Ma et~al.(2024)Ma, Verchand, and Samworth]{ma2024high}
Tianyi Ma, Kabir~A Verchand, and Richard~J Samworth.
\newblock High-probability minimax lower bounds.
\newblock \emph{arXiv preprint arXiv:2406.13447}, 2024.

\bibitem[Tibshirani(1996)]{tibshirani1996regression}
Robert Tibshirani.
\newblock Regression shrinkage and selection via the lasso.
\newblock \emph{Journal of the Royal Statistical Society Series B: Statistical
  Methodology}, 58\penalty0 (1):\penalty0 267--288, 1996.

\end{thebibliography}

\end{document}